\newtheorem{theorem}{Theorem}
\newtheorem{definition}{Definition}
\newtheorem{lemma}{Lemma}
\newtheorem{proposition}{Proposition}
\def\BibTeX{{\rm B\kern-.05em{\sc i\kern-.025em b}\kern-.08em
    T\kern-.1667em\lower.7ex\hbox{E}\kern-.125emX}}
\begin{document}

\title{FedTilt: Towards Multi-Level Fairness-Preserving and Robust Federated Learning}

\author{\IEEEauthorblockN{Binghui Zhang}
\IEEEauthorblockA{Illinois Institue of Technology}
\IEEEauthorblockA{bzhang57@hawk.iit.edu}
\and \IEEEauthorblockN{Luis Mares De La Cruz}
\IEEEauthorblockA{Illinois Institue of Technology}
\IEEEauthorblockA{lmaresdelacruz@hawk.iit.edu}
\and \IEEEauthorblockN{Binghui Wang}
\IEEEauthorblockA{Illinois Institue of Technology}
\IEEEauthorblockA{bwang70@iit.edu}
}

\maketitle

\begin{abstract}

Federated Learning (FL) is an emerging decentralized learning paradigm that can 
partly address the privacy concern that cannot be handled by traditional centralized and distributed learning. 
Further, to make FL practical, it is also necessary
to consider constraints such as 
fairness and robustness. 
However, existing robust FL methods often produce unfair models, and existing fair FL methods only consider one-level (client) fairness and are not robust to persistent outliers (i.e., injected outliers into each training round) that are common in real-world FL settings. 
   We propose \texttt{FedTilt}, a novel FL that can preserve multi-level fairness and be robust to outliers.  
    In particular, we consider two common levels of fairness, i.e., \emph{client fairness}---uniformity of performance across clients, 
    and \emph{client data fairness}---uniformity of performance across different classes of data within a client. 
    \texttt{FedTilt} is inspired by the recently proposed tilted empirical risk minimization, 
    which introduces tilt hyperparameters that can be flexibly tuned.
    Theoretically, we show how tuning tilt values can achieve the two-level fairness and mitigate the persistent outliers, and derive the convergence condition of \texttt{FedTilt} as well. 
    Empirically, our evaluation results on a suite of realistic federated datasets  in diverse settings show the effectiveness and flexibility of the  \texttt{FedTilt} framework and  the superiority to the state-of-the-arts.
\end{abstract}

\begin{IEEEkeywords}
Federated Learning, Fairness, Robustness
\end{IEEEkeywords}

\section{Introduction}
\label{sec:intro}

Federated Learning (FL)~\cite{mcmahan2017communication}
is an emerging decentralized learning paradigm that 
enables a server and clients to perform joint learning  
without any data sharing, which partly addresses the privacy concern that could not be handled by traditional centralized and distributed learning. 
To make FL practical, it is necessary for the deployed system
to also consider the reasonable constraints such as \emph{fairness} and \emph{robustness}. 
The reasons are as follows: in order to incentivize more clients 
to participate FL, it would be better for all clients to obtain similar performance. 
Moreover, clients often have "outlier" data, e.g., data with large noises or corruptions. Using these data for training could yield negative affect the FL model performance, and hence FL should be robust to these outliers.    

Most of the existing works consider fairness or robustness in FL \emph{separately}, as satisfying
both constraints is challenging~\cite{kairouz2021advances,li2021ditto}. For instance, these methods~\cite{mohri2019agnostic,deng2020distributionally,li2020lotteryfl,li2020fair,ma2022layer} achieve the fairness goal, while robust methods majorly use robust aggregation~\cite{blanchard2017machine,chen2017distributed,guerraoui2018hidden,yin2018byzantine,guerraoui2018hidden,chen2018draco,pillutla2019robust,wu2020federated,xu2022fedcorr,karimireddy2021learning,farhadkhani2022byzantine,zhang2022fldetector,cao2023fedrecover,yang2024breaking,yang2024distributed}. 
To our best knowledge, Ditto~\cite{li2021ditto} is the only method that accounts for both fairness and robustness. However, Ditto has the below drawbacks: 
1) Ditto and all fair FL methods only consider one-level \emph{client fairness}, i.e., they require testing data across all clients  achieve close performance. 
We advocate that, besides the client fairness, 
the performance of data from different groups (or classes) within a client should be also similar (we term \emph{client data fairness}), which also aligns with the fairness definition (e.g., disparate treatment) in centralized learning~\cite{zafar2017fairness}.   
However, directly applying the existing methods cannot achieve promising multi-level fairness performance. 
2) We show that Ditto (see Table~\ref{competitiveness_pixels}) 
is not robust to \emph{persistent} outliers (e.g., large corrupted data), where the outliers are injected into participating clients' data in \emph{all} communication rounds, 
instead of \emph{only once} during FL training. 
We note that such a scenario is more practical, as different clients are often selected to participate in training in different communication rounds.
The goal of this paper is to achieve the multi-level fairness, as well as the robustness to persistent outliers 
in FL. To this end, we design a flexible FL method dubbed \texttt{FedTilt}. 
We also show {existing fair FL methods (e.g., FedAvg and Ditto)} are special cases of FedTilt. 
Further, we derive the convergence condition of FedTilt.
We finally evaluate FedTilt and compare it with the state-of-the-art fair FL methods on multiple datasets. Our results show FedTilt obtains a comparable/better clean testing accuracy, 
and achieves better two-level fairness and better robustness to persistent outliers.

\section{Background}
\label{sec:background}

\textbf{Federated learning (FL).} 
Suppose a total of $N$ clients $\{C_n\}_{n \in [N]}$ participate in FL, where each client $C_n$ owns data $z_n = (x_n, y_n)$ from a distribution $\mathcal{D}_n$, where $x_n$ is the feature vector and $y_n$ is the label.  Traditionally, FL considers a shared global (server) model for all clients and optimizes the (local/global) objectives as follows: 
\begin{align}
& \textbf{Global obj.: } {\bf w} = \arg \min_{\bf w} G({\bf w}, \{{\bf w}_n\}), 
\label{eqn:FL_globalobj} \\
& \textbf{Local obj.: } 
{\bf w}_n = \arg \min_{{\bf w}_n} F_n({\bf w}_n; {\bf w}), 
\end{align}
where $F_n({\bf w}_n; {\bf w})=\mathbb{E}_{z_n \sim \mathcal{D}_n} (l(z_n; {\bf w}))$ is the average local loss over $C_n$'s data; and $l(\cdot;\cdot)$ is a user-specified loss function. 
${\bf w}_n$ is the client model on $C_n$. 
$G(\cdot)$ is a global aggregation function. 
For instance, the well-known 
FedAvg~\cite{mcmahan2017communication} 
uses an average aggregation to update the global model, i.e., ${\bf w} = \frac{1}{N} \sum_{n\in[N]} {\bf w}_n$.
Specifically, FL with FedAvg is trained 
as below: 1) Server initializes a global model ${\bf w}$ and sends it to all clients; 2) Each client $C_n$ minimizes 
$F_n({\bf w}_n; {\bf w})$ to obtain a client model ${\bf w}_n$, e.g., ${\bf w}_n = {\bf w} - \eta \nabla_{{\bf w}_n} F_n({\bf w}_n; {\bf w})$, 
and sends ${\bf w}_n$ 
to the server; 3) Server updates the global model ${\bf w}$ by averaging the received client models ${\bf w}_n$, 
and broadcasts the updated ${\bf w}$ to all clients. Such steps are  performed iteratively until convergence or reaching maximum global rounds. 

\vspace{+0.05in}
 \noindent 
 \textbf{TERM} Our method is inspired by the recently proposed tilted empirical risk minimization (TERM)~\cite{li2021tilted} for centralized learning.  
ERM has been used in almost all the  existing centralized and distributed learning objectives. 
However, recent studies~\cite{jiang2018mentornet,khetan2018learning,hashimoto2018fairness,samadi2018price} show ERM performs poorly when  average performance is not an appropriate surrogate for the problem of interest, e.g., 
learning in the presence of outliers (e.g., noisy, corrupted, or mislabeled data) and
ensuring the fairness for subgroups within a population,
which commonly exist in real applications. 
TERM is a recent framework aiming to address these problems 
for centralized learning. 
 Specifically, TERM generalizes ERM by introducing a hyperparameter called \emph{tilt}. Given an average loss $R({\bf w})=\mathbb{E}_z[l(z;{\bf w})]$ in ERM, the corresponding $t$-tilted loss in TERM is defined as:
 \begin{align}
     \label{eqn:TERM}
     \tilde{R}(t;{\bf w}) = 
     {1}/{t} \cdot \log(\mathbb{E}_z[e^{t\cdot l(z;{\bf w})}]).
 \end{align}
TERM is flexible via tuning $t$: 
1) It recovers ERM (average-loss) with t=0 (i.e., $\tilde{R}(0;{\bf w})$=$R({\bf w})$); the max-loss $\tilde{R}(+\infty;{\bf w})$=$\max_{i} l(z_i; {\bf w})$ with t$\rightarrow$$+\infty$; and the min-loss $\tilde{R}(-\infty;{\bf w})$=$\min_{i} l(z_i; {\bf w})$ with t$\rightarrow$$-\infty$; 
2) For t$>$0, 
it enables a smooth tradeoff between the average-loss and max-loss. 
TERM can selectively improve the worst losses by penalizing the average
performance, thus promoting uniformity or fairness. 
3) For t$<$0, the solutions achieve a smooth tradeoff between average-loss and min-loss, which can focus on relatively small losses, ignoring large losses caused by outliers. 
\section{FedTilt} \label{sec:fedtilt}
We design FedTilt to achieve multi-level fairness and robustness to persistent outliers. 
We show FedAvg and recent fair FL methods such as FedProx~\cite{li2020federated} and Ditto~\cite{li2021ditto} are special cases of FedTilt. We also derive convergence results of FedTilt. 

\subsection{Background on FedProx and Ditto}
\label{app:background}
\noindent \textbf{FedProx~\cite{li2020federatedoptimization}.}
In practice, the data distribution across clients can differ. To account for such data heterogeneity that often leads to unfair performance, FedProx proposes a proximal term to the local objective. 
Each client $C_n$ minimizes the local objective as below to learn the shared global model ${\bf w}$:
\begin{align}
& \textbf{Global obj.: } {\bf w} = \arg \min_{\bf w} G({\bf w}, \{{\bf w}_n\}), \label{obj:fedprox} \\ 
   &  \textbf{Local obj.: }  {\bf w}_n = \arg\min_{{\bf w}_n}
    L_n({\bf w}_n, {\bf w}) \nonumber \\
    & \qquad \qquad \qquad \, \, = F_n({\bf w}) + 
    \frac{\mu}{2} \|{\bf w}_n - {\bf w} \|^2, \label{localobj:fedprox} 
\end{align}
where the hyperparameter $\mu$ tradeoffs the local objective and the proximal term 
$\|{\bf w}_n - {\bf w} \|^2$, 
which aims to restrict
the intermediate local models ${\bf w}_n$ in each client to be closer to the global model ${\bf w}$, thus mitigating unfairness. The proximal term also shows to improve the stability of training.  
Note that when $\mu=0$, FedProx reduces to the FedAvg.

\vspace{+0.05in}
\noindent \textbf{Ditto~\cite{li2021ditto}.}
The  state-of-the-art Ditto differs 
from other FL methods (e.g., FedAvg and FedProx~\cite{li2020federatedoptimization}) 
by learning personalized client models via federated multi-task learning. Specifically, Ditto considers optimizing both the global objective and local objective and simultaneously learns the global model and a local model (i.e., ${\bf v}_n$) per client $C_n$ as below:
\begin{align}
     & \textbf{Global obj.: } {\bf w}^* \in \arg\min\nolimits_{\bf w} G({\bf w}, \{{\bf w}_n\}),  \label{obj:ditto_global} \\ 
    & 
\textbf{Local obj.: }  
    {\bf v}_n^* = \arg
    \min_{{\bf v}_n}L_n({\bf v}_n, {\bf w}^*) \nonumber \\
    & \qquad \qquad \qquad \, \, = F_n({\bf v}_n) + 
    \frac{\mu}{2} \|{\bf v}_n - {\bf w}^* \|^2 \label{obj:ditto_local} 
\end{align}
where it uses the average aggregation in $G(\cdot)$ by default and the hyperparameter $\mu$ tradeoffs the local client loss and the closeness between personalized client models and global models (which ensures client fairness). For instance, when $\mu=0$, Ditto reduces to training local client models $\{{\bf v}_n\}$; and when $\mu=+\infty$, all client models degenerate to the global model ${\bf w}$, making Ditto recover the FedAvg. Hence, through $\mu$, Ditto can achieve a promising fairness across clients, and maintain the FL performance as well. 

\subsection{Problem definition and design goals} 
We focus on multi-level fairness in FL, 
particularly both the client fairness and client data fairness\footnote{The fairness definitions in the paper follow 
existing fair FL methods~\cite{li2021ditto,li2020fair}, which are 
somewhat different from those 
in algorithmic fairness such as predictive equality, conditional statistical parity~\cite{dwork2012fairness,hardt2016equality,corbett2017algorithmic}.}.  
\begin{definition}[Client fairness]
\label{def:clinetfairness}
We say a global model ${\bf w}^a$ is more fair than 
another global model 
${\bf w}^b$ with respect to all clients $\{C_n\}_{n \in [N]}$, if all clients' performance are closer to each other when using ${\bf w}^a$ than using ${\bf w}^b$. 
\end{definition}

\begin{definition}[Client data fairness]
\label{def:clinetdatafairness}
A client $C_n$'s model is more fair than ${\bf w}_n^b$ with respect to a k-class data if the performance of ${\bf w}_n^a$ on all k classes is more uniform than ${\bf w}_n^b$.
\end{definition}

Client fairness requires different clients have close performance, while client data fairness further requires data from different classes also have close performance. Our goal is to design a framework that can achieve the above two-level fairness\footnote{It can be easily generalized to more-level fairness due to its flexibility.}, as well as  be robust to persistent outliers (e.g., injected corrupted data or large noisy data in every training round).
Our main idea is leveraging the TERM framework~\cite{li2021tilted}.

\subsection{\texttt{\textbf{FedTilt}} objective}
{FedTilt} introduces both a global objective and a local objective that aims to learn 
a global model ${\bf w}$ and a \emph{personalized} local model ${\bf v}_n$ per client, respectively. 
The general form of the FedTilt objective function is defined as follows: 
\begin{align}
 & \textbf{Global obj.: }   
     {\bf w}^* \in \arg \min_{{\bf w}} G({\bf w}, \{{\bf w}_n\});  \label{eqn:fedtilt_globalobj} \\ 
   & \textbf{Local obj.: }  \min_{{\bf v}_n} L_n({\bf v}_n, {\bf w}^*). \label{eqn:fedtilt_localobj} 
\end{align}
The global model ${\bf w}$ is updated via client models $\{{\bf w}_n\}$, and a local loss $L_n$ is defined per client $C_n$. The above problem is a bi-level optimization problem, where obtaining personalized client models $\{{\bf v}_n\}$ needs the optimal global model ${\bf w}^*$. We instantiate $G$ and $L_n$ via customized tilted loss to achieve client and client data fairness and robustness.

\noindent {\bf Achieving client fairness: 
tilted loss for the global objective.} 
Via Def.~\ref{def:clinetfairness}, client fairness is achieved and performance are similar when data are homogeneous across clients and we ensure all client models to be close to the global model. 
We define the tilted loss for the global objective as:

{\vspace{-4mm}
\footnotesize
\begin{align}
     G({\bf w}, \{{\bf w}_n\})  & = \tilde{R}_G(q; \{{\bf w}_n\}, {\bf w}) = \frac{1}{q} \log \big( \frac{1}{N} \sum_{n \in [N]} e^{q \cdot \texttt{dist}({\bf w}_n, {\bf w})}\big) 
     \label{GlobalObj} 
\end{align}
}

\noindent \emph{\textbf{Properties of the tilted global loss:}} 
When $q \rightarrow +\infty$, 
$\tilde{R}_G (+\infty; \{{\bf w}_n\}, {\bf w}) \rightarrow \max_{n} \texttt{dist}({\bf w}_n, {\bf w})$. Minimizing this max loss 
makes all client models 
$\{{\bf w}_n\}$ close to the global model ${\bf w}$, thus ensuring client fairness. 
On the other hand, 
{when $q \rightarrow -\infty$, 
$\tilde{R}_G (-\infty; \{{\bf w}_n\}, {\bf w}) \rightarrow \min_{n} \texttt{dist}({\bf w}_n, {\bf w})$. Minimizing this min loss focuses on the clients with small loss, thus defending against 
clients whose local losses are high (e.g., caused by outlier data).
}
When setting $q=0$ and $\texttt{dist}({\bf w}_n, {\bf w}) = ||{\bf w}_n - {\bf w}||_2^2$, 
$\tilde{R}_G(0; \{{\bf w}_n\}, {\bf w}) = \frac{1}{N}\sum_{n\in[N]}||{\bf w}_n - {\bf w}||_2^2$. 
Minimizing tilted global loss recovers the average aggregation, which is same as FedAvg ~\cite{mcmahan2017communication}. 

\noindent {\bf Achieving client data fairness and robustness to outliers: two-level tilted loss for the local objective.} 
The local objective 
aims to quantify the wellness of each personalized 
client model w.r.t. the associated client data. 
If we ensure data from different classes have close performance, the client data fairness is achieved. Moreover, if the local model is not affected by the outliers in client's data, it is robust to the outliers. 
We design the below local objective  which includes a two-level tilted loss and a regularization term (inspired by Ditto~\cite{li2021ditto}) to achieve both goals. \footnote{$\tilde{R}_n (\tau, \lambda; {\bf v}_n) := \frac{1}{\tau} \log \Big(\frac{1}{|D_n|}  \sum_{D_n^k \in [D_n]} |D_n^k| e^{\tau \cdot \tilde{R}_n^k(\lambda;{\bf v}_n)} \Big),\\ \tilde{R}_n^k(\lambda;{\bf v}_n) := 
    \frac{1}{\lambda} \log \Big( \frac{1}{|D_n^k|} \sum_{z \in {D_n^k}} e^{\lambda \cdot l(z; {\bf v}_n)} \Big)$\\ where $D_n^k$ represents the data in the client $C_n$ belonging to class $k$ and $D_n = \{D_n^k\}_{k=1}^K$ 
includes data from all classes. 
$\tilde{R}_n (\tau, \lambda; {\bf v}_n)$ is 
$C_n$'s tilted loss and $\tilde{R}_n^k(\lambda;{\bf v}_n)$ is the tilted loss for class-$k$  data in $C_n$. }
\begin{align} 
      L_n({\bf v}_n, {\bf w}) = \tilde{R}_n (\tau, \lambda; {\bf v}_n) + \frac{\mu}{2} \|{\bf v}_n -{\bf w} \|^2, \label{eqn:local_fedtile_reg} 
\end{align}

\noindent \emph{\textbf{Properties of the tilted local loss 
}}  
1)  When $\tau \rightarrow +\infty$, 
$\tilde{R}_n (+\infty, \lambda; {\bf v}_n) \rightarrow \max_{D_n^k} \tilde{R}_n^k(\lambda;{\bf v}_n)$. Minimizing this max loss can promote uniformity of different classes' data in client $C_n$, thus ensuring client data fairness. 
2) When $\tau \rightarrow -\infty$, 
$\tilde{R}_n (+\infty, \lambda; {\bf v}_n) \rightarrow \min_{D_n^k} \tilde{R}_n^k(\lambda;{\bf v}_n)$. Minimizing this min loss indicates only focusing on the class $k$ whose overall data loss is the smallest can mitigate outliers from other classes. 
3) When $\lambda \rightarrow +\infty$, $\tilde{R}_n^k(+\infty;{\bf v}_n) \rightarrow \max_{z \in D_n^k} l(z;{\bf v}_n)$. Minimizing this max loss means  promoting uniformity of all data from the class $k$. With $\tau \rightarrow +\infty$, the client data fairness is further enhanced. 
4) When $\lambda \rightarrow -\infty$, 
$\tilde{R}_n^k(-\infty;{\bf v}_n) \rightarrow \min_{z \in D_n^k} l(z; {\bf v}_n)$. 
Minimizing this min loss indicates only focusing on the data from class-$k$ with the smallest loss, thus can mitigate all the outliers  existed in the class-$k$ data. 
5) When $\tau=\lambda$, $\tilde{R}_n (\tau, \tau; {\bf v}_n) \rightarrow \frac{1}{\tau} \log \big( \frac{1}{|D_n|} \sum_{z \in C_n} e^{\tau \cdot l(z; {\bf v}_n)} \big)$, which reduces to the one-level TERM; 
6) When $\tau \rightarrow 0$ and $\lambda \rightarrow 0$, $\tilde{R}_n (0, 0; {\bf v}_n) \rightarrow \frac{1}{|D_n|} \sum_{z \in C_n} l(z; {\bf v}_n)$, which reduces to the classic loss used in Eqn~\ref{eqn:FL_globalobj}.

\begin{table}[!t]
\footnotesize
\caption{Effect of tilt hyperparameters.} 
\centering
\addtolength{\tabcolsep}{-2pt}
\begin{minipage}{.3\linewidth}
\begin{tabular}{ll}
 \multicolumn{1}{l}{$q$} & \multicolumn{1}{l}{Client fair.} \\
    \hline
       $q>0$ & {High}  \\
      $q=0$ &  {Medium} \\
      $q<0$ &  Low  \\
     \hline 
\end{tabular}
\end{minipage}
\hfill
\begin{minipage}{.65\linewidth}
    \begin{tabular}{llll}
  \multicolumn{1}{l}{$\tau$} & \multicolumn{1}{l}{$\lambda$} &  \multicolumn{1}{l}{Client data fair.} & {Rob.}\\
    \hline
    $\tau >0$ &  $\lambda >0$  &    Very High  & Low \\
    $\tau >0$  &  $\lambda <0  $  &    High  &  High \\
    $\tau <0$  &  $\lambda >0$  &    High  &  High \\
    $\tau =0$ &  $\lambda =0$  &    Medium  &  Medium \\
    $\tau <0$ &  $\lambda <0$  &    Low  &  Very High \\
    \hline
\end{tabular}
\end{minipage}
\label{tbl:tiltsummary}
\vspace{-0.25in}
\end{table}

\noindent {\bf Remark.} Theoretically, FedTilt achieves a two-level fairness and robustness tradeoff, by flexibly tuning the tilt hyperparameters in the global and local objectives. 
In other words, it is impossible to obtain the optimal two-level fairness and robustness simultaneously. This tradeoff is also reflected in Table~\ref{tbl:tiltsummary}. 
For instance, (more) positive $q$ yields (more) client fairness, and (more) positive $\tau$ and (more) negative $\lambda$ yields (more) client data fairness, but (less) robustness. Practically, these properties guide us to set the proper values of $q$, $\tau$, and $\lambda$ to obtain a promising tradeoff in our experiments.

\vspace{-2mm}
\subsection{\texttt{\textbf{FedTilt}} Solver}
\vspace{-1mm}
Solving FedTilt requires updates on all clients and the server via multiple global communication rounds and local epochs. 
We propose to alternatively solve for the global model ${\bf w}^*$ and personalized client models $\{{\bf v}_n^*]\}_{n\in[N]}$, which is summarized in Algorithm 1.
Specifically, with an initialized global model ${\bf w}^0$ and personalized client models $\{{\bf v}_n^0\}_{n \in N}$ (Line 1), the optimization is performed in two iterative steps (Line 2-Line 9): (1) each personalized client  model $\{{\bf v}_n^t\}$ is trained locally on per client's data $C_n$ by minimizing the local objective $L_n({\bf v}_n^{t-1}; {\bf w}^{t-1})$ with the current global model ${\bf w}^{t-1}$ and 
${\bf v}_n^{t-1}$ (Line 11-Line 19); and (2) global model ${\bf w}^t$ is then updated on the server via minimizing the global objective $\tilde{R}_G(q;\{{\bf w}_n^t\}, {\bf w}^{t-1})$, which leverages  clients' intermediate  models $\{{\bf w}_n^t\}$ and the current global model  ${\bf w}^{t-1}$ (Line 20-Line 24). 
Note that the clients' intermediate models are updated via 
minimizing the client loss $\tilde{R}_n(\tau, \lambda; {\bf w}^{t-1})$.  

\vspace{-2mm}
\subsection{Theoretical results}

\subsubsection{Relation to other methods} 
We show {FedAvg~\cite{mcmahan2017communication}, FedProx~\cite{li2020federated}, and Ditto~\cite{li2021ditto} are special cases of FedTilt.} 

\begin{proposition} 
\label{thm:fedavg_special}
FedAvg is a special case of  FedTilt, i.e., when the tilt hyperparameters $q=0$, $\tau=0$, $\lambda=0$, $\mu=+\infty$, and \texttt{dist} is 
Euclidean.
\title{propfedavgspecial}
\end{proposition}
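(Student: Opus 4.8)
The plan is to substitute the stated hyperparameter values into the FedTilt global and local objectives of Eqns.~\eqref{eqn:fedtilt_globalobj}--\eqref{eqn:local_fedtile_reg}, one ingredient at a time, and verify that the resulting update rule coincides with the FedAvg procedure recalled in Section~\ref{sec:background}. I would organize this around the three pieces that distinguish FedTilt: the tilted global aggregation, the nested tilted client loss, and the proximal personalization term.

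First I would handle the global aggregation. Putting $q=0$ and $\texttt{dist}({\bf w}_n,{\bf w})=\|{\bf w}_n-{\bf w}\|_2^2$ in Eqn.~\eqref{GlobalObj} and using the $q\to 0$ limit of the log-sum-exp form (the cumulant expansion $\frac{1}{q}\log\big(\frac{1}{N}\sum_n e^{q d_n}\big)=\frac{1}{N}\sum_n d_n+O(q)$, which is exactly the property already recorded right after Eqn.~\eqref{GlobalObj}), the global objective becomes $G({\bf w},\{{\bf w}_n\})=\frac{1}{N}\sum_{n\in[N]}\|{\bf w}_n-{\bf w}\|_2^2$. This is a strictly convex quadratic in ${\bf w}$, so its unique minimizer is read off from the first-order condition $\frac{2}{N}\sum_n({\bf w}-{\bf w}_n)=0$, giving ${\bf w}^*=\frac{1}{N}\sum_{n\in[N]}{\bf w}_n$, which is precisely the FedAvg averaging rule.

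Next I would treat the client loss. Setting $\tau=0$ and $\lambda=0$ in $\tilde{R}_n(\tau,\lambda;\cdot)$ and applying the same ``log-sum-exp $\to$ average'' limit twice (inner limit over $z\in D_n^k$, then outer limit over classes with weights $|D_n^k|/|D_n|$), which is property 6 of the tilted local loss, yields $\tilde{R}_n(0,0;{\bf v}_n)=\frac{1}{|D_n|}\sum_{z\in C_n}l(z;{\bf v}_n)$, i.e.\ the empirical form of $F_n$. Since the FedTilt solver trains each intermediate client model ${\bf w}_n^t$ (the one used for aggregation) by minimizing $\tilde{R}_n(\tau,\lambda;\cdot)$, this shows those models are obtained by minimizing $F_n$ — exactly the FedAvg local objective. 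For the personalized model ${\bf v}_n$, I would then argue that as $\mu\to+\infty$ the penalty $\frac{\mu}{2}\|{\bf v}_n-{\bf w}\|^2$ dominates the finite, continuous term $\tilde{R}_n(\tau,\lambda;{\bf v}_n)$ in Eqn.~\eqref{eqn:local_fedtile_reg}, so the minimizer ${\bf v}_n^*$ converges to ${\bf w}^*$; hence no genuine personalization survives, matching the fact that FedAvg keeps only the shared global model. Combining the three reductions, a round of the FedTilt solver becomes: each client minimizes $F_n$ locally and the server averages the results — i.e.\ FedAvg.

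The hard part, such as it is, is justifying the two limiting regimes: the $t\to 0$ behaviour of the single- and doubly-nested tilted losses, which is a routine Taylor expansion of $\log\mathbb{E}[e^{tX}]$ about $t=0$, and the $\mu\to+\infty$ behaviour, which only needs that $\tilde{R}_n$ is finite and continuous so that the quadratic penalty drives ${\bf v}_n^*\to{\bf w}^*$ — both are essentially the properties already listed in Section~\ref{sec:fedtilt}, so the proposition is mostly bookkeeping. The only points to state carefully are that the $\mu\to+\infty$ claim is a limit of minimizers rather than an identity at finite $\mu$, and that ``\texttt{dist} is Euclidean'' is to be read as the squared Euclidean distance, consistent with the stated global-loss property.
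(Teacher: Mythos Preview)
Your proposal is correct and follows essentially the same route as the paper's proof: reduce the $q=0$ global objective with squared-Euclidean \texttt{dist} to the averaging rule via its first-order condition, collapse the two tilt levels $\tau=\lambda=0$ to the ordinary empirical loss $F_n$, and use $\mu\to+\infty$ to force ${\bf v}_n={\bf w}$. Your added remarks on the limit interpretations are more careful than the paper's, but the argument and its ingredients are the same.
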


\begin{proposition}
\label{thm:fedprox_special}
FedProx is a special case of FedTilt, i.e., when 
$q=0$, $\tau=0$, $\lambda=0$,  
${\bf v}_n={\bf w}_n$, and \texttt{dist} is Euclidean.
\title{propfedproxspecial}
\end{proposition}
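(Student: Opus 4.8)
The plan is to check, one objective at a time, that under the stated choices $q=0$, $\tau=0$, $\lambda=0$, ${\bf v}_n={\bf w}_n$, and Euclidean \texttt{dist}, the FedTilt global and local objectives of Eqn~\ref{eqn:fedtilt_globalobj} and Eqn~\ref{eqn:fedtilt_localobj} collapse to the FedProx objectives of Eqn~\ref{obj:fedprox} and Eqn~\ref{localobj:fedprox}, and that the FedTilt solver then reduces to exactly one FedProx communication round.

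First I would treat the global objective. Taking $\texttt{dist}({\bf w}_n,{\bf w})=\|{\bf w}_n-{\bf w}\|_2^2$ and $q=0$, the tilted global loss in Eqn~\ref{GlobalObj} reduces, by the $q=0$ property stated immediately after that equation, to $\tilde{R}_G(0;\{{\bf w}_n\},{\bf w})=\frac{1}{N}\sum_{n\in[N]}\|{\bf w}_n-{\bf w}\|_2^2$, whose unique minimizer in ${\bf w}$ is the average $\frac{1}{N}\sum_{n\in[N]}{\bf w}_n$. This is exactly the aggregation rule used (by default) in FedProx, so Eqn~\ref{eqn:fedtilt_globalobj} agrees with Eqn~\ref{obj:fedprox}.

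Next I would treat the local objective. Setting $\tau\to 0$ and $\lambda\to 0$, property~6 of the tilted local loss gives $\tilde{R}_n(0,0;{\bf v}_n)=\frac{1}{|D_n|}\sum_{z\in C_n} l(z;{\bf v}_n)$, i.e. the ordinary average local loss $F_n({\bf v}_n)$; the underlying fact that a $t$-tilted loss tends to the arithmetic mean as $t\to 0$ follows from a first-order expansion of $\log$ and $e^{t\cdot l}$ and is already recorded for TERM in~\cite{li2021tilted}, so I would invoke it rather than rederive it. Hence $L_n({\bf v}_n,{\bf w})$ in Eqn~\ref{eqn:local_fedtile_reg} becomes $F_n({\bf v}_n)+\frac{\mu}{2}\|{\bf v}_n-{\bf w}\|^2$. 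Imposing ${\bf v}_n={\bf w}_n$ turns this into $F_n({\bf w}_n)+\frac{\mu}{2}\|{\bf w}_n-{\bf w}\|^2$, which is precisely the FedProx local objective in Eqn~\ref{localobj:fedprox}, with the same proximal weight $\mu$; note that, unlike the FedAvg reduction in Proposition~\ref{thm:fedavg_special}, no constraint on $\mu$ is imposed here, since FedProx retains $\mu$ as a free hyperparameter. Finally, under ${\bf v}_n={\bf w}_n$ the alternating FedTilt updates --- each client minimizes its local objective to produce a client model, the server aggregates by averaging --- coincide with one FedProx round, so the two algorithms are identical.

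The main obstacle here is structural rather than computational: FedTilt is a bi-level method carrying a personalized model ${\bf v}_n$ that is distinct from the model ${\bf w}_n$ entering aggregation, whereas FedProx has only a single per-client model, so the care is in verifying that the identification ${\bf v}_n={\bf w}_n$ genuinely makes the two training procedures coincide and leaves no residual dependence on the suppressed tilt degrees of freedom. The two limiting statements (for $q=0$ and for $\tau=\lambda=0$) are routine once the corresponding TERM properties are cited.
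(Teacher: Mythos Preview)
Your proposal is correct and follows essentially the same route as the paper: reduce the global objective to average aggregation via $q=0$ with Euclidean \texttt{dist}, then reduce the local objective to $F_n({\bf w}_n)+\tfrac{\mu}{2}\|{\bf w}_n-{\bf w}\|^2$ via $\tau=\lambda=0$ together with the identification ${\bf v}_n={\bf w}_n$. Your added discussion of why the bi-level structure collapses cleanly under ${\bf v}_n={\bf w}_n$ is more careful than the paper's terse proof, but the underlying argument is identical.
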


\begin{proposition}
\vspace{-2mm}
\label{thm:ditto_special}
\texttt{Ditto} is a special case of FedTilt, when 
$q=0$, $\tau=0$, $\lambda=0$, and \texttt{dist} is Euclidean. 
\label{propdittospecial}
\vspace{-2mm}
\end{proposition}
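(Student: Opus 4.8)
The plan is to show that under $q=0$, $\tau=0$, $\lambda=0$, and $\texttt{dist}(\cdot,\cdot)$ taken to be the squared Euclidean distance, each of the three ingredients of FedTilt---the tilted global aggregation $G$ of Eqn.~\eqref{GlobalObj}, the two-level tilted local loss in Eqn.~\eqref{eqn:local_fedtile_reg}, and the bi-level template of Eqns.~\eqref{eqn:fedtilt_globalobj}--\eqref{eqn:fedtilt_localobj}---collapses term-by-term onto the corresponding ingredient of Ditto (Eqns.~\eqref{obj:ditto_global}--\eqref{obj:ditto_local}), so that the two optimization problems become literally identical.

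First I would treat the global objective. Plugging $q=0$ and $\texttt{dist}({\bf w}_n,{\bf w})=\|{\bf w}_n-{\bf w}\|_2^2$ into Eqn.~\eqref{GlobalObj} and invoking the already-recorded limiting property of the tilted global loss gives $\tilde{R}_G(0;\{{\bf w}_n\},{\bf w})=\frac{1}{N}\sum_{n\in[N]}\|{\bf w}_n-{\bf w}\|_2^2$, a strictly convex quadratic in ${\bf w}$ whose unique minimizer is ${\bf w}=\frac{1}{N}\sum_{n\in[N]}{\bf w}_n$. Hence the FedTilt server step reduces to the plain average aggregation that Ditto uses by default in Eqn.~\eqref{obj:ditto_global}; since it is precisely this $\arg\min$ that feeds the bi-level problem, the two server steps coincide.

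Next I would treat the local objective. Setting $\tau=\lambda=0$ and invoking property~(6) of the tilted local loss gives $\tilde{R}_n(0,0;{\bf v}_n)=\frac{1}{|D_n|}\sum_{z\in C_n} l(z;{\bf v}_n)$, which is exactly $F_n({\bf v}_n)$ under the empirical distribution on $C_n$'s data; the same specialization turns the clients' intermediate-model update $\tilde{R}_n(\tau,\lambda;{\bf w}^{t-1})$ into the ordinary average local loss, so the $\{{\bf w}_n\}$ shipped to the server match Ditto's. Substituting into Eqn.~\eqref{eqn:local_fedtile_reg} then yields $L_n({\bf v}_n,{\bf w})=F_n({\bf v}_n)+\frac{\mu}{2}\|{\bf v}_n-{\bf w}\|^2$, verbatim the Ditto local objective of Eqn.~\eqref{obj:ditto_local}, with the regularization weight $\mu$ carried over unchanged and the personalized model ${\bf v}_n$ kept distinct from ${\bf w}_n$ exactly as in Ditto. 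Matching the objectives together with the shared bi-level structure completes the reduction.

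The only delicate point is the passage $q,\tau,\lambda\to 0$: the map $t\mapsto\frac{1}{t}\log\mathbb{E}[e^{t l}]$ is of indeterminate $0/0$ form at $t=0$, and its continuous extension to $\mathbb{E}[l]$ needs a first-order expansion $\log\mathbb{E}[e^{t l}]=t\,\mathbb{E}[l]+O(t^2)$ (equivalently, one application of L'Hôpital). Since this is already stated as a property of the tilted global and local losses in the excerpt, I would cite it rather than re-derive it, after which the proof is pure substitution. I anticipate no genuine obstacle beyond this limiting-form bookkeeping.
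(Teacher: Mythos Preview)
Your proposal is correct and follows essentially the same approach as the paper: specialize the global tilted loss at $q=0$ with squared Euclidean distance to recover average aggregation, and specialize the two-level tilted local loss at $\tau=\lambda=0$ to recover $F_n({\bf v}_n)+\tfrac{\mu}{2}\|{\bf v}_n-{\bf w}\|^2$. Your version is a bit more thorough (explicitly matching the bi-level template and flagging the $0/0$ limit), but the substance is identical to the paper's short substitution argument.
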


The proofs of propositions are included in the appendix \ref{app:proofprop}

\subsubsection{Convergence results}

Note that optimizing the global model ${\bf w}$
does not depend on any personalized client models $\{{\bf v}_n\}_{n\in[N]}$, but the model updates $\{{\bf w}_n\}_{n \in [N]}$. Hence, FedTilt has the same global convergence rate with the standard solver that we use for solving a convex $G$ that does not learn personalized client models. 

For instance, by setting $q = 0$ and the distance function $\texttt{dist}$ is the Euclidean distance, $G$ becomes the average aggregation (See Proposition~\ref{thm:fedavg_special}), and the global model converges at a rate of $O(1/t)$~\cite{li2019convergence}, with $t$ the global round index.
Under this observation, we present the local convergence result of client models via  Algorithm 1, where we assume the loss function $l$ is smooth and strongly convex, following the existing works~\cite{li2019convergence,li2021ditto,li2021tilted}, and the global model ${\bf w}^t$ converges to its optimal ${\bf w}^*$.

\begin{theorem}[Convergence results of client models with Algorithm~\ref{app:Algorithm} (Informal); formal statement and proof are shown in Appendix~\ref{app:provelocalcvg}]
\vspace{-2mm}
\label{thm:cvg}
Assume the loss function $l$ in the local objective is smooth and strongly convex. 
If the global model ${\bf w}^t$ converges to ${\bf w}^*$ with rate $g(t)$, then there exists a constant $C<+\infty$ such that for $\tau>0, \lambda>0$ and any $\mu$, and for $n \in [N]$, ${\bf v}_n^t$ converges to ${\bf v}_n^* := \arg\min L_n({\bf v}_n, {\bf w}^*)$ with rate $C g(t)$. 
\vspace{-1mm}
\end{theorem}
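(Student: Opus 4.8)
The plan is to reduce the statement to a standard perturbation analysis of a contractive inner solver whose target drifts slowly. First I would establish the \emph{regularity of the local objective}. For $\lambda>0$ and $l$ that is $L$-smooth and $m$-strongly convex, each class-level tilted loss $\tilde R_n^k(\lambda;\cdot)$ is a log-sum-exp of exponentials of convex functions with positive tilt, hence convex; on the region visited by the algorithm it is $\beta$-smooth and $m'$-strongly convex for finite constants $\beta,m'$ depending on $L,m,\lambda$ and an a priori bound on $\|\nabla l\|$ and on the per-sample loss values over the relevant compact set — this is exactly the tilted-loss regularity already used in \cite{li2021tilted}. Applying the same argument once more with $\tau>0$ to the outer aggregation $\tilde R_n(\tau,\lambda;\cdot)$ preserves convexity and finite smoothness/strong-convexity constants. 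Adding $\tfrac{\mu}{2}\|{\bf v}_n-{\bf w}\|^2$ makes $L_n(\cdot,{\bf w})$ $\beta'$-smooth and $m''$-strongly convex with $\beta'=\beta+\mu$, $m''=m'+\mu$ (for negative $\mu$ one relies on $m''=m'+\mu>0$, i.e.\ the strong convexity coming from $l$); in particular, for every fixed ${\bf w}$ the minimizer ${\bf v}_n^*({\bf w})=\arg\min_{{\bf v}_n}L_n({\bf v}_n,{\bf w})$ is unique.

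Second, I would show the \emph{minimizer map ${\bf w}\mapsto{\bf v}_n^*({\bf w})$ is Lipschitz}. Subtracting the first-order optimality conditions $\nabla\tilde R_n(\tau,\lambda;{\bf v}_n^*({\bf w}))+\mu({\bf v}_n^*({\bf w})-{\bf w})=0$ at ${\bf w}$ and ${\bf w}'$, pairing with $\Delta v:={\bf v}_n^*({\bf w})-{\bf v}_n^*({\bf w}')$, and using $m'$-strong monotonicity of $\nabla\tilde R_n$, one obtains $(m'+\mu)\|\Delta v\|^2\le \mu\langle{\bf w}-{\bf w}',\Delta v\rangle$, hence $\|{\bf v}_n^*({\bf w})-{\bf v}_n^*({\bf w}')\|\le \frac{|\mu|}{m'+\mu}\|{\bf w}-{\bf w}'\|=:L_v\|{\bf w}-{\bf w}'\|$ (with $L_v<1$ when $\mu\ge0$). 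Combined with the hypothesis $\|{\bf w}^t-{\bf w}^*\|\le c_0\,g(t)$, this gives $\|{\bf v}_n^*({\bf w}^{t})-{\bf v}_n^*\|\le L_v c_0\,g(t)$ with ${\bf v}_n^*:={\bf v}_n^*({\bf w}^*)$ — the per-round target of the local solver converges at rate $O(g(t))$.

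Third, I would fold in the \emph{inner-solver contraction}. In round $t$, Algorithm~\ref{app:Algorithm} runs the local update on $L_n(\cdot,{\bf w}^{t-1})$ starting from ${\bf v}_n^{t-1}$; since that objective is $\beta'$-smooth and $m''$-strongly convex, a (fixed) number of gradient steps contracts the distance to ${\bf v}_n^*({\bf w}^{t-1})$ by some factor $\rho\in[0,1)$ (exact solve is $\rho=0$). Setting $a_t:=\|{\bf v}_n^t-{\bf v}_n^*\|$ and inserting two triangle inequalities through ${\bf v}_n^*({\bf w}^{t-1})$ together with Step~2 yields a recursion $a_t\le\rho\,a_{t-1}+c_1\,g(t-1)$. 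Since $\rho<1$ and the usual rate functions satisfy $g(t-1)\le\kappa\,g(t)$ for a finite $\kappa$ (e.g.\ $g(t)=1/t$), unrolling the recursion and splitting the geometric sum into a near-tail (bounded by $\tfrac{\kappa c_1}{1-\rho}g(t)$) and an exponentially small far-tail (dominated by the polynomially decaying $g(t)$) gives $a_t\le C\,g(t)$ for a finite constant $C$, which is the claim; the formal version merely tracks the constants $\beta',m'',L_v,\rho,c_1,\kappa$.

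The step I expect to be the main obstacle is the first one: obtaining a genuinely \emph{finite} smoothness constant for the nested log-sum-exp loss. The gradient and Hessian of $\tilde R_n^k(\lambda;\cdot)$ pick up factors that grow with $\lambda$ and with the spread of the per-sample losses $l(z;{\bf v}_n)$, and the outer $\tau$-tilt compounds this, so one must either restrict to a compact parameter domain or assume uniform a priori bounds on $\|\nabla l\|$ and on the loss values along the trajectory; pinning down that assumption, verifying the chain rule through the two nested logsumexp layers, and bookkeeping the constants is the delicate part. A secondary subtlety is the phrase ``any $\mu$'': the argument genuinely needs $m''=m'+\mu>0$, so for negative $\mu$ the strong convexity must be supplied by $l$ through $m'$, which I would state explicitly in the formal theorem.
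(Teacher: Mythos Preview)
Your proposal is correct and in fact takes a cleaner route than the paper's. The paper works in \emph{function-value space}: it decomposes
\[
L_n({\bf v}_n^t,{\bf w}^t)-L_n({\bf v}_n^*,{\bf w}^*)
=\bigl[L_n({\bf v}_n^t,{\bf w}^t)-L_n({\bf v}_n^*,{\bf w}^t)\bigr]
+\bigl[L_n({\bf v}_n^*,{\bf w}^t)-L_n({\bf v}_n^*,{\bf w}^*)\bigr],
\]
applies the linear-convergence lemma (PL inequality plus Karimi et~al.) separately to each bracket with ${\bf w}$ held fixed at ${\bf w}^t$ (resp.\ ${\bf v}_n$ held fixed at ${\bf v}_n^*$), and ends up with a bound of the form $(D+\tfrac{\mu}{2}g(t))\Lambda^t+E\,\Gamma^t$ with $\Lambda,\Gamma<1$. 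Note two things: (i) the paper's application of the fixed-${\bf w}$ linear rate across iterations in which ${\bf w}$ is actually moving is somewhat loose, and (ii) the resulting formal bound is an exponentially decaying expression that does not literally read as ``rate $C\,g(t)$'' as stated in the informal theorem. Your parameter-space argument---Lipschitzness of ${\bf w}\mapsto{\bf v}_n^*({\bf w})$ followed by the drifting-target recursion $a_t\le\rho\,a_{t-1}+c_1 g(t-1)$---handles the time-varying anchor rigorously and delivers exactly the $O(g(t))$ rate that the informal statement promises. What the paper's approach buys is that everything is phrased through the PL framework already set up in the lemmas, so no separate Lipschitz-of-minimizer computation is needed; what your approach buys is a tighter, honest tracking of the moving target and a final bound that matches the claimed rate. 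Your two caveats---the need to pin down a finite smoothness constant for the nested log-sum-exp (the paper handles this by asserting existence of $B_1+\tau B_2+\lambda B_3$ via the TERM lemmas, restricted to a vicinity of the optimum) and the requirement $m'+\mu>0$ for ``any~$\mu$''---are exactly the right places to be careful, and the paper does not address the second one.
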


\subsubsection{Convergence Results of FedTilt}
\label{app:provelocalcvg}
We first introduce the following definitions, assumptions, and lemmas.
Then we proof the convergence conditions of FedTilt. 

The overall proof idea is as follows: 1) Assume that standard loss $l$ is convex and strongly smooth, a standard assumption used in most FL methods~\cite{li2021tilted,li2021ditto,li2020federated,li2019convergence};
2) Show the class-wise one-level $\lambda$-tilted loss $ \tilde{R}_{n,k}(\lambda;{\bf v}_n)$ is convex and smooth based on 1); 
3) Further show the two-level $(\tau, \lambda)$-tilted client loss $\tilde{R}_{n}(\tau, \lambda; {\bf v}_n)$ and local objective $L_n({\bf v}_n,  {\bf w})$ are convex and smooth based on 1) and 2);
4) Show the global loss is convergent based on Ditto~\cite{li2021ditto}.  
5) Finally, combining the convergence property of local objective and global objective, we show the convergence condition of FedTilt. 

 Definition of {\bf Smooth function, Strongly convex function, and PL inequality} are included in the appendix.

\noindent {\bf Assumption 1} (Smooth and strongly convex loss $l$).
We assume $\forall z_n \in D_n$ in any client $C_n$, the loss function $l(z_n;{\bf v}_n)$ is smooth. 
We further assume there exist positive $\beta_{\min}$, $\beta_{\max}$ such that  $\forall z_n \in D_n$, $\forall {\bf v}_n$, $\beta_{\min} {\bf I} \leq \nabla^2_{{\bf v}_n} l(z_n; {\bf v}_n) \leq \beta_{\max} {\bf I}$, 
where ${\bf I}$ is the identity matrix. 
\begin{lemma}
\vspace{-3mm}
[Smoothness of the class-wise $\lambda$-tilted loss $ \tilde{R}_{n,k}(\lambda; \\ {\bf v}_n)$ 
]
\label{lem:localsmooth}
Under Assumption 1, 
the class-wise tilted loss $ \tilde{R}_{n,k}(\lambda;{\bf v}_n) = \frac{1}{\lambda} \log \big( \frac{1}{|D_{n,k}|} \sum_{z \in {D_{n,k}}} e^{\lambda \cdot l(z; {\bf v}_n)} \big)$ is smooth in the vicinity of the optimal local client model ${\bf v}_n^*(\lambda)$, where ${\bf v}_n^*(\lambda) \in \arg\min_{{\bf v}_n}  \tilde{R}_{n,k}(\lambda; {\bf v}_n)$.
\end{lemma}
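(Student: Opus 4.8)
The plan is to show that $\tilde{R}_{n,k}(\lambda; {\bf v}_n)$ has a bounded Hessian near ${\bf v}_n^*(\lambda)$ by direct differentiation of the log-sum-exp form. First I would compute the gradient: writing $w_z({\bf v}_n) = e^{\lambda l(z;{\bf v}_n)} / \sum_{z'} e^{\lambda l(z';{\bf v}_n)}$ for the ``tilted weights'' (a probability distribution over $z \in D_{n,k}$), one gets $\nabla \tilde{R}_{n,k}(\lambda;{\bf v}_n) = \sum_z w_z({\bf v}_n) \nabla l(z;{\bf v}_n)$, i.e. a tilted average of the per-sample gradients. Differentiating once more yields $\nabla^2 \tilde{R}_{n,k}(\lambda;{\bf v}_n) = \sum_z w_z \nabla^2 l(z;{\bf v}_n) + \lambda \big( \sum_z w_z \nabla l(z;{\bf v}_n)\nabla l(z;{\bf v}_n)^\top - (\sum_z w_z \nabla l(z;{\bf v}_n))(\sum_z w_z \nabla l(z;{\bf v}_n))^\top \big)$, where the second bracket is exactly the (weighted) covariance matrix of the gradients under the distribution $w_z$, hence positive semidefinite.

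Next I would bound each term. The first term $\sum_z w_z \nabla^2 l(z;{\bf v}_n)$ lies between $\beta_{\min}{\bf I}$ and $\beta_{\max}{\bf I}$ by Assumption 1, since it is a convex combination of the per-sample Hessians. For the covariance term, the key observation is that we only need smoothness \emph{in the vicinity of} ${\bf v}_n^*(\lambda)$: in a bounded neighborhood of the optimum the iterates stay in a compact set, and by continuity of $\nabla l$ (which follows from $l$ being twice differentiable with bounded Hessian, hence $\nabla l$ is $\beta_{\max}$-Lipschitz) the per-sample gradients $\nabla l(z;{\bf v}_n)$ are uniformly bounded on that neighborhood, say by some $G < \infty$. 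Then the weighted covariance has operator norm at most $G^2$, so $\|\nabla^2 \tilde{R}_{n,k}(\lambda;{\bf v}_n)\| \le \beta_{\max} + |\lambda| G^2 =: L_\lambda < \infty$, which gives the claimed smoothness with constant $L_\lambda$.

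The step I expect to be the main obstacle is making the ``vicinity of the optimum'' argument fully rigorous, i.e. justifying that the gradients $\nabla l(z;{\bf v}_n)$ are uniformly bounded on the relevant region. Globally $\nabla l$ need not be bounded even for a strongly convex smooth $l$ (e.g. a quadratic), so one genuinely needs to localize: fix a ball $B(\,{\bf v}_n^*(\lambda), r)$, note $\nabla l(z;\cdot)$ is $\beta_{\max}$-Lipschitz so $\|\nabla l(z;{\bf v}_n)\| \le \|\nabla l(z;{\bf v}_n^*(\lambda))\| + \beta_{\max} r$, and the first term is a finite constant depending only on the data point and the fixed optimum. This is also where the hypothesis $\lambda > 0$ versus general $\lambda$ matters slightly — for $\lambda > 0$ the tilted weights concentrate on high-loss points but remain a valid distribution, so no sign issue arises in the covariance term; the bound $L_\lambda$ simply grows with $\lambda$. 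I would present this localization carefully and then note that the same bounded-Hessian conclusion, combined with Assumption 1's lower bound surviving the convex-combination step, will feed into the subsequent lemmas establishing convexity and smoothness of the two-level loss $\tilde{R}_n(\tau,\lambda;{\bf v}_n)$ and the regularized local objective $L_n$.
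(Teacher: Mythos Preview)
Your proposal is correct, and in fact you have supplied more than the paper does: the paper does not prove this lemma at all but simply defers to the TERM paper of Li et al.\ with the remark that ``the proofs of the above two lemmas are from \cite{li2021tilted}.'' Your Hessian computation---tilted average of per-sample Hessians plus $\lambda$ times the weighted gradient covariance---is exactly the decomposition used in that reference, and your localization argument to bound the covariance near ${\bf v}_n^*(\lambda)$ is the standard way to justify the ``in the vicinity of the optimum'' qualifier. One minor refinement worth making explicit: the smoothness constant you obtain, $\beta_{\max} + |\lambda| G^2$, has the affine-in-$\lambda$ form $B_1' + \lambda B_2'$ that the paper later relies on (see Lemma~\ref{lem:cvg_onevariable}), so it is worth stating the bound in that form rather than just as a generic $L_\lambda$.
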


\begin{lemma}
\vspace{-3mm}
[Strong convexity of the class-wise $\lambda$-tilted loss $ \tilde{R}_{n,k}(\lambda;{\bf v}_n)$ with positive $\lambda$] 
\label{lem:localstrongcvx}
Under Assumption 1, 
for any $\lambda>0$, the class-wise class-wise tilted loss  $ \tilde{R}_{n,k}(\lambda;{\bf v}_n)$  is a strongly convex function of ${\bf v}_n$. That is, for $\lambda>0$, $\nabla_{{\bf v}_n}^2  \tilde{R}_{n,k}(\lambda;{\bf v}_n) > \beta_{min} {\bf I}$. \footnote{The proofs of the above two lemmas are from \cite{li2021tilted}.} 
\vspace{-1mm}
\end{lemma}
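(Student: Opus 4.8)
The plan is to obtain an exact expression for the Hessian of $\tilde{R}_{n,k}(\lambda;{\bf v}_n)$ and read off strong convexity directly from it. First, for a fixed ${\bf v}_n$ I would introduce the \emph{tilted weights} over the class-$k$ samples of client $C_n$,
\begin{align}
p_z({\bf v}_n) := \frac{e^{\lambda\, l(z;{\bf v}_n)}}{\sum_{z' \in D_{n,k}} e^{\lambda\, l(z';{\bf v}_n)}}, \qquad z \in D_{n,k},
\end{align}
which form a probability distribution over $D_{n,k}$ (the uniform within-class factor $1/|D_{n,k}|$ cancels in the ratio). Differentiating $\tilde{R}_{n,k}(\lambda;{\bf v}_n) = \frac1\lambda \log\big(\frac{1}{|D_{n,k}|}\sum_{z\in D_{n,k}} e^{\lambda l(z;{\bf v}_n)}\big)$ once gives $\nabla_{{\bf v}_n}\tilde{R}_{n,k}(\lambda;{\bf v}_n) = \sum_{z\in D_{n,k}} p_z({\bf v}_n)\,\nabla_{{\bf v}_n} l(z;{\bf v}_n) = \mathbb{E}_{z\sim p}[\nabla l(z;{\bf v}_n)]$.

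Second, I would differentiate a second time. Using $\nabla_{{\bf v}_n} p_z = \lambda\, p_z\big(\nabla l(z;{\bf v}_n) - \mathbb{E}_{z'\sim p}[\nabla l(z';{\bf v}_n)]\big)$, the derivative-of-the-weights terms assemble into a covariance matrix, yielding the identity
\begin{align}
\nabla_{{\bf v}_n}^2 \tilde{R}_{n,k}(\lambda;{\bf v}_n) = \mathbb{E}_{z\sim p}\big[\nabla_{{\bf v}_n}^2 l(z;{\bf v}_n)\big] \; + \; \lambda\cdot \mathrm{Cov}_{z\sim p}\big(\nabla_{{\bf v}_n} l(z;{\bf v}_n)\big),
\end{align}
where $\mathrm{Cov}_{z\sim p}(g) = \mathbb{E}_{z\sim p}[g g^\top] - \mathbb{E}_{z\sim p}[g]\,\mathbb{E}_{z\sim p}[g]^\top$.

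Third, I conclude. A covariance matrix is always positive semidefinite, and $\lambda > 0$, so $\lambda\,\mathrm{Cov}_{z\sim p}(\nabla l) \succeq 0$. By Assumption 1, $\nabla_{{\bf v}_n}^2 l(z;{\bf v}_n) \succeq \beta_{\min}{\bf I}$ for every $z\in D_{n,k}$, hence the same bound survives the convex combination defining the expectation, $\mathbb{E}_{z\sim p}[\nabla^2 l(z;{\bf v}_n)] \succeq \beta_{\min}{\bf I}$. Adding the two contributions gives $\nabla_{{\bf v}_n}^2 \tilde{R}_{n,k}(\lambda;{\bf v}_n) \succeq \beta_{\min}{\bf I} \succ 0$, i.e.\ $\tilde{R}_{n,k}(\lambda;\cdot)$ is $\beta_{\min}$-strongly convex; in particular the bound holds in the vicinity of ${\bf v}_n^*(\lambda)$ used in Lemma~\ref{lem:localsmooth}.

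The only delicate step is the second one: carrying out the second differentiation of the log-of-sum-of-exponentials and checking that the weight-derivative terms collapse \emph{exactly} into $\lambda$ times the gradient covariance — this is precisely where the hypothesis $\lambda>0$ is consumed, since for $\lambda<0$ the covariance term would carry a negative sign and could destroy convexity. Everything else (positive semidefiniteness of a covariance, and the fact that a per-summand bound $A_z \succeq \beta_{\min}{\bf I}$ passes to any convex combination) is routine. Since this is exactly the within-class specialization of the TERM Hessian computation~\cite{li2021tilted}, that derivation can be cited and reused verbatim.
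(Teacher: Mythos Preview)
Your proposal is correct and is exactly the approach the paper adopts: the paper does not give its own argument but defers to the TERM Hessian identity of~\cite{li2021tilted}, which is precisely the decomposition $\nabla^2 \tilde{R}_{n,k} = \mathbb{E}_{p}[\nabla^2 l] + \lambda\,\mathrm{Cov}_{p}(\nabla l)$ you derive and bound. One cosmetic remark: from Assumption~1 you obtain $\succeq \beta_{\min}{\bf I}$ rather than the strict ``$>$'' appearing in the lemma statement, but this is a wording issue in the paper, not in your argument.
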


Now, we first show the connection between strong convexity and PL inequality and then show that the two-level $(\tau, \lambda)$-titled client loss $\tilde{R}_{n}(\tau,\lambda;{\bf v}_n)$ and the local objective $L_n({\bf v}_n, {\bf w})$ are also smooth and strongly convex.

\begin{lemma}[Strong convexity implies PL inequality]
\label{lem:cvx_LP}
If function $f$ is $\mu$-strongly convex, it satisfies PL inequality with $\mu$. 
\vspace{-4mm}
\end{lemma}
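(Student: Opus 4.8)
The plan is to derive the PL inequality directly from the quadratic lower bound afforded by strong convexity, by minimizing that bound over one of its arguments. Recall that $f$ being $\mu$-strongly convex (and differentiable) means that for all $x, y$,
\begin{align}
f(y) \geq f(x) + \langle \nabla f(x), y - x\rangle + \frac{\mu}{2}\|y - x\|^2. \nonumber
\end{align}
Write $f^* = \min_{x} f(x)$; this minimum is attained because a $\mu$-strongly convex function with $\mu>0$ is coercive. The goal is to show $\tfrac12\|\nabla f(x)\|^2 \geq \mu\,(f(x) - f^*)$ for every $x$, which is exactly the PL inequality with parameter $\mu$ (equivalently $\|\nabla f(x)\|^2 \geq 2\mu\,(f(x)-f^*)$).

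First I would fix $x$ and minimize both sides of the displayed inequality over $y$. The left-hand side has infimum $f^*$. The right-hand side is a strongly convex quadratic in $y$; setting its gradient to zero gives the minimizer $y = x - \tfrac{1}{\mu}\nabla f(x)$, and substituting it back yields the minimum value $f(x) - \tfrac{1}{2\mu}\|\nabla f(x)\|^2$. Hence
\begin{align}
f^* \geq f(x) - \frac{1}{2\mu}\|\nabla f(x)\|^2. \nonumber
\end{align}
Rearranging gives $\tfrac12\|\nabla f(x)\|^2 \geq \mu\,(f(x) - f^*)$, and since $x$ was arbitrary this is precisely the statement that $f$ satisfies the PL inequality with constant $\mu$.

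There is essentially no substantive obstacle here, as this is a classical fact; the proof reduces to the one-line quadratic minimization above. The only points requiring brief care are: (i) that a minimizer of $f$ exists, so $f^*$ is a genuine minimum — this follows from strong convexity; and (ii) that $\mu>0$, so dividing by $\mu$ in the minimization step is legitimate. If one wishes to avoid invoking existence of a minimizer, the computation can be carried out with the infimum of $f$ in place of $f^*$, and the final bound holds verbatim.
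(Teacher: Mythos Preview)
Your proof is correct and follows essentially the same approach as the paper: fix $x$, minimize both sides of the strong-convexity inequality over $y$, compute the quadratic minimizer $y = x - \tfrac{1}{\mu}\nabla f(x)$, and rearrange. Your version is in fact slightly more careful, since you note the existence of the minimizer and the need for $\mu>0$, points the paper's proof leaves implicit.
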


\begin{lemma}
\vspace{-2mm}
[Smoothness of the 
$(\tau,\lambda)$-tilted 
client loss $\tilde{R}_{n}(\tau,\lambda;{\bf v}_n)$ 
and local objective $L_n({\bf v}_n, {\bf w})$ for a given ${\bf w}$
]
\label{lem:globalsmooth}
Under Assumption 1 and based on Lemma~\ref{lem:localsmooth}, the two-level tilted client loss $\tilde{R}_{n}(\tau,\lambda;{\bf v}_n) = \frac{1}{\tau} \log \big(\frac{1}{|D_n|}  \sum_{D_{n,k} \in [D_n]} |D_{n,k}| e^{\tau \cdot  \tilde{R}_{n,k}(\lambda;{\bf v}_n)} \big)$ 
is smooth in the vicinity of the optimal local client model ${\bf v}_n^* (\tau,\lambda)$, where ${\bf v}_n^*(\tau,\lambda) \in \arg\min_{{\bf v}_n}
\tilde{R}_{n}(\tau,\lambda; {\bf v}_n)
$. 
Moreover, the local objective 
$L_n({\bf v}_n, {\bf w})$ for any given ${\bf w}$ is also smooth.
\vspace{-2mm}
\end{lemma}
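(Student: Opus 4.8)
The plan is to differentiate the nested log-sum-exp twice and control the resulting Hessian using the properties of the class-wise losses already established. Write $g_k({\bf v}_n) := \tilde{R}_{n,k}(\lambda;{\bf v}_n)$ and $a_k := |D_{n,k}|/|D_n|$, so that $\sum_k a_k = 1$ and $\tilde{R}_n(\tau,\lambda;{\bf v}_n) = \frac{1}{\tau}\log\big(\sum_k a_k e^{\tau g_k({\bf v}_n)}\big)$. By Lemma~\ref{lem:localsmooth} each $g_k$ is twice differentiable with bounded Hessian on a neighborhood $U_k$ of its minimizer, and by Lemma~\ref{lem:localstrongcvx}, when $\lambda>0$, $\nabla^2 g_k \succeq \beta_{\min}{\bf I}$. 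First I would record the first two derivatives. Introducing the tilted weights $p_k := a_k e^{\tau g_k}/\sum_j a_j e^{\tau g_j}$ (a probability vector over classes), a chain-rule computation gives $\nabla \tilde{R}_n = \sum_k p_k \nabla g_k$ and
\begin{align}
\nabla^2 \tilde{R}_n(\tau,\lambda;{\bf v}_n) = \sum_k p_k \nabla^2 g_k + \tau\Big(\sum_k p_k \nabla g_k \nabla g_k^\top - \big(\textstyle\sum_k p_k \nabla g_k\big)\big(\sum_k p_k \nabla g_k\big)^\top\Big),
\end{align}
where the bracketed term is the $p$-weighted covariance matrix of the class gradients $\{\nabla g_k\}$ and is therefore positive semidefinite.

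Next I would bound this Hessian on a suitable neighborhood of ${\bf v}_n^*(\tau,\lambda)$. On the set $U := B \cap \bigcap_k U_k$, where $B$ is a bounded ball centered at ${\bf v}_n^*(\tau,\lambda)$, each $\nabla^2 g_k \preceq L_k {\bf I}$ for some finite $L_k$ by Lemma~\ref{lem:localsmooth}, and, since each $\nabla g_k$ is continuous, $\|\nabla g_k\| \le G$ on $U$ for some finite $G$; hence the covariance term is $\preceq G^2 {\bf I}$. Combining, $\nabla^2 \tilde{R}_n \preceq (\max_k L_k + |\tau| G^2){\bf I}$ on $U$ (for $\tau<0$ the covariance term only makes the bound tighter), which is exactly smoothness of $\tilde{R}_n$ in the vicinity of its optimum. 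I would also note that for $\tau,\lambda>0$ the lower bound $\nabla^2 \tilde{R}_n \succeq \sum_k p_k \nabla^2 g_k \succeq \beta_{\min}{\bf I}$ gives strong convexity as a byproduct, which will be needed in the convergence proof downstream. Finally, for the local objective $L_n({\bf v}_n,{\bf w}) = \tilde{R}_n(\tau,\lambda;{\bf v}_n) + \frac{\mu}{2}\|{\bf v}_n - {\bf w}\|^2$, the regularizer contributes the constant Hessian $\mu {\bf I}$, so $\nabla^2 L_n = \nabla^2 \tilde{R}_n + \mu {\bf I} \preceq (\max_k L_k + |\tau|G^2 + \mu){\bf I}$ on $U$, establishing smoothness of $L_n$ for any fixed ${\bf w}$.

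The main obstacle is the covariance/``vicinity'' issue: the $g_k$ have bounded Hessians only locally (Assumption~1 bounds $\nabla^2 l$ but not $\nabla l$ globally), so the positive-$\tau$ covariance term can only be controlled after restricting to a bounded neighborhood on which the class gradients are bounded --- this is precisely why the statement is phrased ``in the vicinity of the optimal local client model.'' A minor technical point is that Lemma~\ref{lem:localsmooth} provides a separate neighborhood $U_k$ per class, so I must intersect them (and shrink so that ${\bf v}_n^*(\tau,\lambda)$ is interior); this is harmless since a finite intersection of open neighborhoods of the two-level optimum is again such a neighborhood. Beyond that, the argument is just the single-level TERM smoothness estimate of~\cite{li2021tilted} applied twice: once to pass from $l$ to $g_k=\tilde{R}_{n,k}$, and a second time to pass from the $g_k$ to $\tilde{R}_n$.
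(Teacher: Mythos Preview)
Your proposal is correct and follows essentially the same approach as the paper: both argue that the outer tilt is just the TERM construction applied to the inner tilted losses $\tilde{R}_{n,k}$, so the single-level smoothness estimate from~\cite{li2021tilted} (Lemma~\ref{lem:localsmooth}) can be invoked a second time, and then adding the quadratic regularizer preserves smoothness. The paper's own proof is only a two-sentence pointer to this recursion, whereas you actually carry out the Hessian computation and isolate the tilted-covariance term, but the underlying idea is identical.
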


\begin{lemma}
[Strong convexity of the client loss $\tilde{R}_{n}(\tau,\lambda;{\bf v}_n)$ and local objective $L_n({\bf v}_n, {\bf w})$ for a given ${\bf w}$ with positive $\tau$ and $\lambda$] 
\vspace{-1mm}
\label{lem:globalstrongcvx}
Under Assumption 1 
and Lemma~\ref{lem:localstrongcvx}, for any $\tau,\lambda>0$, the 
client loss $\tilde{R}_{n}(\tau,\lambda;{\bf v}_n)$ and local objective $L_n({\bf v}_n, {\bf w})$ are strong convex functions of ${\bf v}_n$.
For $\tau,\lambda>0$, $\nabla_{{\bf v}_n}^2 \tilde{R}_{n}(\lambda,\tau;{\bf v}_n) > \beta_{min} {\bf I},\nabla_{{\bf v}_n}^2 {L}_{n}({\bf v}_n, {\bf w}) > (\beta_{min} + \mu){\bf I}$. 
\vspace{-2mm}
\end{lemma}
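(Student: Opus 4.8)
The plan is to treat the two-level tilted client loss as a \emph{one-level} $\tau$-tilted loss whose ``samples'' are the class-wise losses $\tilde R_{n,k}(\lambda;\cdot)$, and then replay, one level up, the Hessian argument already used for one-level TERM in~\cite{li2021tilted}. First I would set $p_k := |D_{n,k}|/|D_n|$ (so $\sum_k p_k = 1$) and $g_k({\bf v}_n) := \tilde R_{n,k}(\lambda;{\bf v}_n)$, and rewrite
\[
\tilde R_n(\tau,\lambda;{\bf v}_n) \;=\; \frac{1}{\tau}\log\Big(\textstyle\sum_k p_k\, e^{\tau g_k({\bf v}_n)}\Big),
\]
which is exactly a $\tau$-tilted loss over $\{g_k\}$ with weights $\{p_k\}$. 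By Assumption~1 the per-sample loss $l$ is twice differentiable, hence so is each $g_k$ (the $\lambda$-tilted log-sum-exp of such losses, cf.\ Lemma~\ref{lem:localsmooth}) and hence so is $\tilde R_n$; and by Lemma~\ref{lem:localstrongcvx}, for $\lambda>0$ each $g_k$ satisfies $\nabla^2_{{\bf v}_n} g_k > \beta_{min}{\bf I}$.

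Next I would differentiate the outer composition twice. Introducing the tilted weights $\tilde p_k := p_k e^{\tau g_k} / \sum_j p_j e^{\tau g_j}$ (a probability vector over classes) and the tilted mean gradient $\bar g := \sum_k \tilde p_k \nabla g_k$, a direct computation yields $\nabla_{{\bf v}_n}\tilde R_n = \bar g$ and
\[
\nabla^2_{{\bf v}_n}\tilde R_n \;=\; \sum_k \tilde p_k\,\nabla^2 g_k \;+\; \tau\sum_k \tilde p_k\,(\nabla g_k{-}\bar g)(\nabla g_k{-}\bar g)^{\!\top}.
\]
I would then observe that the second term is $\tau$ times a weighted covariance matrix of the gradients $\{\nabla g_k\}$, hence positive semidefinite once $\tau>0$, while the first term is a convex combination of the Hessians $\nabla^2 g_k$, each $>\beta_{min}{\bf I}$, so it is itself $>\beta_{min}{\bf I}$. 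Since adding a positive-semidefinite matrix preserves the strict lower bound, this gives $\nabla^2_{{\bf v}_n}\tilde R_n(\tau,\lambda;{\bf v}_n) > \beta_{min}{\bf I}$, i.e.\ $\beta_{min}$-strong convexity of the client loss in ${\bf v}_n$.

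Finally, for the local objective $L_n({\bf v}_n,{\bf w}) = \tilde R_n(\tau,\lambda;{\bf v}_n) + \tfrac{\mu}{2}\|{\bf v}_n - {\bf w}\|^2$ with ${\bf w}$ held fixed, the quadratic regularizer contributes $\mu{\bf I}$ to the Hessian, so $\nabla^2_{{\bf v}_n} L_n({\bf v}_n,{\bf w}) > (\beta_{min}+\mu){\bf I}$, which is the claimed bound. The main obstacle will be deriving the Hessian identity for the nested log-sum-exp cleanly and verifying that its gradient-variance term is positive semidefinite --- this is what pins down the sign requirement $\tau>0$, while the requirement $\lambda>0$ enters only through Lemma~\ref{lem:localstrongcvx}; everything else is bookkeeping, structurally identical to the single-level computation in~\cite{li2021tilted}.
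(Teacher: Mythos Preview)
Your proposal is correct and is essentially the same approach the paper takes: both treat $\tilde R_n(\tau,\lambda;\cdot)$ as a one-level $\tau$-tilt of the already $\beta_{\min}$-strongly-convex class-wise losses $\tilde R_{n,k}(\lambda;\cdot)$ and then replay the one-level TERM Hessian argument from~\cite{li2021tilted}, after which the $\mu{\bf I}$ from the proximal term is tacked on. The paper's proof merely sketches this (``the main idea follows the proof for Lemma~\ref{lem:localstrongcvx}''), whereas you spell out the Hessian identity and the PSD covariance term explicitly, but the underlying argument is identical.
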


Next, we will first introduce the following theorem and then have the lemma that 
shows the convergence result when either client model ${\bf v}_n$ or global model ${\bf w}$ is fixed.

\begin{theorem}[Karimi et al.\cite{karimi2016linear}]
\label{thm:karimi}
\vspace{-2mm}
For an unconstrained optimization problem $\arg\min_{x} f(x)$, where $f$ is $L$-smooth 
and satisfies the PL inequality with constant $\mu$. 
Then the gradient descent method with a step-size of $1/L$, i.e., $x^{t+1} = x^t - \frac{1}{L} \nabla f(x^t)$, has a global linear convergence rate, i.e., 
$f(x^t) - f(x^*) \leq (1-\frac{\mu}{L})^t (f(x^0) - f(x^*))$.
\vspace{-2mm}
\end{theorem}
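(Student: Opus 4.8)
The plan is to chain together two ingredients, each a one-line consequence of the hypotheses: the descent lemma implied by $L$-smoothness, and the PL inequality with constant $\mu$. This is the standard linear-convergence argument for PL functions (as in Karimi et al.~\cite{karimi2016linear}); the only work is bookkeeping with the prescribed step size $1/L$.

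First I would invoke $L$-smoothness in the quadratic-upper-bound form $f(y) \le f(x) + \langle \nabla f(x), y-x\rangle + \tfrac{L}{2}\|y-x\|^2$ and substitute $x = x^t$, $y = x^{t+1} = x^t - \tfrac{1}{L}\nabla f(x^t)$. The inner-product term contributes $-\tfrac{1}{L}\|\nabla f(x^t)\|^2$ and the quadratic term contributes $+\tfrac{1}{2L}\|\nabla f(x^t)\|^2$, so they combine to the sufficient-decrease estimate
\[
f(x^{t+1}) \le f(x^t) - \frac{1}{2L}\|\nabla f(x^t)\|^2 .
\]
This is exactly where the choice of step size $1/L$ is used: a larger step would not guarantee the quadratic term is absorbed.

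Second I would apply the PL inequality $\tfrac12 \|\nabla f(x)\|^2 \ge \mu\bigl(f(x) - f(x^*)\bigr)$ at $x = x^t$ to lower-bound the gradient-norm term above, obtaining $f(x^{t+1}) \le f(x^t) - \tfrac{\mu}{L}\bigl(f(x^t) - f(x^*)\bigr)$, i.e., after subtracting $f(x^*)$,
\[
f(x^{t+1}) - f(x^*) \le \Bigl(1 - \frac{\mu}{L}\Bigr)\bigl(f(x^t) - f(x^*)\bigr).
\]
Unrolling this recursion from $t$ down to $0$ yields the claimed bound $f(x^t) - f(x^*) \le (1-\mu/L)^t\,(f(x^0) - f(x^*))$. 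One sanity check is needed: $1 - \mu/L \in [0,1)$, which holds because an $L$-smooth function has PL constant at most $L$ (otherwise the sufficient-decrease step would drive $f$ strictly below $f(x^*)$), so the contraction factor is a genuine number in $[0,1)$.

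There is no genuinely hard step here; if anything, the main thing to be careful about is that $f(x^*) := \min_x f(x)$ is finite (equivalently, PL already forces a nonempty solution set and guarantees the stated value is well defined and attained), so $f(x^t) - f(x^*)$ is a well-defined nonnegative quantity throughout — which is built into the hypotheses. The purpose of isolating this theorem is that, once Lemmas~\ref{lem:globalsmooth} and~\ref{lem:globalstrongcvx} establish that $\tilde R_n(\tau,\lambda;{\bf v}_n)$ and $L_n({\bf v}_n,{\bf w})$ are smooth and strongly convex for $\tau,\lambda>0$, Lemma~\ref{lem:cvx_LP} upgrades strong convexity to PL, and this theorem then supplies the linear convergence of the local updates that feeds into the overall FedTilt convergence bound.
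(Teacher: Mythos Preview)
Your proof is correct and is exactly the standard argument from Karimi et al.: descent lemma from $L$-smoothness with step size $1/L$, then PL to turn the gradient-norm decrease into a contraction on $f(x^t)-f(x^*)$, then unroll. The paper does not supply its own proof of this theorem---it is stated as a cited result and used as a black box in the proof of Lemma~\ref{lem:cvg_onevariable}---so there is nothing further to compare.
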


\begin{lemma}
\label{lem:cvg_onevariable}
Under Assumption 1
and based on Lemmas~\ref{lem:cvx_LP}-\ref{lem:globalstrongcvx} and Theorem~\ref{thm:karimi}, 
we have: 1) For any given ${\bf w}$, 
$\exists B_1, B_2, B_3 < +\infty$ that do not depend on $\tau$ and $\lambda$ such that 
$\forall \tau,\lambda>0$, after $t$ iterations of gradient descent with the step size $\alpha=\frac{1}{B_1+\tau B_2+\lambda B_3}$, 
$L_{n}({\bf v}_n^t, {\bf w}) - L_{n}({\bf v}_n^*, {\bf w}) \leq \big(1 - \frac{\beta_{\min} + \mu}{B_1 + \tau B_2 + \lambda B_3} \big)^{t} (L_{n}({\bf v}_n^0, {\bf w})- L_{n}({\bf v}_n^*, {\bf w}))$, where ${\bf v}_n^t$ means the updated client model ${\bf v}_n$ in the $t$-th iteration.
2) For any given ${\bf v}_n$, 
$\exists C_1, C_2, C_3 < +\infty$ that do not depend on $\tau$ and $\lambda$ such that for any $\tau,\lambda>0$, after $t$ iterations of gradient descent with the step size $\beta=\frac{1}{C_1+\tau C_2+\lambda C_3}$, 
$L_{n}({\bf v}_n, {\bf w}^t) - L_{n}({\bf v}_n, {\bf w}^*) \leq \big(1 - \frac{\mu}{C_1 + \tau C_2 + \lambda C_3} \big)^{t} (L_{n}({\bf v}_n, {\bf w}^0)- L_{n}({\bf v}_n, {\bf w}^*))$, where ${\bf w}^t$ means the updated global model ${\bf w}$ in the $t$-th iteration.
\vspace{-2mm}
\end{lemma}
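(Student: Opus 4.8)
The plan is to read both claims as two applications of Theorem~\ref{thm:karimi}, one per block variable, after supplying that theorem's three inputs with explicit constants: an upper bound on the smoothness constant, the PL constant, and the matching step size. Note that Karimi's descent argument only uses an \emph{upper} bound $L'$ on the true smoothness: with step size $1/L'$ the descent lemma still gives $f(x^{t+1})-f(x^*)\le(1-\mu_{\mathrm{PL}}/L')(f(x^t)-f(x^*))$, so it suffices to exhibit smoothness constants of the stated affine form $B_1+\tau B_2+\lambda B_3$ (resp.\ $C_1+\tau C_2+\lambda C_3$).

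For part~1 the block variable is ${\bf v}_n$ with ${\bf w}$ frozen. Lemma~\ref{lem:globalstrongcvx} already gives that $L_n(\cdot,{\bf w})$ is $(\beta_{\min}+\mu)$-strongly convex, hence by Lemma~\ref{lem:cvx_LP} it satisfies the PL inequality with constant $\mu_{\mathrm{PL}}=\beta_{\min}+\mu$, which is the numerator in the claimed rate. The quantitative work is to turn the qualitative ``smooth'' of Lemma~\ref{lem:globalsmooth} into a smoothness constant \emph{affine in $\tau$ and $\lambda$ with $\tau,\lambda$-free coefficients}. I would write the class-wise tilted Hessian in the standard TERM form $\nabla^2\tilde R_{n,k}(\lambda;{\bf v}_n)=\mathbb{E}_{\tilde p_\lambda}[\nabla^2 l]+\lambda\,\mathrm{Cov}_{\tilde p_\lambda}[\nabla l]$, where $\tilde p_\lambda(z)\propto e^{\lambda l(z;{\bf v}_n)}$ is the $\lambda$-tilted reweighting of $D_{n,k}$; Assumption~1 bounds the first term by $\beta_{\max}{\bf I}$, and if $\|\nabla l\|\le G$ on the region of interest then $\mathrm{Cov}_{\tilde p_\lambda}[\nabla l]\preceq \mathbb{E}_{\tilde p_\lambda}[\nabla l\,\nabla l^\top]\preceq G^2{\bf I}$, so $\nabla^2\tilde R_{n,k}\preceq(\beta_{\max}+\lambda G^2){\bf I}$. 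Applying the same identity one level up, $\nabla^2\tilde R_n(\tau,\lambda;{\bf v}_n)=\mathbb{E}_{\tilde q_\tau}[\nabla^2\tilde R_{n,k}]+\tau\,\mathrm{Cov}_{\tilde q_\tau}[\nabla\tilde R_{n,k}]$ with $\tilde q_\tau$ the $\tau$-tilted reweighting over classes; since $\nabla\tilde R_{n,k}=\mathbb{E}_{\tilde p_\lambda}[\nabla l]$ also has norm $\le G$, the outer covariance term is $\preceq G^2{\bf I}$, giving $\nabla^2\tilde R_n\preceq(\beta_{\max}+\lambda G^2+\tau G^2){\bf I}$. Adding the $\mu{\bf I}$ from the proximal term yields $\nabla^2 L_n(\cdot,{\bf w})\preceq(B_1+\tau B_2+\lambda B_3){\bf I}$ with $B_1=\beta_{\max}+\mu$ and $B_2=B_3=G^2$, none depending on $\tau,\lambda$. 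Feeding $L'=B_1+\tau B_2+\lambda B_3$ and $\mu_{\mathrm{PL}}=\beta_{\min}+\mu$ into Theorem~\ref{thm:karimi} produces exactly the claimed geometric rate with step size $\alpha=1/(B_1+\tau B_2+\lambda B_3)$.

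For part~2 the block variable is ${\bf w}$ with ${\bf v}_n$ frozen, and here everything collapses: $\tilde R_n(\tau,\lambda;{\bf v}_n)$ is constant in ${\bf w}$, so $L_n({\bf v}_n,\cdot)=\tfrac{\mu}{2}\|{\bf v}_n-{\bf w}\|^2+\mathrm{const}$, whose Hessian in ${\bf w}$ is $\mu{\bf I}$; it is therefore $\mu$-smooth and $\mu$-strongly convex, and by Lemma~\ref{lem:cvx_LP} satisfies PL with constant $\mu$. Applying Theorem~\ref{thm:karimi} with any valid upper bound $L'=C_1+\tau C_2+\lambda C_3$ on this smoothness (any $C_1\ge\mu$, $C_2,C_3\ge0$, chosen so the bound is written in the same normalized form as part~1) gives the stated rate with step size $\beta=1/(C_1+\tau C_2+\lambda C_3)$.

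The one genuinely delicate point is the gradient bound $G$ in part~1: the Hessian identities are exact, but bounding $\mathrm{Cov}_{\tilde p_\lambda}[\nabla l]$ and $\mathrm{Cov}_{\tilde q_\tau}[\nabla\tilde R_{n,k}]$ in \emph{operator} norm (not merely trace) by $G^2$ requires $\|\nabla l\|\le G$ on the region actually traversed by gradient descent. This is precisely the ``vicinity of the optimal model'' regime in which Lemmas~\ref{lem:localsmooth} and~\ref{lem:globalsmooth} are already stated, so I would either inherit that local qualifier, or, since $L_n(\cdot,{\bf w})$ is strongly convex with a finite minimizer and gradient descent is monotone, restrict to the compact sublevel set $\{{\bf v}_n: L_n({\bf v}_n,{\bf w})\le L_n({\bf v}_n^0,{\bf w})\}$, on which the continuous map ${\bf v}_n\mapsto\|\nabla l\|$ is bounded. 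With $G$ pinned down this way, everything else — the two PL inequalities via Lemma~\ref{lem:cvx_LP} and the substitutions into Theorem~\ref{thm:karimi} — is routine bookkeeping.
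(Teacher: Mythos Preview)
Your proposal is correct and follows the same high-level template as the paper: for each block variable, establish strong convexity (hence PL via Lemma~\ref{lem:cvx_LP}), establish smoothness with a constant of the affine form $B_1+\tau B_2+\lambda B_3$, then invoke Theorem~\ref{thm:karimi}. The paper's proof is terser: for part~1 it simply cites Lemmas~\ref{lem:globalsmooth}--\ref{lem:globalstrongcvx} to assert the existence of $B_1,B_2,B_3$ without deriving them, whereas you actually compute the two-level tilted Hessian via the TERM identity $\nabla^2\tilde R=\mathbb{E}_{\tilde p}[\nabla^2 l]+t\,\mathrm{Cov}_{\tilde p}[\nabla l]$ and extract explicit coefficients $B_1=\beta_{\max}+\mu$, $B_2=B_3=G^2$. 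Your treatment of part~2 is also sharper: you observe that with ${\bf v}_n$ frozen, $L_n({\bf v}_n,\cdot)$ is literally the quadratic $\tfrac{\mu}{2}\|{\bf v}_n-{\bf w}\|^2+\mathrm{const}$, so smoothness and strong convexity in ${\bf w}$ are both exactly $\mu$ and the $(C_1+\tau C_2+\lambda C_3)$ form is just cosmetic; the paper instead re-cites Lemmas~\ref{lem:globalsmooth}--\ref{lem:globalstrongcvx}, which as stated concern the ${\bf v}_n$-variable. Your honest flagging of the gradient bound $G$ (needed to control the tilted covariance terms) and your proposed fix via the strongly convex sublevel set is exactly the kind of localization the paper's ``vicinity of the optimal model'' phrasing in Lemmas~\ref{lem:localsmooth} and~\ref{lem:globalsmooth} is gesturing at but never spells out.
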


Finally, we show the convergence result of FedTilt. We first state two assumptions also used in Ditto~\cite{li2021ditto}. 

\noindent {\bf Assumption 2.}
The global model converges at rate $g(t)$. $\exists g(t)$ s.t. $\lim_{t\rightarrow \infty } g(t)=0, \|{\bf w}^t - {\bf w}^*\|^2 \leq g(t)$. 
E.g., the global model for FedAvg converges with rate $O(1/t)$~\cite{li2019convergence}.

\noindent {\bf Assumption 3.}
Distance between the optimal (initial) client models (i.e., ${\bf v}_n^*, {\bf v}_n^0$) and the optimal (initial) global model (i.e., ${\bf w}^*, {\bf w}^0$) are bounded and ${\bf w}^t, \forall t$ is also norm-bounded.
\begin{theorem}[Convergence result on the client models]
\label{thm:clientcvg}
\vspace{-2mm}
Under Lemma~\ref{lem:cvg_onevariable} and Assumptions 2\&3, 
any $\tau,\lambda>0$, after $t$ iterations of gradient descent with step size $\alpha$ and $\beta$, 
$L_{n}({\bf v}_n^t, {\bf w}^t) - L_{n}({\bf v}_n^*, {\bf w}^*) \leq (D+\frac{\mu}{2} g(t)) \Lambda^{t}  +  E \Gamma^t$, where $\Lambda=(1-\frac{\beta_{\min}+\mu}{B_1+\tau B_2 + \lambda C_3})$, $\Gamma = (1-\frac{\mu}{C_1+\tau C_2 + \lambda C_3})$ and 
$D$ and $E$ are constants defined hereafter. 
\vspace{-2mm}
\end{theorem}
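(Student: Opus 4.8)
The plan is to decompose the two-variable suboptimality into (i) a term that measures how far the current personalized model ${\bf v}_n^t$ is from optimal for the \emph{fixed} limiting objective $L_n(\cdot,{\bf w}^*)$, and (ii) a term that measures the drift caused by the global model ${\bf w}^t$ not yet having reached ${\bf w}^*$. Concretely, I would write
\begin{align}
L_n({\bf v}_n^t,{\bf w}^t)-L_n({\bf v}_n^*,{\bf w}^*)
&=\big[L_n({\bf v}_n^t,{\bf w}^t)-L_n({\bf v}_n^t,{\bf w}^*)\big]\nonumber\\
&\quad+\big[L_n({\bf v}_n^t,{\bf w}^*)-L_n({\bf v}_n^*,{\bf w}^*)\big].\nonumber
\end{align}
The first bracket involves only the regularizer $\tfrac{\mu}{2}\|{\bf v}_n-{\bf w}\|^2$ (since $\tilde R_n$ does not depend on ${\bf w}$), so it equals $\tfrac{\mu}{2}\big(\|{\bf v}_n^t-{\bf w}^t\|^2-\|{\bf v}_n^t-{\bf w}^*\|^2\big)$; I would bound this by the triangle expansion $\|{\bf v}_n^t-{\bf w}^t\|^2\le\|{\bf v}_n^t-{\bf w}^*\|^2+2\|{\bf v}_n^t-{\bf w}^*\|\,\|{\bf w}^t-{\bf w}^*\|+\|{\bf w}^t-{\bf w}^*\|^2$, using Assumption~3 to bound $\|{\bf v}_n^t-{\bf w}^*\|$ by a finite constant and Assumption~2 to bound $\|{\bf w}^t-{\bf w}^*\|^2\le g(t)$. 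This is the source of the $\tfrac{\mu}{2}g(t)$ contribution that multiplies $\Lambda^t$ in the final bound.

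For the second bracket I would use that, by Lemmas~\ref{lem:globalsmooth} and~\ref{lem:globalstrongcvx}, for $\tau,\lambda>0$ the objective $L_n(\cdot,{\bf w}^*)$ is $L$-smooth with $L=B_1+\tau B_2+\lambda B_3$ and $(\beta_{\min}+\mu)$-strongly convex, hence satisfies the PL inequality (Lemma~\ref{lem:cvx_LP}); Theorem~\ref{thm:karimi} then supplies the per-step contraction factor $\Lambda=1-\tfrac{\beta_{\min}+\mu}{B_1+\tau B_2+\lambda B_3}$. The key subtlety is that the step producing ${\bf v}_n^t$ is a gradient step on $L_n(\cdot,{\bf w}^{t-1})$, not on $L_n(\cdot,{\bf w}^*)$; I would write the true step as the ``ideal'' step on $L_n(\cdot,{\bf w}^*)$ plus a perturbation $\alpha\mu({\bf w}^{t-1}-{\bf w}^*)$ whose norm is $\le\alpha\mu\sqrt{g(t-1)}$, expand $L_n(\cdot,{\bf w}^*)$ via $L$-smoothness, and apply Young's inequality to the cross term, obtaining a one-round recursion $\Phi^t\le\Lambda\,\Phi^{t-1}+c\,g(t-1)$ for $\Phi^t:=L_n({\bf v}_n^t,{\bf w}^*)-L_n({\bf v}_n^*,{\bf w}^*)$ (any Young slack being absorbed into the constants, and into the generic bound $L_n({\bf v}_n,{\bf w}^t)-L_n({\bf v}_n,{\bf w}^*)$ of Lemma~\ref{lem:cvg_onevariable}(2), which contributes the $E\Gamma^t$ term with $\Gamma=1-\tfrac{\mu}{C_1+\tau C_2+\lambda C_3}$). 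Unrolling this recursion and collecting the $\Gamma$-governed ${\bf w}$-tracking error via Assumption~2 then yields the claimed shape, with the constants $D$ and $E$ collecting the initial gaps $L_n({\bf v}_n^0,{\bf w}^0)-L_n({\bf v}_n^*,{\bf w}^*)$ and the diameters of Assumption~3, all finite precisely because of that assumption.

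The final step is pure bookkeeping: add the two brackets and regroup into $(D+\tfrac{\mu}{2}g(t))\Lambda^t+E\Gamma^t$. I expect the main obstacle to be exactly the coupling of the two variables: ${\bf v}_n^t$ is optimized against a \emph{moving} target ${\bf w}^t$, so no single invocation of a linear-convergence bound suffices, and the delicate point is to show that the accumulated drift — which enters only through the $\tfrac{\mu}{2}\|\cdot\|^2$ regularizer and is therefore controlled by $g(t)$ — telescopes into the stated geometric-plus-$g(t)$ form rather than overwhelming the contraction, while keeping all constants independent of $\tau$ and $\lambda$ as required by Lemma~\ref{lem:cvg_onevariable}.
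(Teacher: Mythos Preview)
Your decomposition differs from the paper's: you split at ${\bf v}_n^t$,
\[
\big[L_n({\bf v}_n^t,{\bf w}^t)-L_n({\bf v}_n^t,{\bf w}^*)\big]+\big[L_n({\bf v}_n^t,{\bf w}^*)-L_n({\bf v}_n^*,{\bf w}^*)\big],
\]
whereas the paper splits at ${\bf v}_n^*$,
\[
\big[L_n({\bf v}_n^t,{\bf w}^t)-L_n({\bf v}_n^*,{\bf w}^t)\big]+\big[L_n({\bf v}_n^*,{\bf w}^t)-L_n({\bf v}_n^*,{\bf w}^*)\big].
\]
With the paper's split, part~(1) of Lemma~\ref{lem:cvg_onevariable} is applied \emph{directly} to the first bracket (fixed ${\bf w}={\bf w}^t$), giving $\Lambda^t\big(L_n({\bf v}_n^0,{\bf w}^t)-L_n({\bf v}_n^*,{\bf w}^t)\big)$; expanding this initial gap produces a $\tfrac{\mu}{2}\|{\bf w}^*-{\bf w}^t\|^2\le\tfrac{\mu}{2}g(t)$ term \emph{already multiplied by} $\Lambda^t$, which is where the $(D+\tfrac{\mu}{2}g(t))\Lambda^t$ shape actually comes from. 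Part~(2) of the same lemma is then applied directly to the second bracket (fixed ${\bf v}_n={\bf v}_n^*$) to get $E\,\Gamma^t$. There is no perturbation recursion and no unrolling.

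Your route is more careful about the moving-target issue (you rightly note that ${\bf v}_n^t$ is produced by steps on $L_n(\cdot,{\bf w}^{s})$, not on $L_n(\cdot,{\bf w}^*)$), and your one-round recursion $\Phi^t\le\Lambda\Phi^{t-1}+c\,g(t-1)$ is the honest way to handle that. But it does not land on the stated form: your first bracket contributes an $O(\sqrt{g(t)})+O(g(t))$ term \emph{without} a $\Lambda^t$ factor, and the unrolled recursion yields a convolution $\sum_s\Lambda^{t-1-s}g(s)$ rather than $g(t)\Lambda^t$. In particular, your claim that the first bracket ``is the source of the $\tfrac{\mu}{2}g(t)$ contribution that multiplies $\Lambda^t$'' is mistaken under your own split. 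The paper simply invokes Lemma~\ref{lem:cvg_onevariable} as a black box on each bracket, treating ${\bf w}^t$ (resp.\ ${\bf v}_n^*$) as frozen for the whole trajectory; this is what produces exactly the advertised $(D+\tfrac{\mu}{2}g(t))\Lambda^t+E\,\Gamma^t$, at the cost of glossing over the very coupling you try to track.
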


{\bf Theorem}~\ref{thm:clientcvg} indicates that solving the tilted ERM local objective to a local optimum using the gradient-based method in Algorithm~\ref{app:Algorithm} is as efficient as traditional ERM objective.

\vspace{-2mm}
\section{Results}
\subsection{Evaluations on A Toy Example}
\label{sec:evaltoy}

This section explores the fairness and robustness of FedTilt
on a toy example, where we consider federated logistic regression for binary classification. 
 For simplicity,  we consider two clients and client data are sampled from Gaussian distributions. 
This example serves as motivating examples to the theoretical analysis of the framework. By default, we set 
$q=0$ and use \texttt{dist} as the Euclidean distance. Details of the setup and results (Figure ~\ref{fig:toyLogReg}) are in the Appendix. 

Our first experiment focuses on \emph{client fairness} with $\tau=1$ and $\lambda=1$. 
The two clients have very close (and high) test accuracy with different distributions---indicating the client fairness is achieved. In each client, we sample 100 data points from the both classes to form the training set and 20 data points each for testing (Figure ~\ref{fig:toyLogReg}).

Our second experiment focuses on \emph{both client fairness and client data fairness}. We sample {\bf 150} data points from the first distribution, but {\bf only 50} data points from the second distribution for training, and sample 30 and 10 data points respectively from the two distributions for testing. Two clients still achieve very close (and high) test accuracy, as well as high test accuracy per class when $\tau=100$, i.e., the boundaries can well separate the two classes, indicating client fairness and client data fairness are achieved with relatively larger positive $\tau$, which is consistent with Table~\ref{tbl:tiltsummary} (Figure ~\ref{fig:toyLogReg}).

Our third experiment shows FedTilt's performance on {\bf both client, client data fairness, and robustness.} Class 1 in each client has a high variance to induce outliers. We further generate outliers by adding random Gaussian noises (mean 0 and deviation 0.15) to $10\%$ of the samples from class 1. 
The same number of data points as in the second experiment was used. Results show FedTilt is robust to outliers and achieves both client fairness and client data fairness with a negative $\lambda$, e.g., $\lambda=-100$. That is, the two clients have close testing performance, well separate two classes' data, and the decision boundaries are not affected by the outliers---This is because a negative $\lambda$ can suppress the influence of outliers, as shown in Table~\ref{tbl:tiltsummary}. In contrast, the importance of outliers is magnified with a positive $\lambda$ (Figure ~\ref{fig:toyLogReg}).

\vspace{-2mm}
\subsection{Evaluations on Real Datasets}
\vspace{-1mm}
\label{sec:evalreal}
We evaluate FedTilt on three image datasets: MNIST, FashionMNIST (F-Mnist), and CIFAR10. More details of the experiment setup are included in the appendix.
We use three metrics: \textit{test accuracy}, \textit{client fairness} and \textit{client data fairness}. 

FedTilt is tested in two scenarios: one with clean data (Section~\ref{clean_data});
and the other scenario incorporates a certain fraction of outliers among the data (Section \ref{noisy_data}).

\subsubsection{Results on clean data} 
\label{clean_data}
Three metric results on the three clean datasets vs the tilts $\lambda$ and $\tau$. We have the following observations: 1) 
On MNIST and F-MNIST, a larger positive $\lambda$ and $\tau$ yields the highest test accuracy, the lowest standard deviation for client fairness and the lowest  ($\mu_\sigma$, $\sigma_\sigma$) value for client data fairness. 
Notice that client fairness and client data fairness often mutually enhance. For instance, on MNIST, setting $\tau$ to higher values also improves the contributions of $\lambda$. 
2) CIFAR10 is a more challenge dataset than MNIST and F-MNIST, meaning larger training losses, and we choose a smaller range of $\lambda$ and $\tau$ (i.e., $\lambda, \tau \in [-1,2]$). 
The difference is that, the best performance is now obtained when $\lambda=-0.1$. 
A possible reason may be CIFAR10 contains
``outlier" images---i.e., the images far from the true image distribution. We also test FedTilt with different number of clients selected per round and have similar conclusions
(Figure~\ref{mnist_20c} in appendix).

\begin{figure}[!t]
\vspace{-4mm}
    \subfloat[{Test Acc + Client fairness. }
    ]{\includegraphics[bb=0 0 1200 600, scale=0.098]{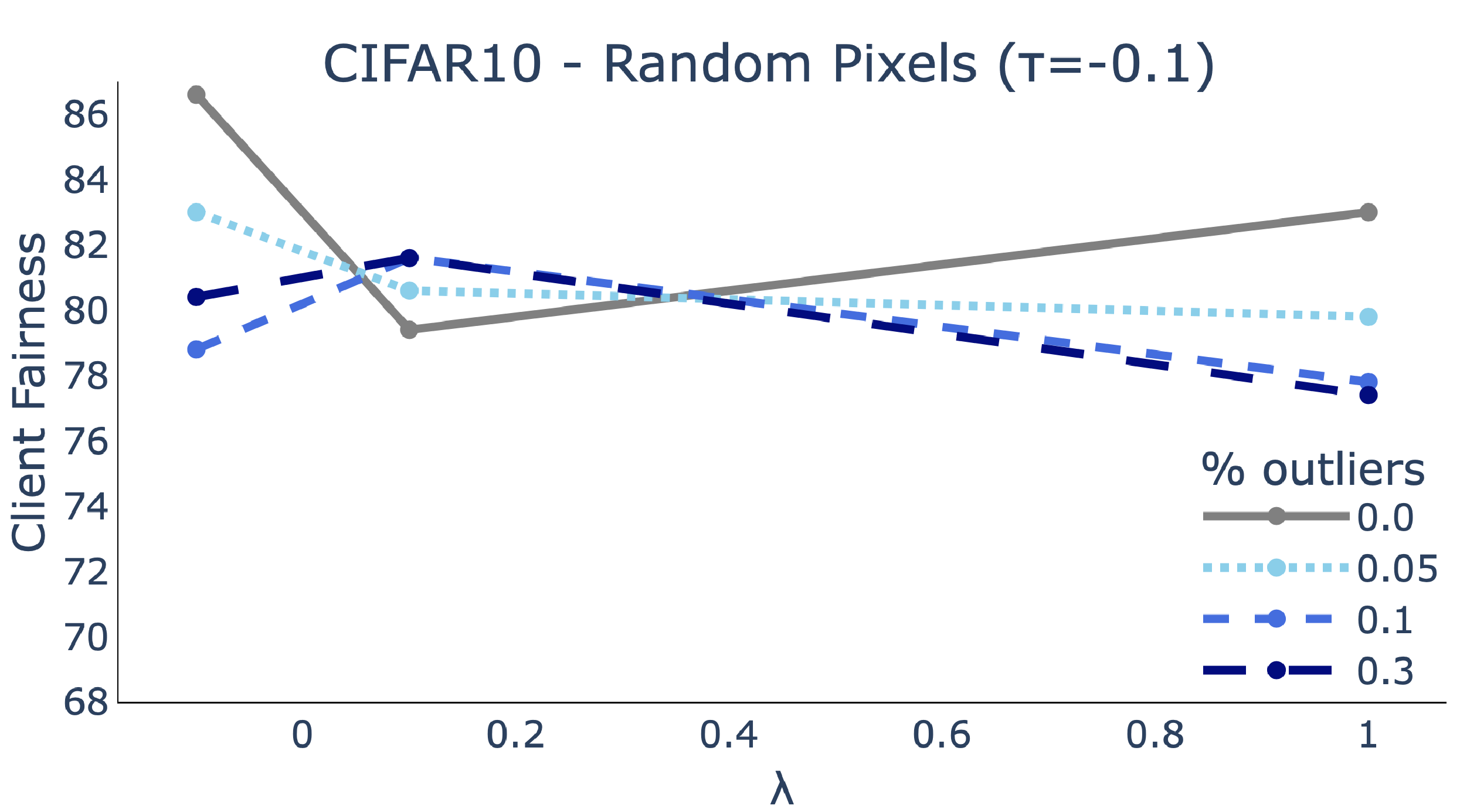}}
    \subfloat[{Client data fairness.  }
    ]{\includegraphics[bb=0 0 1200 600, scale=0.098]{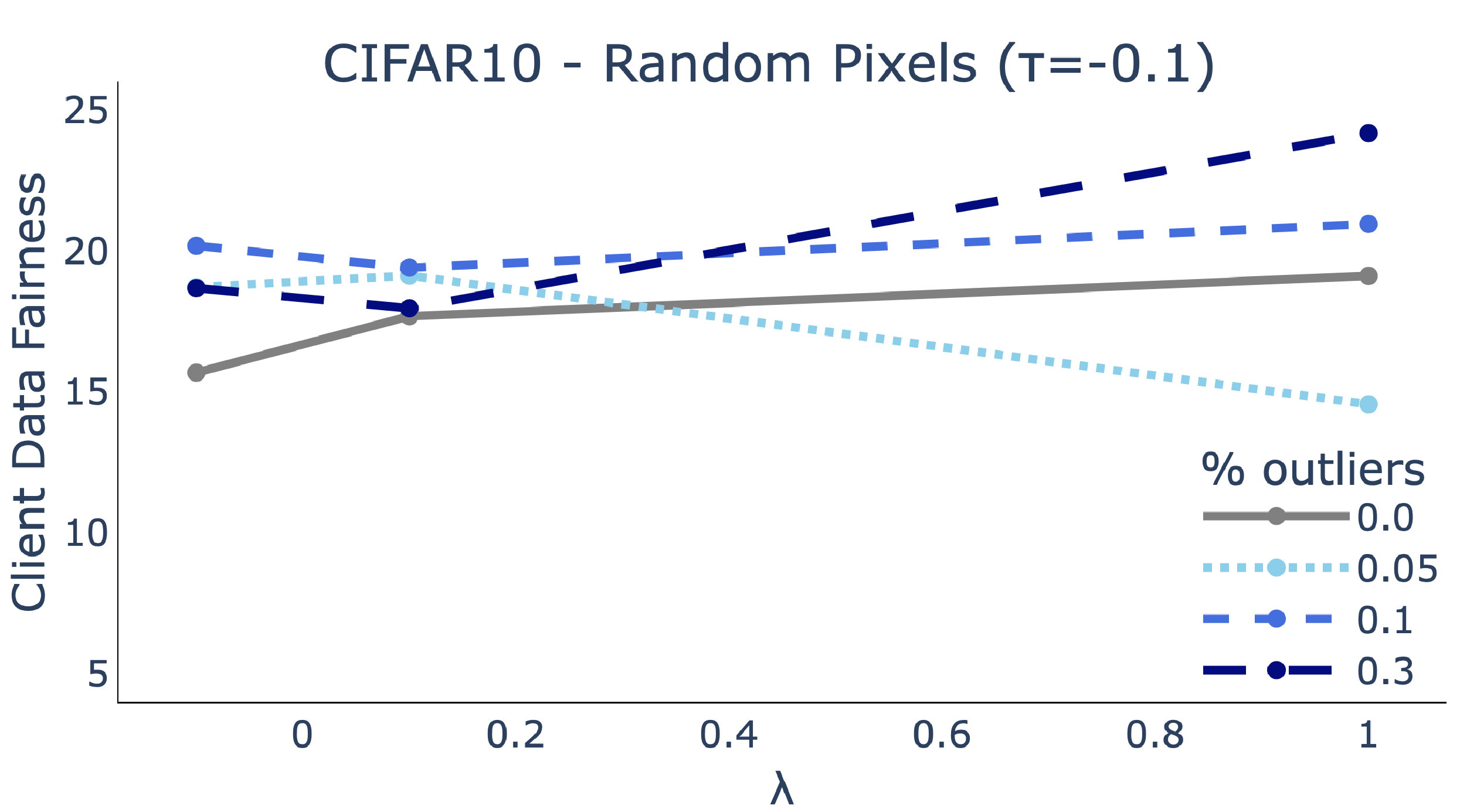}}
    \caption{ CIFAR10 results (c \& d)---persistent random corruptions. Better results obtained with $\lambda=-0.1|0.1$.  
    } \label{mnist_pixels}
    \vspace{-6mm}
\end{figure}

\subsubsection{Results on data with persistent outliers} 
\label{noisy_data}
The second scenario investigates FedTilt's ability to find robust solutions that reduce the effect of persistent outliers---\emph{we inject outliers per global round instead of only once}, to mimic real scenarios, as client data are collected dynamically, and outliers can appear at any time in training. 
We consider \emph{random corruptions}, where 
    30\% 
    pixels of 30\% 
    training samples are corrupted. 
    
Figure~\ref{mnist_pixels} 
 shows the results with persistent random corruptions 
with a fixed $\tau$ vs. $\lambda$. We see FedTilt is robust to persistent random corruptions---its performance is not affected. These results again demonstrate the flexibility and effectiveness of FedTilt in dealing with outliers. 
Figure~\ref{mnist_gaussian}-Figure~\ref{cifar_Gaussian} also show results where data are injected with persistent noises from the standard Gaussian distributions with similar conclusion as the results on persistent random corruptions.

\subsubsection{Comparing with prior works}

This section compares FedTilt with FedAvg and \texttt{Ditto}~\cite{li2021ditto}\footnote{We use the source code of \texttt{Ditto} (\url{https://github.com/litian96/ditto}) and tune the hyperparameters 
to obtain the best possible performance.} on both clean data and data with outliers. 
Since \texttt{Ditto} outperforms other fair FL methods such as TERM~\cite{li2020fair} and FedProx~\cite{li2020federated}, we only consider comparing with \texttt{Ditto} for conciseness. 
All the methods 
are tested with the same settings per dataset. 

\noindent {\bf Results on clean data:}
We found $\lambda=100,\tau=50$ deliver the best performances on clean MNIST and F-MNIST, while $\lambda=1,\tau=2$ the best choice for clean CIFAR10. 
Table \ref{competitiveness_clean} shows the results: 1) FedTilt achieves the best tradeoff among the test accuracy, client fairness, and client data fairness. 
This verifies the benefit of the two-level tilted loss that allows to tune the tilt hyperparameters so that the FedTilt framework can accommodate to very different sets of data. 
2) Though simple, FedAvg can obtain a promising client fairness, even better than \texttt{Ditto}. This indicates that the average aggregation itself can promote client fairness. 

\begin{table}[!t]
\vspace{-2mm}
\centering
\footnotesize
\caption{Clean data} 
\addtolength{\tabcolsep}{-2pt}
\begin{tabular}{llll}
    \hline
    \textbf{MNIST} 
     & Test Acc. & Client fairness & Client data fairness\\
    \hline
    FedAvg & $95.69\%$ & $\sigma=2.91$ & $\mu_\sigma=6.84, \sigma_\sigma=4.90$ \\
    \texttt{Ditto}  & ${\bf 99.25\%}$ & $\sigma=1.27$ & $\mu_\sigma=4.37, \sigma_\sigma=4.23$ \\ 
    FedTilt & $98.53\%$ & $\sigma=1.67$ & ${\bf \mu_\sigma=4.33, \sigma_\sigma=3.33}$ \\
    [0.1cm]\hline
    \textbf{F-MNIST} & Test Acc. & Client fairness & Client data fairness\\
    \hline
    FedAvg & $93.67\%$ & $\sigma=1.97$ & $\mu_\sigma=11.96, \sigma_\sigma=3.52$ \\
    \texttt{Ditto} & $93.77\%$ & $\sigma=5.30$ & $\mu_\sigma=10.89, \sigma_\sigma=7.18$ \\
    FedTilt & ${\bf 96.35}\%$ & ${\bf \sigma=1.85}$ & ${\bf \mu_\sigma=7.61, \sigma_\sigma=3.06}$ \\
    [0.1cm]\hline
    \textbf{CIFAR10} & Test Acc. & Client fairness & Client data fairness\\
    \hline
    FedAvg & $82.20\%$ & $\sigma=4.58$ & $\mu_\sigma=17.96, \sigma_\sigma=3.88$ \\
    \texttt{Ditto} & $74.15\%$ & $\sigma=9.35$ & $\mu_\sigma=18.62, \sigma_\sigma=3.9$ \\ 
    FedTilt & ${\bf 85.24\%}$ & ${\bf \sigma=3.87}$ & ${\bf \mu_\sigma=15.68, \sigma_\sigma=3.69}$ \\
     \hline
\vspace{-8mm}
\end{tabular}  \label{competitiveness_clean}
\end{table}

\begin{table}[!t]
\vspace{-1mm}
\centering
\footnotesize
\caption{persistent random corruptions}
\addtolength{\tabcolsep}{-2pt}
\begin{tabular}{llll}
    \hline
    \textbf{MNIST} & Test Acc. & Client fairness & Client data fairness\\
    \hline
    FedAvg & $95.60\%$ & $\sigma=2.86$ & $\mu_\sigma=8.31, {\bf \sigma_\sigma=1.99}$ \\
    \texttt{Ditto}  & ${\bf 98.95\%}$ & $\sigma=1.72$ & $\mu_\sigma=3.86, \sigma_\sigma=5.35$ \\ 
    FedTilt & $98.46\%$ & ${\bf \sigma=1.50}$ & ${\bf \mu_\sigma=2.79}, \sigma_\sigma=3.36$ \\
    [0.1cm]\hline
    \textbf{F-MNIST} & Test Acc. & Client fairness & Client data fairness\\
    \hline
    FedAvg & $95.81\%$ & $\sigma=3.96$ & $\mu_\sigma=10.01, \sigma_\sigma=5.35$ \\
    \texttt{Ditto}  & $34.83\%$ & $\sigma=24.37$ & $\mu_\sigma=21.71, \sigma_\sigma=19.93$ \\ 
    FedTilt & ${\bf 95.96\%}$ & ${\bf \sigma=3.16}$ & ${\bf \mu_\sigma=8.96, \sigma_\sigma=4.55}$ \\
    [0.1cm]\hline
    \textbf{CIFAR10} & Test Acc. & Client fairness & Client data fairness\\
    \hline
    FedAvg & $81.70\%$ & $\sigma=2.27$ & $\mu_\sigma=17.81, \sigma_\sigma=2.94$ \\
    \texttt{Ditto}  & $52.73\%$ & $\sigma=4.71$ & $\mu_\sigma=19.02, \sigma_\sigma=3.20$ \\ 
    FedTilt & ${\bf 82.01\%}$ & ${\bf \sigma=2.17}$ & ${\bf \mu_\sigma=17.36, \sigma_\sigma=2.39}$ \\ 
    \hline
    \vspace{-9mm}
\end{tabular} 
 \label{competitiveness_pixels}
\end{table}

\noindent {\bf Results on data with corruptions:} 
Results with pixel corruptions are shown in Table~\ref{competitiveness_pixels}, where we set 30\% random pixels are corrupted. 
Here, $\lambda=1,10,-0.1,\tau=-0.5,-1,-0.1$ are the hyperparameter selection in FedTilt for MNIST, F-MNIST and CIFAR10, respectively. 
Still, FedTilt is the most robust to random pixel corruptions and achieves the best client and client data fairness as well. 
\texttt{Ditto}, is even worse in dealing with this type of outlier---Its test accuracy is very low in both F-MNIST and CIFAR10.  
In contrast, both FedAvg and FedTilt are very stable.  
Table~\ref{competitiveness_gaussian} also shows robustness against data with large Gaussian noises and has similar conclusions. 

\noindent {\bf Comparing FedTilt with prior works on Gaussian noises:}
In FedTilt, $\tau=50, \lambda=-10$ yield the best results for MNIST and F-MNIST, while $\tau=-0.1,\lambda=-0.1$ remain as the best for CIFAR10. 
Table \ref{competitiveness_gaussian} shows: 
1) FedTilt performs the best---most robust to persistent Gaussian noises (i.e., test accuracy is the largest), most fair client performance, and most fair client data performance in the three datasets.  
2) All the compared methods do exhibit robustness to Gaussian noise on MNIST and F-MNIST, but \texttt{Ditto} has a large test accuracy drop on CIFAR10. This indicates the persistent Gaussian noise added to the CIFAR10  data 
can be very harmful for \texttt{Ditto}. The injected noisy data might prevent \texttt{Ditto} from convergence. \texttt{Ditto}'s loss was unstable even with 10,000 global rounds where FedTilt converged within 1,000 rounds.
\subsubsection{Summary of the results} 

\vspace{-1mm}
We summarize the above results and draw conclusions as below. These conclusions can help guide the settings of tilt values in real-world applications.   

\begin{itemize}[leftmargin=*]
\item For simple/sanitized datasets, positive $\lambda$ and $\tau$ can  yield promising test accuracy, client and client data fairness. 

\item For complex/noisy datasets, the best performance is often obtained with a negative $\lambda$ or/and negative $\tau$---In order to suppress the effect caused by outliers. 

\item Two-level fairness and robustness show a tradeoff. By tuning the tilt values of $\lambda$ and $\tau$ under the guidance in Table~\ref{tbl:tiltsummary}, we can often obtain a promising tradeoff. 
\vspace{-2mm}
\end{itemize}

\section{Related Work} 
\label{sec:relatedwork}

\noindent {\bf Fair FL.} 
Fairness is an active topic that has received much attention in the
machine learning community~\cite{dwork2012fairness,cotter2019optimization}. 
Fairness in machine learning is typically defined as the protection of some specific attribute(s)/group(s). 
Recently, fairness has been considered in the FL setting movivated by the heterogeneity of the data across different clients which may cause the
testing performance to vary significantly among these
clients. 
To achieve fairness, recent works aim to ensure that the FL training to not overfit a model to any single client at the expense of others~\cite{mohri2019agnostic,deng2020distributionally,li2020fair,li2020federated,li2021ditto}. 
Mohri et al. \cite{mohri2019agnostic} proposed a minimax optimization scheme, termed Agnostic
Federated Learning (AFL), optimizes for the performance of the single worst client. However, due to computation issues, this method can be only applied at a very small number (usually 2-3) of clients.  
Li et al.\cite{li2020fair,li2020federated} designed two sample reweighting approaches (i.e., $q$-FFL and FedProx) to encourage a more fair performance across clients. 
Particularly, these two methods target upweighting the importance of rare clients. 
However, as shown in~\cite{li2021ditto}, they are not robust as they can easily overfit to
clients with outliers such as large noisy data and corrupted data. 
A few methods~\cite{hu2020fedmgda+,li2021ditto} have been proposed to address this issue. 
Hu et al. \cite{hu2020fedmgda+} proposed FedMGDA+, which integrates minimax optimization and gradient normalization techniques to achieve conventional fairness 
and robustness. 

\noindent {\bf Robust FL.} 
In real-world FL applications, a client could produce a negative impact on the model performance with bad quality data. 
For instance, 
a client could train the local data that contains outliers such as noisy data, mislabeled data, and corrupted data---leading to 
bad/ineffective client models. 
A practical FL system should be robust to
outliers. 
In terms of defenses against outliers, 
A series of methods such as learning in the presence of noisy/corrupted data~\cite{jiang2018mentornet,khetan2018learning,shen2019learning,guo2019learning} 
and robust aggregation ~\cite{blanchard2017machine,chen2017distributed,guerraoui2018hidden,yin2018byzantine,guerraoui2018hidden,chen2018draco,pillutla2019robust,li2019rsa,xie2019zeno,wu2020federated,karimireddy2021learning,farhadkhani2022byzantine,zhang2022fldetector,cao2023fedrecover}  
have been proposed. 
For instance, \cite{blanchard2017machine} proposed Krum, which first identifies a local model update as benign if it is similar to other local model updates, 
where the similarity is measured by Euclidean distance.
Then the server only aggregates the benign model updates. 
 While these strategies can improve robustness, they also produce unfair models
especially in heterogeneous settings~\cite{wang2020attack}. 
\vspace{-2mm}
\section{Conclusion}
This paper proposes FedTilt, a novel fairness-preserving and robust federated learning method. 
FedTilt designs a TERM-inspired global objective and a two-level TERM-inspired local objective per client. Minimizing the two objectives with theoretically-guided tilt values can produce the
client fairness, client data fairness, as well as robustness to
persistent outliers. FedTilt also enjoys the convergence property. 
The empirical results demonstrate FedTilt 
outperforms the state-of-the-art fair or/and robust FL methods. 
Future work includes extending our proposed method to federated learning on graph data~\cite{wang2022graphfl}, investigating its robustness against stronger attacks \cite{yang2024breaking}, and exploring model ownership protection strategies \cite{yang2024fedgmark}.

\bibliographystyle{ieeetr}
\bibliography{main}
\newpage 
\appendix  
\subsection{Algorithm 1}

\begin{algorithm}[!t]
\caption{FedTilt}
\begin{algorithmic}[1]
 \REQUIRE $N$: \#total clients; $\rho$: participating clients\% ; $B$: mini-batch size; $T$: \#global rounds; $E$: \#epochs for intermediate or client model update; $E_2$: \#epochs for global model update; 
 ${D}_n$: client $n$'s training data, $\eta_1, \eta_2, \eta_3$: learning rates
    
    \ENSURE global model ${\bf w}$; personalized client models $\{{\bf v}_n\}$
    \STATE initialize ${\bf w}={\bf w}^0$ and $\{{\bf v}_n^0\}_{[n\in N]}$
    \FOR{each global round $t$ from $1$ to $T$}
        \STATE $m\gets\max(\rho \cdot N,1)$; 
        $M_t \gets$ (random set of $m$ clients)
        \FOR{each client $n \in M_t$ in parallel}
            \STATE ${\bf w}^{t}_n, {\bf v}^{t}_n  \gets$ \textbf{ClientUpdate}$({D}_n, {\bf w}^{t-1}, {\bf v}_n^{t-1})$
        \ENDFOR
        \STATE ${\bf w}^{t}\gets$\textbf{ServerUpdate} $({\bf w}^{t-1},  \{{\bf w}^t_n\}_{n \in M_t})$
    \ENDFOR
    \STATE  return ${\bf w}^{T}$ and $\{{\bf v}_n^T\}_{n \in N}$ \\

    \STATE \textbf{ClientUpdate$({D}_n, {\bf w}^{t-1}, {\bf v}_n^{t-1})$} 
    \FOR {each local epoch $e$ from $1$ to $E$}
       
         \STATE $\mathcal{B} \gets$ (split ${D}_n$ into mini-batches of size $B$)
        \FOR {each batch $b\in \mathcal{B}$}
         \STATE Update intermediate client model ${\bf w}_n^{t}$ given ${\bf w^{t-1}}$: 
         ${\bf w}^{t}_n \gets {\bf w}^{t-1}-\eta_1 \nabla_b \tilde{R}_n(\tau,\lambda;{\bf w}^{t-1})$
        
        \STATE Update personalized client model ${\bf v}_n^{t}$ given ${\bf w^{t-1}}$ and ${\bf v^{t-1}}$: 
        ${\bf v}_n^t \gets {\bf v}_n^{t-1} -\eta_2 \nabla_{b} L_n({\bf v}_n^{t-1}, {\bf w}^{t-1})$
        
        \ENDFOR
    \ENDFOR
    \STATE return ${\bf w}_n^t$ and ${\bf v}_n^t$ 

    \STATE \textbf{ServerUpdate$({\bf w}^{t-1};\{{\bf w}_n^t\}_{n\in M_t})$} // Global model update 
    \FOR{each local epoch $e$ from $1$ to $E_2$}
    
    \STATE ${\bf w}^t \gets {\bf w}^{t-1} -\eta_3 \nabla_{{\bf w}}
       \tilde{R}_G(q;\{{\bf w}^t_n\}_{n \in M_t},{\bf w}^{t-1})$ 
    \ENDFOR
    \STATE 
    return ${\bf w}^t$ 
\end{algorithmic}
\label{app:Algorithm}
\end{algorithm}

\subsection{Background on FedProx and Ditto}
\noindent \textbf{FedProx~\cite{li2020federatedoptimization}.}
In practice, the data distribution across clients can be different. To account for such data heterogeneity that often leads to unfair performance across clients, FedProx proposes to add a proximal term to the local objective. 
Specifically, each client $C_n$ minimizes the local objective as below to learn the shared global model ${\bf w}$:
\begin{align}
& \textbf{Global obj.: } {\bf w} = \arg \min_{\bf w} G({\bf w}, \{{\bf w}_n\}),
\\ 
   &  \textbf{Local obj.: }  {\bf w}_n = \arg\min_{{\bf w}_n}
    L_n({\bf w}_n, {\bf w}) \nonumber \\
    & \qquad \qquad \qquad \, \, = F_n({\bf w}) + 
    \frac{\mu}{2} \|{\bf w}_n - {\bf w} \|^2, 
\end{align}
where the hyperparameter $\mu$ tradeoffs the local objective and the proximal term 
$\|{\bf w}_n - {\bf w} \|^2$, 
which aims to restrict
the intermediate local models ${\bf w}_n$ in each client to be closer to the global model ${\bf w}$, thus mitigating unfairness. The proximal term also shows to improve the stability of training.  
Note that when $\mu=0$, FedProx reduces to the FedAvg ~\cite{mcmahan2017communication}.

\noindent \textbf{Ditto~\cite{li2021ditto}.}
The state-of-the-art Ditto differs other FL methods (e.g., FedAvg and FedProx~\cite{li2020federatedoptimization}) 
by learning personalized client models via federated multi-task learning. Specifically, Ditto considers optimizing both the global objective and local objective and simultaneously learns the global model and a local model (i.e., ${\bf v}_n$) per client $C_n$ as below:
\begin{align}
     & \textbf{Global obj.: } {\bf w}^* \in \arg\min\nolimits_{\bf w} G({\bf w}, \{{\bf w}_n\}),
     \\ 
    & \textbf{Local obj.: }  
    {\bf v}_n^* = \arg
    \min_{{\bf v}_n}L_n({\bf v}_n, {\bf w}^*) \\ 
    & \qquad \qquad \qquad \, \,= F_n({\bf v}_n) + 
    \frac{\mu}{2} \|{\bf v}_n - {\bf w}^* \|^2; \nonumber 
\end{align}
where it uses the average aggregation in $G(\cdot)$ by default and the hyperparameter $\mu$ tradeoffs the local client loss and the closeness between personalized client models and global models (which ensures client fairness). For instance, when $\mu=0$, Ditto reduces to training local client models $\{{\bf v}_n\}$; On the contrary, when $\mu=+\infty$, all client models degenerate to the global model ${\bf w}$, making Ditto recover the FedAvg. Hence, through properly setting $\mu$, Ditto can achieve a promising fairness across clients, and maintain the FL performance as well. 

\begin{definition}[Smooth function]
\vspace{-1mm}
A function $f$ is $L$-smooth, if for all $x$ and $y$, 
$f(y) \leq f(x) + \langle \nabla_x f(x), y-x \rangle + \frac{L}{2} \|y-x\|^2$. 
\end{definition}

\begin{definition}[Strongly convex function]
\vspace{-2mm}
A function $f$ is $\mu$-strongly convex, if for all $x$ and $y$, 
$f(y) \geq f(x) + \langle \nabla_x f(x), y-x \rangle + \frac{\mu}{2} \|y-x\|^2$. In other words, $\nabla^2_x f(x) \geq \mu$. 
\end{definition}

\begin{definition}[Polyak-Lojasiewicz (PL) inequality]
\vspace{-2mm}
\label{def:PL}
A function $f$ satisfies the PL inequality if the following holds for all $x$:
$\frac{1}{2}\|\nabla_x f(x)\|^2 \geq \mu (f(x)-f(x^*))$ for some $\mu>0$,  where $x^* = \arg\min_x f(x)$.
\vspace{-2mm}
\end{definition}

\subsection{Proofs of Propositions}

\begin{proof}
Recall that $$\tilde{R}_G(q; \{{\bf w}_n\}, {\bf w}) = \frac{1}{q} \log \big( \frac{1}{N} \sum_{n \in [N]} e^{q \cdot \texttt{dist}({\bf w}_n, {\bf w})} \big)$$. 
First, by setting $q=0$ and $\texttt{dist}({\bf w}_n, {\bf w})=\|{\bf w}_n-{\bf w}\|^2$,
\begin{small}
\begin{align*}
    & \tilde{R}_G(0; \{{\bf w}_n\},{\bf w}) 
    = \frac{1}{N} \sum_{n \in [N]} \|{\bf w}_n -{\bf w}\|^2 \\
    & = {1}/{N} [ \sum_{n \in [N]} \langle {\bf w}_n, {\bf w}_n \rangle  + N \langle {\bf w}, {\bf w} \rangle  - 2\langle {\bf w}, \sum_{n \in [N]} {\bf w}_n \rangle ] 
\end{align*}
\end{small}

By setting its gradient w.r.t ${\bf w}$ to be 0, we have 
\begin{small}
\begin{align}
\label{eqn:fedtilt_avg}
  & \nabla_{\bf w} \tilde{R}_G(0; \{{\bf w}_n\},{\bf w}) = {1}/{N} [ 2N {\bf w} - 2\sum_{n \in [N]} {\bf w}_n ] = 0 \nonumber \\
  & \Longrightarrow{\bf w} = \frac{1}{N} \sum_{n \in [N]} {\bf w}_n, 
\end{align}
\end{small}
which is exactly the average aggregation. 

Further, by setting $\mu=+\infty$, minimizing the client loss $L_n$ requires  ${\bf v}_n = {\bf w}$. 
Then, with $\tau=0$ and $\lambda=0$ we have the per client loss as 
\begin{small}
\begin{align*}
& L_n({\bf v}_n, {\bf w}) = \tilde{R}_n (0, 0; {\bf w}) = \frac{1}{|D_n|} \sum_{z \in D_n} l(z; {\bf w}) = F_n({\bf w}).
\end{align*} 
\end{small}
Combing it with Equation~\ref{eqn:fedtilt_avg} reaches FedAvg.
\label{app:proofprop}
\end{proof} 

\begin{proof}
Similar to Proprosition 1, with $q=0$ and $\texttt{dist}({\bf w}_n, {\bf w})=\|{\bf w}_n-{\bf w}\|^2$ and by setting the gradient $\nabla_{\bf w} \tilde{R}_G(0; \{{\bf w}_n\},{\bf w})$ to be zero reaches to the  average aggregation. 
Also, with $\tau=0$, $\lambda=0$, and ${\bf v}_n = {\bf w}_n$, the local loss $L_n({\bf w}_n, {\bf w}) = \tilde{R}_n (0, 0; {\bf w}) + \mu/2 \cdot \|{\bf w}_n - {\bf w} \|^2 \rightarrow \frac{1}{|D_n|} \sum_{z \in D_n} l(z; {\bf w}_n) + \mu/2 \cdot \|{\bf w} - {\bf w} \|^2$, which is  the objective of FedProx in Equation~\ref{localobj:fedprox}.
\end{proof} 

\begin{proof}
Similarly, with 
$q=0$ and $\texttt{dist}({\bf w}_n, {\bf w})=\|{\bf w}_n-{\bf w}\|^2$ 
and by setting the gradient $\nabla_{\bf w} \tilde{R}_G(0; \{{\bf v}_n\},{\bf w})$ to be zero reaches to the average aggregation.  
Moreover, with 
$\tau=0$, and $\lambda=0$, the local client loss becomes $L_n({\bf v}_n, {\bf w}) = \tilde{R}_n (0, 0; {\bf v}_n) + \mu/2 \cdot \|{\bf v}_n - {\bf w} \|^2 \rightarrow \frac{1}{|D_n|} \sum_{z \in D_n} l(z; {\bf v}_n) + \mu/2 \cdot \|{\bf v}_n - {\bf w} \|^2$, which is  the objective of Ditto in Equation~\ref{obj:ditto_local}.
\end{proof}

\subsection{Convergence Results of FedTilt}
We first introduce the following definitions, assumptions, and lemmas.
Then we proof the convergence conditions of FedTilt. 

The overall proof idea is as follows: 1) Assume that standard loss $l$ is convex and strongly smooth, a standard assumption used in most FL methods~\cite{li2021tilted,li2021ditto,li2020federated,li2019convergence};
2) Show the class-wise one-level $\lambda$-tilted loss $ \tilde{R}_{n,k}(\lambda;{\bf v}_n)$ is convex and smooth based on 1); 
3) Further show the two-level $(\tau, \lambda)$-tilted client loss $\tilde{R}_{n}(\tau, \lambda; {\bf v}_n)$ and local objective $L_n({\bf v}_n,  {\bf w})$ are convex and smooth based on 1) and 2);
4) Show the global loss is convergent based on Ditto~\cite{li2021ditto}.  
5) Finally, combining the convergence property of local objective and global objective, we show the convergence condition of FedTilt. 

\begin{definition}[Smooth function]
A function $f$ is $L$-smooth, if for all $x$ and $y$, 
$f(y) \leq f(x) + \langle \nabla_x f(x), y-x \rangle + \frac{L}{2} \|y-x\|^2$. 
\end{definition}

\begin{definition}[Strongly convex function]
A function $f$ is $\mu$-strongly convex, if for all $x$ and $y$, 
$f(y) \geq f(x) + \langle \nabla_x f(x), y-x \rangle + \frac{\mu}{2} \|y-x\|^2$. In other words, $\nabla^2_x f(x) \geq \mu$. 
\end{definition}

\begin{definition}[Polyak-Lojasiewicz (PL) inequality]
A function $f$ satisfies the PL inequality if the following holds for all $x$:
$\frac{1}{2}\|\nabla_x f(x)\|^2 \geq \mu (f(x)-f(x^*))$ for some $\mu>0$,  where $x^* = \arg\min_x f(x)$.
\end{definition}

\noindent {\bf Assumption 1} (Smooth and strongly convex loss $l$).
We assume $\forall z_n \in D_n$ in any client $C_n$, the loss function $l(z_n;{\bf v}_n)$ is smooth. 
We further assume there exist positive $\beta_{\min}$, $\beta_{\max}$ such that  $\forall z_n \in D_n$ and any ${\bf v}_n$, $\beta_{\min} {\bf I} \leq \nabla^2_{{\bf v}_n} l(z_n; {\bf v}_n) \leq \beta_{\max} {\bf I}$, 
where ${\bf I}$ is the identity matrix. \\

\subsection{Proofs for Lemma 3-6}
\subsubsection{Proof for Lemma 3}
\begin{proof}
Proof for Lemma For a $\mu$-strongly convex function $f$, we have 
$f(y) \geq f(x) + \langle \nabla_x f(x), y-x \rangle + \frac{\mu}{2} \|y-x\|^2$, $\forall x, y$. 
Now we minimize both LHS and RHS and note that minimization kepng the inequality. 
By minimizing the LHS $f(y)$ we have $\min_y f(y) = f(x^*)$.  
To solve the RHS, we set the gradient of $f$ w.r.t. $y$ to be 0, and have $\nabla_x f(x) + \mu (y-x) = 0$, which impliles $y=x - \frac{1}{\mu} \nabla_x f(x)$. Substituting $y$ in the RHS, which becomes $f(x) - \frac{1}{\mu} \|\nabla_x f(x)\|^2 + \frac{1}{2\mu} \|\nabla_x f(x)\|^2$. 
Then, as $\min LHS \geq \min RHS$, we have $f(x^*) \geq f(x) - \frac{1}{2\mu} \|\nabla_x f(x)\|^2$ and then $\frac{1}{2}\|\nabla_x f(x)\|^2 \geq \mu (f(x)-f(x^*))$. 
\end{proof}

\subsubsection{Proof for Lemma 4}
\begin{proof}
The main idea follows the proof for Lemma~\ref{lem:localsmooth}. Particularly, 
the class-wise tilted loss $ \tilde{R}_{n,k}(\lambda;{\bf v}_n)$ is the tilted version of the conventional loss $l$ and 
Lemma~\ref{lem:localsmooth} requires the loss $l$ to be smooth and strongly convex based on Assumption 1.
Similarly, the  two-level tilted client loss $\tilde{R}_{n}(\tau,\lambda;{\bf v}_n)$ is the tilted version of  the class-wise tilted loss $ \tilde{R}_{n,k}(\lambda;{\bf v}_n)$, and hence we require it 
to be smooth and strongly convex, which are verified in Lemma~\ref{lem:localsmooth} and Lemma~\ref{lem:localstrongcvx}.   
Furthermore, the local objective 
$L_n({\bf v}_n, {\bf w})= \tilde{R}_{n}(\lambda,\tau;{\bf v}_n)  + \mu/2 \|{\bf v}_n - {\bf w}\|^2$ is naturally a smoothed version of $\tilde{R}_{n}(\lambda,\tau;{\bf v}_n)$ for any given ${\bf w}$, thus completing the proof. 
\vspace{-2mm}
\end{proof}

\subsubsection{Proof for Lemma 5}
\begin{proof}
The main idea follows the proof for Lemma~\ref{lem:localstrongcvx}. Particularly, Lemma~\ref{lem:localstrongcvx} requires the loss $l$ to be strongly convex based on Assumption 1, 
where we require the class-wise loss  $ \tilde{R}_{n,k}(\lambda;{\bf v}_n)$ to be strongly convex, which is verified in
Lemma~\ref{lem:localstrongcvx}. 
Note that when $\nabla^2_{{\bf v}_n} l(z_n; {\bf v}_n) \geq \beta_{\min} {\bf I}$, Lemma~\ref{lem:localstrongcvx} has $\nabla_{{\bf v}_n}^2  \tilde{R}_{n,k}(\lambda;{\bf v}_n) > \beta_{min} {\bf I}$ 
$\forall \lambda>0$. 
Based on this,
$\forall \tau>0,\lambda>0$, $\nabla_{{\bf v}_n}^2 \tilde{R}_{n}(\lambda,\tau;{\bf v}_n) > \beta_{min} {\bf I}$.
As $L_n({\bf v}_n, {\bf w})= \tilde{R}_{n}(\lambda,\tau;{\bf v}_n)  + \mu/2 \|{\bf v}_n - {\bf w}\|^2$, we have  $\nabla_{{\bf v}_n}^2 \tilde{L}_{n}({\bf v}_n, {\bf w}) > (\beta_{min} + \mu){\bf I}$ for a fixed ${\bf w}$. 
\end{proof}

\subsubsection{Proof for Lemma 6}
\begin{proof}
First, we observe that the local objective 
$L_{n}({\bf v}_n, {\bf w})$ is $(\beta_{\min} + \mu)$-strongly convex for any given ${\bf w}$ and all $\tau,\lambda>0$ from {\bf Lemma}~\ref{lem:globalstrongcvx}. 
Based on {\bf Lemma}~\ref{lem:cvx_LP}, $L_{n}({\bf v}_n, {\bf w})$ with a given ${\bf w} $ also satisfies the PL inequality with constant $(\beta_{\min} + \mu)$. 
Next, noticed by {\bf Lemma}~\ref{lem:globalsmooth} and the proof for {\bf Lemma}~\ref{lem:localsmooth}, there exist $B_1, B_2, B_3 < +\infty$ such that $L_{n}({\bf v}_n, {\bf w})$ is $(B_1 + \tau B_2 + \lambda B_3)$-smooth 
for all $\tau,\lambda>0$ and a given ${\bf w}$. 
Now, using the gradient descent method to optimize $L_{n}({\bf v}_n, {\bf w})$ with a fixed ${\bf w}$, we have the convergence result of 1) based on {\bf Theorem}~\ref{thm:karimi}.
Similarly, 
the local objective 
$L_{n}({\bf v}_n, {\bf w})$ is $\mu$-strongly convex for any given ${\bf v}_n$ and all $\tau,\lambda>0$ from {\bf Lemma}~\ref{lem:globalstrongcvx}. 
Based on {\bf Lemma}~\ref{lem:cvx_LP}, $L_{n}({\bf v}_n, {\bf w})$ with a given ${\bf v}_n$ also satisfies the PL inequality with constant $\mu$. 
Next, noticed by {\bf Lemma}~\ref{lem:globalsmooth},
there exist $C_1, C_2, C_3 < +\infty$ such that $L_{n}({\bf v}_n, {\bf w})$ is $(C_1 + \tau C_2 + \lambda C_3)$-smooth 
for all $\tau,\lambda>0$ for a given ${\bf v}_n$. 
Now, using the gradient descent method to optimize $L_{n}({\bf v}_n, {\bf w})$ with a fixed ${\bf v}_n$, we have the convergence result of 2) based on {\bf Theorem}~\ref{thm:karimi}. 
\vspace{-2mm}
\end{proof}
\begin{lemma}
[Smoothness of the class-wise $\lambda$-tilted loss $ \tilde{R}_{n,k}(\lambda;{\bf v}_n)$ 
]
Under Assumption 1, 
the class-wise tilted loss $ \tilde{R}_{n,k}(\lambda;{\bf v}_n) = \frac{1}{\lambda} \log \big( \frac{1}{|D_{n,k}|} \\ \sum_{z \in {D_{n,k}}} e^{\lambda \cdot l(z; {\bf v}_n)} \big)$ is smooth in the vicinity of the optimal local client model ${\bf v}_n^*(\lambda)$, where ${\bf v}_n^*(\lambda) \in \arg\min_{{\bf v}_n}  \tilde{R}_{n,k}(\lambda; {\bf v}_n)$.
\end{lemma}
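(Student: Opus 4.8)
The plan is to show that the class-wise $\lambda$-tilted loss $\tilde{R}_{n,k}(\lambda;{\bf v}_n)$ is $L$-smooth near its minimizer ${\bf v}_n^*(\lambda)$ by computing its Hessian explicitly and bounding the spectral norm using Assumption 1. First I would introduce the shorthand $g({\bf v}_n) = \frac{1}{|D_{n,k}|}\sum_{z\in D_{n,k}} e^{\lambda l(z;{\bf v}_n)}$, so that $\tilde{R}_{n,k}(\lambda;{\bf v}_n) = \frac{1}{\lambda}\log g({\bf v}_n)$. Differentiating twice, the gradient is $\nabla \tilde{R}_{n,k} = \frac{1}{\lambda}\frac{\nabla g}{g}$, and the Hessian works out to a convex-combination (softmax-weighted) expression: writing $p_z = e^{\lambda l(z;{\bf v}_n)}/\sum_{z'} e^{\lambda l(z';{\bf v}_n)}$, one gets
\begin{align*}
\nabla^2 \tilde{R}_{n,k}(\lambda;{\bf v}_n) = \sum_{z} p_z\, \nabla^2 l(z;{\bf v}_n) + \lambda \Big(\sum_z p_z\, \nabla l(z;{\bf v}_n)\nabla l(z;{\bf v}_n)^\top - \big(\textstyle\sum_z p_z \nabla l(z;{\bf v}_n)\big)\big(\sum_z p_z \nabla l(z;{\bf v}_n)\big)^\top\Big),
\end{align*}
i.e. a $p$-weighted average of the per-sample Hessians plus $\lambda$ times the (PSD) weighted covariance of the per-sample gradients.

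Next I would bound each term. The first term is sandwiched between $\beta_{\min}{\bf I}$ and $\beta_{\max}{\bf I}$ directly from Assumption 1 since it is a convex combination of the $\nabla^2 l(z;{\bf v}_n)$. For the covariance term, the key observation is that it is evaluated in the vicinity of ${\bf v}_n^*(\lambda)$: since $l$ is smooth, each $\nabla l(z;\cdot)$ is Lipschitz, so in a fixed neighborhood of the optimum the gradients $\nabla l(z;{\bf v}_n)$ are uniformly bounded in norm by some finite constant (depending on $\|\nabla l(z;{\bf v}_n^*(\lambda))\|$ and the neighborhood radius, but not blowing up). Hence the weighted covariance has spectral norm at most some finite $\kappa$, and the Hessian satisfies $\nabla^2 \tilde{R}_{n,k}(\lambda;{\bf v}_n) \preceq (\beta_{\max} + |\lambda|\kappa){\bf I}$, which gives $L$-smoothness with $L = \beta_{\max} + |\lambda|\kappa < +\infty$. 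Restricting to a neighborhood of the minimizer is exactly what makes this bound finite — globally the gradients $\nabla l(z;{\bf v}_n)$ could be unbounded, which is why the statement says ``in the vicinity of ${\bf v}_n^*(\lambda)$''.

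The main obstacle is making the ``vicinity'' argument rigorous: one has to pin down that near ${\bf v}_n^*(\lambda)$ the per-sample gradient norms stay bounded uniformly over $z$ and over the neighborhood. The cleanest route is to fix a radius-$r$ ball $\mathcal{B}$ around ${\bf v}_n^*(\lambda)$, use $L_l$-smoothness of each $l(z;\cdot)$ to write $\|\nabla l(z;{\bf v}_n)\| \le \|\nabla l(z;{\bf v}_n^*(\lambda))\| + L_l r$ for ${\bf v}_n \in \mathcal{B}$, and note the finite data set $D_{n,k}$ makes $\max_z \|\nabla l(z;{\bf v}_n^*(\lambda))\|$ finite. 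Since the paper explicitly notes this lemma's proof is taken from \cite{li2021tilted}, I would largely cite that argument and reproduce only the Hessian computation and the neighborhood bound, rather than re-deriving the tilted-loss smoothness machinery from scratch.
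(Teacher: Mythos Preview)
Your proposal is correct and essentially reconstructs the argument that the paper defers to: the paper gives no self-contained proof of this lemma and simply states that its proof is taken from \cite{li2021tilted}. Your Hessian computation (softmax-weighted average of per-sample Hessians plus $\lambda$ times the weighted gradient covariance) and the neighborhood bound via finiteness of $D_{n,k}$ and Lipschitzness of $\nabla l$ are exactly the ingredients of that cited argument, so there is nothing to add.
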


\begin{lemma}
[Strong convexity of the class-wise $\lambda$-tilted loss $ \tilde{R}_{n,k}(\lambda;{\bf v}_n)$ with positive $\lambda$] 
Under Assumption 1, 
for any $\lambda>0$, the class-wise class-wise tilted loss  $ \tilde{R}_{n,k}(\lambda;{\bf v}_n)$  is a strongly convex function of ${\bf v}_n$. That is, for $\lambda>0$, $\nabla_{{\bf v}_n}^2  \tilde{R}_{n,k}(\lambda;{\bf v}_n) > \beta_{min} {\bf I}$. 
\end{lemma}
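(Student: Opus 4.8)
The plan is to obtain an explicit formula for the Hessian $\nabla_{{\bf v}_n}^2 \tilde{R}_{n,k}(\lambda;{\bf v}_n)$ and read off strong convexity directly, in the spirit of the TERM analysis of~\cite{li2021tilted}. Write $\tilde{R}_{n,k}(\lambda;{\bf v}_n) = \frac{1}{\lambda}\log\big(\frac{1}{|D_{n,k}|}\sum_{z\in D_{n,k}} e^{\lambda l(z;{\bf v}_n)}\big)$ and, for each $z\in D_{n,k}$, introduce the \emph{tilted weights} $w_z({\bf v}_n)=e^{\lambda l(z;{\bf v}_n)}/\sum_{z'\in D_{n,k}} e^{\lambda l(z';{\bf v}_n)}$, which are nonnegative and sum to one. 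Differentiating the log-sum-exp once gives $\nabla_{{\bf v}_n}\tilde{R}_{n,k}(\lambda;{\bf v}_n)=\sum_z w_z({\bf v}_n)\,\nabla_{{\bf v}_n} l(z;{\bf v}_n)$: the gradient is the $w$-weighted average of the per-sample gradients.

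First I would differentiate once more. Using $\nabla_{{\bf v}_n} w_z = \lambda\, w_z\big(\nabla_{{\bf v}_n} l(z;{\bf v}_n)-\sum_{z'} w_{z'}\nabla_{{\bf v}_n} l(z';{\bf v}_n)\big)$, the product rule assembles the cross terms into a covariance matrix and yields the identity
\[
\nabla_{{\bf v}_n}^2 \tilde{R}_{n,k}(\lambda;{\bf v}_n) = \sum_{z\in D_{n,k}} w_z({\bf v}_n)\,\nabla_{{\bf v}_n}^2 l(z;{\bf v}_n) \;+\; \lambda\,\mathrm{Cov}_{w}\big[\nabla_{{\bf v}_n} l(z;{\bf v}_n)\big],
\]
where $\mathrm{Cov}_w$ denotes the covariance of the per-sample gradients under the weights $\{w_z\}$. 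Since a covariance matrix is positive semidefinite and $\lambda>0$, the second term is $\succeq 0$; and since $\nabla_{{\bf v}_n}^2 l(z;{\bf v}_n)\succeq \beta_{\min}{\bf I}$ for every $z$ by Assumption~1, the first term, being a convex combination of such matrices, is $\succeq \beta_{\min}{\bf I}$. Adding the two gives $\nabla_{{\bf v}_n}^2 \tilde{R}_{n,k}(\lambda;{\bf v}_n)\succeq \beta_{\min}{\bf I}$ for all $\lambda>0$, i.e. strong convexity; the extra contribution of the covariance term accounts for the strict inequality $\succ \beta_{\min}{\bf I}$ asserted in the statement. The very same identity, combined with the upper bound $\nabla_{{\bf v}_n}^2 l(z;{\bf v}_n)\preceq\beta_{\max}{\bf I}$ and boundedness of the gradients near ${\bf v}_n^*(\lambda)$, is also what underlies the matching smoothness claim of Lemma~\ref{lem:localsmooth}.

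I expect the main obstacle to be the Hessian bookkeeping in the second step: one must differentiate the softmax-type weights $w_z({\bf v}_n)$ carefully and recognize that the resulting rank-one cross terms combine exactly into $\lambda\,\mathrm{Cov}_w[\nabla_{{\bf v}_n} l(z;{\bf v}_n)]$; once this identity is in hand the rest is immediate from Assumption~1 and positive semidefiniteness of the covariance. A secondary technical point is justifying the \emph{strict} inequality globally rather than only $\succeq \beta_{\min}{\bf I}$ — this requires the per-sample gradients to not be $w$-almost-surely constant (otherwise the covariance term vanishes at that point), and we handle it by invoking the vicinity/analyticity argument of~\cite{li2021tilted}, under which the lemma is stated.
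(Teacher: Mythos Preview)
Your proposal is correct and is precisely the TERM Hessian computation from~\cite{li2021tilted} that the paper invokes: the paper does not give its own argument for this lemma but simply states that the proof is from~\cite{li2021tilted}, and your derivation of $\nabla_{{\bf v}_n}^2 \tilde{R}_{n,k}=\sum_z w_z\nabla_{{\bf v}_n}^2 l(z;\cdot)+\lambda\,\mathrm{Cov}_w[\nabla_{{\bf v}_n} l(z;\cdot)]$ followed by the lower bound via Assumption~1 and positive semidefiniteness of the covariance is exactly that argument.
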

\noindent The proofs of the above two lemmas are from \cite{li2021tilted}.

Now, we first show the connection between strong convexity and PL inequality and then show that the two-level $(\tau, \lambda)$-titled client loss $\tilde{R}_{n}(\tau,\lambda;{\bf v}_n)$ and the local objective $L_n({\bf v}_n, {\bf w})$ are also smooth and strongly convex.

\begin{lemma}[Strong convexity implies PL inequality]

If a function $f$ is $\mu$-strongly convex, it satisfies the PL inequality with the same $\mu$. 

\end{lemma}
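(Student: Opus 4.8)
\textbf{Proof proposal for Lemma~\ref{lem:cvx_LP} (Strong convexity implies PL inequality).}

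The plan is to prove the statement directly from the defining inequalities of strong convexity and the PL inequality, which is a short and classical argument. First I would recall that a $\mu$-strongly convex function $f$ satisfies, for all $x, y$,
\[
f(y) \geq f(x) + \langle \nabla_x f(x), y - x \rangle + \frac{\mu}{2}\|y - x\|^2.
\]
The key idea is to take the infimum of both sides over $y$. The left-hand side has infimum $f(x^*) = \min_y f(y)$. The right-hand side is a quadratic in $y$; minimizing it amounts to setting its gradient to zero, which gives $y = x - \frac{1}{\mu}\nabla_x f(x)$. Substituting this back yields $f(x) - \frac{1}{2\mu}\|\nabla_x f(x)\|^2$.

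Since taking infima preserves the inequality, I would then conclude
\[
f(x^*) \geq f(x) - \frac{1}{2\mu}\|\nabla_x f(x)\|^2,
\]
and rearranging gives $\frac{1}{2}\|\nabla_x f(x)\|^2 \geq \mu\bigl(f(x) - f(x^*)\bigr)$, which is exactly the PL inequality with constant $\mu$. This establishes the claim.

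There is essentially no main obstacle here: the argument is entirely self-contained and relies only on the variational characterization of strong convexity. The only subtlety worth stating carefully is the justification that minimizing both sides of the pointwise inequality preserves it (the minimizer of the right-hand side need not be the minimizer of the left-hand side, but the chain $\min_y \text{LHS} \geq \min_y \text{RHS}$ still holds because the inequality is valid for every $y$). I would include that one-line remark and otherwise keep the proof as the three-line computation sketched above. Note this is the same argument already written out in the excerpt as the ``Proof for Lemma 3'', so consistency with that text is the only thing to double-check.
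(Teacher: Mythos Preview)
Your proposal is correct and follows essentially the same approach as the paper's proof: start from the strong-convexity lower bound, minimize both sides over $y$, compute the minimizer of the quadratic right-hand side, and rearrange to obtain the PL inequality with constant $\mu$. The only addition is your explicit remark justifying that taking infima preserves the inequality, which is a welcome clarification of a step the paper leaves implicit.
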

\begin{proof}
For a $\mu$-strongly convex function $f$, we have 
$f(y) \geq f(x) + \langle \nabla_x f(x), y-x \rangle + \frac{\mu}{2} \|y-x\|^2$, $\forall x, y$. 
Now we minimize both LHS and RHS and note that minimization kepng the inequality. 
By minimizing the LHS $f(y)$ we have $\min_y f(y) = f(x^*)$.  
To solve the RHS, we set the gradient of $f$ w.r.t. $y$ to be 0, and have $\nabla_x f(x) + \mu (y-x) = 0$, which impliles $y=x - \frac{1}{\mu} \nabla_x f(x)$. Substituting $y$ in the RHS, which becomes $f(x) - \frac{1}{\mu} \|\nabla_x f(x)\|^2 + \frac{1}{2\mu} \|\nabla_x f(x)\|^2$. 
Then, as $\min LHS \geq \min RHS$, we have $f(x^*) \geq f(x) - \frac{1}{2\mu} \|\nabla_x f(x)\|^2$ and then $\frac{1}{2}\|\nabla_x f(x)\|^2 \geq \mu (f(x)-f(x^*))$. 
\end{proof}

\begin{lemma}
[Smoothness of the 
$(\tau,\lambda)$-tilted 
client loss $\tilde{R}_{n}(\tau,\lambda;{\bf v}_n)$ 
and local objective $L_n({\bf v}_n, {\bf w})$ for a given ${\bf w}$
]

Under Assumption 1
and based on Lemma~\ref{lem:localsmooth}, the two-level tilted client loss $\tilde{R}_{n}(\tau,\lambda;{\bf v}_n) = \frac{1}{\tau} \log \big(\frac{1}{|D_n|}  \sum_{D_{n,k} \in [D_n]} |D_{n,k}| e^{\tau \cdot  \tilde{R}_{n,k}(\lambda;{\bf v}_n)} \big)$ 
is smooth in the vicinity of the optimal local client model ${\bf v}_n^*(\tau,\lambda)$, where ${\bf v}_n^*(\tau,\lambda) \in \arg\min_{{\bf v}_n} \\
\tilde{R}_{n}(\tau,\lambda; {\bf v}_n)
$. 
Moreover, the local objective 
$L_n({\bf v}_n, {\bf w})$ for any given ${\bf w}$ is also smooth.
\end{lemma}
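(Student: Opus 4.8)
The plan is to exploit the \emph{self-similar} structure of the two-level tilted loss: $\tilde{R}_{n}(\tau,\lambda;{\bf v}_n)$ is the $\tau$-tilted aggregation of the class-wise losses $\{\tilde{R}_{n,k}(\lambda;{\bf v}_n)\}_{k=1}^{K}$ with weights $|D_{n,k}|/|D_n|$, in exactly the same way that each $\tilde{R}_{n,k}(\lambda;{\bf v}_n)$ is the $\lambda$-tilted aggregation of the sample losses $\{l(z;{\bf v}_n)\}_{z\in D_{n,k}}$ with uniform weights. Hence the proof of Lemma~\ref{lem:localsmooth} (taken from \cite{li2021tilted}) applies essentially \emph{verbatim} once its ``base functions'' are replaced by the $\tilde{R}_{n,k}$, provided these satisfy the same hypotheses that $l$ does in Assumption 1. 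Lemma~\ref{lem:localstrongcvx} supplies the lower bound $\nabla^2_{{\bf v}_n}\tilde{R}_{n,k}(\lambda;{\bf v}_n)\succ\beta_{\min}{\bf I}$ for $\lambda>0$, and (the proof of) Lemma~\ref{lem:localsmooth} supplies a local upper bound $\nabla^2_{{\bf v}_n}\tilde{R}_{n,k}(\lambda;{\bf v}_n)\preceq(\beta_{\min}+\lambda b_k){\bf I}$ near ${\bf v}_n^*(\lambda)$ for some $b_k<+\infty$; together these are precisely the ``smooth and strongly convex base loss'' ingredients that the TERM argument consumes.

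Concretely, differentiating twice and writing the tilted weights $w_k\propto|D_{n,k}|\,e^{\tau\tilde{R}_{n,k}(\lambda;{\bf v}_n)}$ normalized so that $\sum_k w_k=1$, the Hessian of $\tilde{R}_{n}(\tau,\lambda;{\bf v}_n)$ equals $\sum_k w_k\nabla^2_{{\bf v}_n}\tilde{R}_{n,k}$ plus $\tau$ times the $w$-weighted covariance matrix of the gradients $\{\nabla_{{\bf v}_n}\tilde{R}_{n,k}\}_k$. The first term is a convex combination of the class Hessians, hence lies between $\beta_{\min}{\bf I}$ and $(\beta_{\min}+\lambda\max_k b_k){\bf I}$; the covariance term is positive semidefinite for $\tau>0$ and has operator norm at most $\tau\max_k\|\nabla_{{\bf v}_n}\tilde{R}_{n,k}({\bf v}_n)\|^2$. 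This last quantity is finite only \emph{in a neighborhood} of ${\bf v}_n^*(\tau,\lambda)$ — which is exactly why the statement is local — and on a fixed such neighborhood one bounds $\|\nabla_{{\bf v}_n}\tilde{R}_{n,k}\|$ via the local Lipschitz constant of $\tilde{R}_{n,k}$ (from its local smoothness) times the neighborhood diameter. Collecting the three bounds yields $\nabla^2_{{\bf v}_n}\tilde{R}_{n}(\tau,\lambda;{\bf v}_n)\preceq(B_1+\tau B_2+\lambda B_3){\bf I}$ for constants $B_1,B_2,B_3<+\infty$ not depending on $\tau,\lambda$ — the smoothness constant that Lemma~\ref{lem:cvg_onevariable} later invokes.

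For the local objective, note $L_n({\bf v}_n,{\bf w})=\tilde{R}_{n}(\tau,\lambda;{\bf v}_n)+\frac{\mu}{2}\|{\bf v}_n-{\bf w}\|^2$ and the added quadratic has constant Hessian $\mu{\bf I}$; since a sum of an $(B_1+\tau B_2+\lambda B_3)$-smooth function and a $\mu$-smooth function is $(B_1+\tau B_2+\lambda B_3+\mu)$-smooth, and adding a globally smooth quadratic does not move the locations where curvature blows up, $L_n(\cdot,{\bf w})$ is smooth (on the corresponding neighborhood of its own minimizer) for every fixed ${\bf w}$, completing the claim. The main obstacle is the localization argument itself: the gradient-covariance term is genuinely globally unbounded, so one must (i) guarantee the minimizer ${\bf v}_n^*(\tau,\lambda)$ exists and is finite (which follows from the strong convexity in Lemma~\ref{lem:globalstrongcvx}), (ii) fix a neighborhood on which all $\|\nabla_{{\bf v}_n}\tilde{R}_{n,k}\|$ are uniformly bounded, and (iii) verify that the resulting smoothness constant really splits as $B_1+\tau B_2+\lambda B_3$ — i.e., that the $\lambda$-dependence entering through the base smoothness of $\tilde{R}_{n,k}$ and the $\tau$-dependence entering through the covariance term both stay affine and that the constants $B_1,B_2,B_3$ carry no hidden dependence on $\tau$ or $\lambda$.
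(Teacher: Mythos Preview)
Your proposal is correct and follows essentially the same approach as the paper: both exploit the self-similar structure of the two-level tilted loss and argue that the proof of Lemma~\ref{lem:localsmooth} transfers verbatim once the base losses $l$ are replaced by the class-wise tilted losses $\tilde{R}_{n,k}$, whose smoothness and strong convexity are supplied by Lemmas~\ref{lem:localsmooth} and~\ref{lem:localstrongcvx}. Your version is in fact more explicit than the paper's sketch --- you compute the Hessian decomposition (convex combination of class Hessians plus $\tau$ times the gradient covariance) and derive the affine form $B_1+\tau B_2+\lambda B_3$ of the smoothness constant that the paper only asserts later in Lemma~\ref{lem:cvg_onevariable}.
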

\begin{proof}
The main idea follows the proof for Lemma~\ref{lem:localsmooth}. Particularly, 
the class-wise tilted loss $ \tilde{R}_{n,k}(\lambda;{\bf v}_n)$ is the tilted version of the conventional loss $l$ and 
Lemma~\ref{lem:localsmooth} requires the loss $l$ to be smooth and strongly convex based on Assumption 1. 
Similarly, the  two-level tilted client loss $\tilde{R}_{n}(\tau,\lambda;{\bf v}_n)$ is the tilted version of  the class-wise tilted loss $ \tilde{R}_{n,k}(\lambda;{\bf v}_n)$, and hence we require it 
to be smooth and strongly convex, which are verified in Lemma~\ref{lem:localsmooth} and Lemma~\ref{lem:localstrongcvx}.   
Furthermore, the local objective 
$L_n({\bf v}_n, {\bf w})= \tilde{R}_{n}(\lambda,\tau;{\bf v}_n)  + \mu/2 \|{\bf v}_n - {\bf w}\|^2$ is naturally a smoothed version of $\tilde{R}_{n}(\lambda,\tau;{\bf v}_n)$ for any given ${\bf w}$, thus completing the proof. 
\end{proof}

\begin{lemma}
[Strong convexity of the client loss $\tilde{R}_{n}(\tau,\lambda;{\bf v}_n)$ and local objective $L_n({\bf v}_n, {\bf w})$ for a given ${\bf w}$ with positive $\tau$ and $\lambda$] 
Under Assumption 1 
and Lemma~\ref{lem:localstrongcvx}, for any $\tau,\lambda>0$, the 
client loss  $\tilde{R}_{n}(\tau,\lambda;{\bf v}_n)$ and local objective $L_n({\bf v}_n, {\bf w})$ are a  strongly convex function of ${\bf v}_n$.
More specifically, for $\tau>0,\lambda>0$, $\nabla_{{\bf v}_n}^2 \tilde{R}_{n}(\lambda,\tau;{\bf v}_n) > \beta_{min} {\bf I}$ and $\nabla_{{\bf v}_n}^2 {L}_{n}({\bf v}_n, {\bf w}) > (\beta_{min} + \mu){\bf I}$. 
\end{lemma}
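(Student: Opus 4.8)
The plan is to treat $\tilde{R}_{n}(\tau,\lambda;{\bf v}_n)$ as a \emph{one-level} $\tau$-tilted loss whose ``base losses'' are the class-wise tilted losses $\tilde{R}_{n,k}(\lambda;{\bf v}_n)$, and then reuse the log-sum-exp Hessian computation that underlies Lemma~\ref{lem:localstrongcvx}. First I would record the inputs: by Assumption~1 (hence Lemma~\ref{lem:globalsmooth}) each $\tilde{R}_{n,k}(\lambda;{\bf v}_n)$ is twice continuously differentiable, and by Lemma~\ref{lem:localstrongcvx}, for every $\lambda>0$ we have $\nabla_{{\bf v}_n}^2 \tilde{R}_{n,k}(\lambda;{\bf v}_n) \succ \beta_{\min}{\bf I}$. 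Abbreviate $f_k := \tilde{R}_{n,k}(\lambda;{\bf v}_n)$ and $p_k := |D_{n,k}|/|D_n|$, so that $\sum_k p_k = 1$ and $\tilde{R}_{n}(\tau,\lambda;{\bf v}_n) = \tfrac{1}{\tau}\log\big(\sum_k p_k e^{\tau f_k}\big)$.

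Next I would differentiate twice in ${\bf v}_n$. Introducing the tilt-induced weights $w_k := p_k e^{\tau f_k}\big/\sum_j p_j e^{\tau f_j}$, which again form a probability vector, a direct calculation gives $\nabla \tilde{R}_n = \sum_k w_k \nabla f_k$ and
\[
\nabla^2 \tilde{R}_n = \sum_k w_k \nabla^2 f_k \;+\; \tau\Big( \sum_k w_k\, \nabla f_k \nabla f_k^\top - \big(\textstyle\sum_k w_k \nabla f_k\big)\big(\textstyle\sum_k w_k \nabla f_k\big)^\top \Big).
\]
The parenthesized matrix is the covariance of the family $\{\nabla f_k\}$ under the distribution $w$, hence positive semidefinite, and since $\tau>0$ the whole second term is $\succeq 0$. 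The first term is a convex combination of the $\nabla^2 f_k$, so by Lemma~\ref{lem:localstrongcvx} it satisfies $\sum_k w_k \nabla^2 f_k \succ \big(\sum_k w_k \beta_{\min}\big){\bf I} = \beta_{\min}{\bf I}$. Adding the two contributions yields $\nabla_{{\bf v}_n}^2 \tilde{R}_{n}(\tau,\lambda;{\bf v}_n) \succ \beta_{\min}{\bf I}$ for all $\tau,\lambda>0$, i.e.\ $\tilde{R}_n$ is $\beta_{\min}$-strongly convex in ${\bf v}_n$.

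Finally, since $L_n({\bf v}_n,{\bf w}) = \tilde{R}_{n}(\tau,\lambda;{\bf v}_n) + \tfrac{\mu}{2}\|{\bf v}_n - {\bf w}\|^2$ and the proximal term contributes exactly $\mu{\bf I}$ to the Hessian with respect to ${\bf v}_n$ for any fixed ${\bf w}$, we obtain $\nabla_{{\bf v}_n}^2 L_n({\bf v}_n,{\bf w}) \succ (\beta_{\min}+\mu){\bf I}$, which is the claim. I expect the main obstacle to be the Hessian decomposition step: making rigorous that the ``second-moment minus first-moment-squared'' matrix of $\{\nabla f_k\}$ under $w$ is a genuine covariance (hence PSD), and noting that this is precisely where the \emph{positivity of $\tau$} is used (a negative $\tau$ would flip its sign and could destroy strong convexity). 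A secondary point to be careful about is the regime of validity: Lemma~\ref{lem:globalsmooth} only asserts smoothness in a neighborhood of the minimizer, so strictly the strong-convexity bound should be stated on that same neighborhood unless one argues that the Hessian lower bound in Lemma~\ref{lem:localstrongcvx} (and therefore the one above) holds globally.
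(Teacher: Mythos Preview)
Your proposal is correct and follows essentially the same approach as the paper: both view $\tilde{R}_{n}(\tau,\lambda;\cdot)$ as a one-level $\tau$-tilt of the class-wise losses $\tilde{R}_{n,k}(\lambda;\cdot)$ and then reapply the strong-convexity argument of Lemma~\ref{lem:localstrongcvx}, finishing by adding the $\mu{\bf I}$ contribution of the proximal term. The paper's proof is terser (it merely says ``follow the proof of Lemma~\ref{lem:localstrongcvx}''), whereas you spell out the log-sum-exp Hessian decomposition into a convex-combination term and a $\tau$-weighted covariance term; this is exactly the computation underlying Lemma~\ref{lem:localstrongcvx}, so the two arguments coincide.
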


\begin{proof}
The main idea follows the proof for Lemma~\ref{lem:localstrongcvx}. Particularly, Lemma~\ref{lem:localstrongcvx} requires the loss $l$ to be strongly convex based on Assumption 1, 
where we require the class-wise loss  $ \tilde{R}_{n,k}(\lambda;{\bf v}_n)$ to be strongly convex, which is verified in
Lemma~\ref{lem:localstrongcvx}. 
Note that when $\nabla^2_{{\bf v}_n} l(z_n; {\bf v}_n) \geq \beta_{\min} {\bf I}$, Lemma~\ref{lem:localstrongcvx} has $\nabla_{{\bf v}_n}^2  \tilde{R}_{n,k}(\lambda;{\bf v}_n) > \beta_{min} {\bf I}$ 
$\forall \lambda>0$. 
Based on this,
$\forall \tau>0,\lambda>0$, $\nabla_{{\bf v}_n}^2 \tilde{R}_{n}(\lambda,\tau;{\bf v}_n) > \beta_{min} {\bf I}$.
As $L_n({\bf v}_n, {\bf w})= \tilde{R}_{n}(\lambda,\tau;{\bf v}_n)  + \mu/2 \|{\bf v}_n - {\bf w}\|^2$, we have  $\nabla_{{\bf v}_n}^2 \tilde{L}_{n}({\bf v}_n, {\bf w}) > (\beta_{min} + \mu){\bf I}$ for a fixed ${\bf w}$. 
\end{proof}

Next, we will first introduce the following theorem and then have the lemma that 
shows the convergence result when either client model ${\bf v}_n$ or global model ${\bf w}$ is fixed.

\begin{theorem}[Karimi et al.\cite{karimi2016linear}]
For an unconstrained optimization problem $\arg\min_{x} f(x)$, where $f$ is $L$-smooth 
and satisfies the PL inequality with constant $\mu$. 
Then the gradient descent method with a step-size of $1/L$, i.e., $x^{t+1} = x^t - \frac{1}{L} \nabla f(x^t)$, has a global linear convergence rate, i.e., 
$f(x^t) - f(x^*) \leq (1-\frac{\mu}{L})^t (f(x^0) - f(x^*))$.
\end{theorem}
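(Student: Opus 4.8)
The plan is to chain together the two hypotheses — $L$-smoothness and the PL inequality — to obtain a one-step contraction, and then iterate. First I would apply the $L$-smoothness definition with $x = x^t$ and $y = x^{t+1}$, which gives the descent-lemma bound
\begin{align}
f(x^{t+1}) \leq f(x^t) + \langle \nabla f(x^t), x^{t+1} - x^t \rangle + \frac{L}{2}\|x^{t+1} - x^t\|^2. \nonumber
\end{align}
Substituting the gradient-descent update $x^{t+1} - x^t = -\frac{1}{L}\nabla f(x^t)$ into both inner-product and norm terms, the cross term contributes $-\frac{1}{L}\|\nabla f(x^t)\|^2$ and the quadratic term contributes $+\frac{1}{2L}\|\nabla f(x^t)\|^2$, so after collecting,
\begin{align}
f(x^{t+1}) \leq f(x^t) - \frac{1}{2L}\|\nabla f(x^t)\|^2. \nonumber
\end{align}
This is the point where the chosen step size $1/L$ is exactly what makes the two contributions combine cleanly into a single negative $\frac{1}{2L}$ coefficient.

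Next I would invoke the PL inequality (Definition~\ref{def:PL}), which reads $\frac{1}{2}\|\nabla f(x^t)\|^2 \geq \mu (f(x^t) - f(x^*))$, i.e.\ $\|\nabla f(x^t)\|^2 \geq 2\mu (f(x^t) - f(x^*))$. Plugging this lower bound on the gradient norm into the descent bound and subtracting $f(x^*)$ from both sides yields the per-step linear contraction
\begin{align}
f(x^{t+1}) - f(x^*) \leq \Big(1 - \frac{\mu}{L}\Big)\big(f(x^t) - f(x^*)\big). \nonumber
\end{align}
Here I would also note in passing that $L$-smoothness forces $\mu \leq L$, so the contraction factor $1 - \mu/L$ lies in $[0,1)$ and the recursion is genuinely contracting.

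Finally I would close the argument by a trivial induction on $t$: applying the per-step contraction $t$ times starting from $x^0$ gives $f(x^t) - f(x^*) \leq (1 - \frac{\mu}{L})^t (f(x^0) - f(x^*))$, which is exactly the claimed bound. There is no real obstacle in this proof; it is the classical Karimi--Nutini--Schmidt argument, and the only point requiring a small amount of care is the algebraic cancellation in the descent step, where the $-\frac{1}{L}$ from the linear term and the $+\frac{1}{2L}$ from the quadratic term must be combined correctly to produce the $-\frac{1}{2L}$ coefficient that the subsequent PL substitution relies on. Since this theorem is quoted verbatim from \cite{karimi2016linear}, I would present the above as a self-contained restatement of their proof rather than claim novelty.
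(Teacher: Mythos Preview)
Your argument is correct and is precisely the standard Karimi--Nutini--Schmidt proof: descent lemma from $L$-smoothness with step size $1/L$, substitution of the PL lower bound on $\|\nabla f(x^t)\|^2$, and induction on the resulting one-step contraction. The paper itself does not supply a proof for this theorem at all---it simply cites \cite{karimi2016linear} and uses the result as a black box in the subsequent lemmas---so your self-contained restatement is strictly more than what the paper provides, and there is nothing to compare against.
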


\begin{lemma}
Under Assumption 1
and based on Lemmas~\ref{lem:cvx_LP}-\ref{lem:globalstrongcvx} and Theorem~\ref{thm:karimi}, 
we have: 1) For any given ${\bf w}$, 
$\exists B_1, B_2, B_3 < +\infty$ that do not depend on $\tau$ and $\lambda$ such that 
$\forall \tau,\lambda>0$, after $t$ iterations of gradient descent with the step size $\alpha=\frac{1}{B_1+\tau B_2+\lambda B_3}$, 
$L_{n}({\bf v}_n^t, {\bf w}) - L_{n}({\bf v}_n^*, {\bf w}) \leq \big(1 - \frac{\beta_{\min} + \mu}{B_1 + \tau B_2 + \lambda B_3} \big)^{t} (L_{n}({\bf v}_n^0, {\bf w})- L_{n}({\bf v}_n^*, {\bf w}))$, where ${\bf v}_n^t$ means the updated client model ${\bf v}_n$ in the $t$-th iteration.
2) For any given ${\bf v}_n$, 
$\exists C_1, C_2, C_3 < +\infty$ that do not depend on $\tau$ and $\lambda$ such that for any $\tau,\lambda>0$, after $t$ iterations of gradient descent with the step size $\beta=\frac{1}{C_1+\tau C_2+\lambda C_3}$, 
$L_{n}({\bf v}_n, {\bf w}^t) - L_{n}({\bf v}_n, {\bf w}^*) \leq \big(1 - \frac{\mu}{C_1 + \tau C_2 + \lambda C_3} \big)^{t} (L_{n}({\bf v}_n, {\bf w}^0)- L_{n}({\bf v}_n, {\bf w}^*))$, where ${\bf w}^t$ means the updated global model ${\bf w}$ in the $t$-th iteration.
\end{lemma}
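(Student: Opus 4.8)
The plan is to reduce each of the two claims to the Karimi et al.\ linear-convergence theorem (Theorem~\ref{thm:karimi}). That theorem needs two inputs for the objective at hand: a Polyak-Lojasiewicz (PL) inequality with some constant $m$, and an $L$-smoothness bound; gradient descent with step size $1/L$ then converges at rate $(1-m/L)^t$. So for claim~1) I would run the argument on $L_n(\cdot,{\bf w})$ viewed as a function of ${\bf v}_n$ with ${\bf w}$ frozen, and for claim~2) on $L_n({\bf v}_n,\cdot)$ viewed as a function of ${\bf w}$ with ${\bf v}_n$ frozen, choosing $m$ and $L$ so that the resulting rate matches the stated expression.

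For claim~1), Lemma~\ref{lem:globalstrongcvx} already gives that $L_n(\cdot,{\bf w})$ is $(\beta_{\min}+\mu)$-strongly convex for every $\tau,\lambda>0$, and Lemma~\ref{lem:cvx_LP} upgrades this to the PL inequality with the same constant $\beta_{\min}+\mu$. The real work is the smoothness side: I must exhibit finite $B_1,B_2,B_3$, \emph{independent of $\tau$ and $\lambda$}, with $\nabla^2_{{\bf v}_n}L_n({\bf v}_n,{\bf w})\preceq (B_1+\tau B_2+\lambda B_3){\bf I}$ near the minimizer. I would do this by reusing the Hessian computation behind Lemma~\ref{lem:localsmooth}: for a single log-sum-exp layer with tilt $\lambda$, the Hessian of the tilted loss equals a $\lambda$-tilted expectation of $\nabla^2 l$ plus $\lambda$ times a $\lambda$-tilted covariance of $\nabla l$; under Assumption~1 the first term is $\preceq\beta_{\max}{\bf I}$, and on a neighborhood of the optimizer where $\|\nabla l\|$ is uniformly bounded the second term is $\preceq \lambda\, b_\lambda{\bf I}$ for some constant $b_\lambda$. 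Applying the same estimate a second time across classes with tilt $\tau$ contributes an additional $\tau\, b_\tau{\bf I}$, the class-reweighting factors $|D_{n,k}|/|D_n|\le 1$ are harmless, and the proximal term adds $\mu{\bf I}$; collecting everything yields the affine bound. Theorem~\ref{thm:karimi} with $L=B_1+\tau B_2+\lambda B_3$, step size $\alpha=1/L$, and PL constant $\beta_{\min}+\mu$ then gives exactly the claimed contraction factor $1-\tfrac{\beta_{\min}+\mu}{B_1+\tau B_2+\lambda B_3}$.

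For claim~2), fix ${\bf v}_n$ and regard $L_n({\bf v}_n,{\bf w})=\tilde{R}_n(\tau,\lambda;{\bf v}_n)+\tfrac{\mu}{2}\|{\bf v}_n-{\bf w}\|^2$ as a function of ${\bf w}$ alone; the first summand is then a constant, so the objective is $\mu$-strongly convex in ${\bf w}$ (hence PL with constant $\mu$ by Lemma~\ref{lem:cvx_LP}) and $\mu$-smooth. Picking any finite $C_1,C_2,C_3$ with $C_1+\tau C_2+\lambda C_3\ge\mu$ for all $\tau,\lambda>0$ (for instance $C_1=\mu$, $C_2=C_3=0$, or larger constants if one wants a single expression to feed into the two-variable analysis), the objective is also $(C_1+\tau C_2+\lambda C_3)$-smooth, and Theorem~\ref{thm:karimi} with step size $\beta=1/(C_1+\tau C_2+\lambda C_3)$ delivers the rate $1-\tfrac{\mu}{C_1+\tau C_2+\lambda C_3}$.

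The main obstacle is the affine-in-$(\tau,\lambda)$ smoothness estimate with coefficients that stay bounded as the tilts vary: it hinges on the explicit Hessian formula for a doubly tilted (nested log-sum-exp) loss and on a uniform gradient-norm bound over the neighborhood of the optimizer on which Lemmas~\ref{lem:localsmooth} and~\ref{lem:globalsmooth} are stated. Both ingredients are inherited from the TERM analysis, but they must be threaded carefully through the two nested tilt layers and the per-class reweighting before the constants $B_i$ (and the harmless $C_i$) can be read off.
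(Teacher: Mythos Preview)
Your proposal is correct and follows essentially the same route as the paper: for each claim, establish strong convexity (via Lemma~\ref{lem:globalstrongcvx}), lift to PL (via Lemma~\ref{lem:cvx_LP}), produce an affine-in-$(\tau,\lambda)$ smoothness constant, and invoke Theorem~\ref{thm:karimi}. You actually supply more detail than the paper does---in particular the Hessian decomposition justifying the $B_1+\tau B_2+\lambda B_3$ form, and your observation for claim~2) that $L_n({\bf v}_n,\cdot)$ is a pure quadratic in ${\bf w}$ (so any $C_i$ with $C_1+\tau C_2+\lambda C_3\ge\mu$ suffices) is sharper than the paper's bare citation of Lemma~\ref{lem:globalsmooth}.
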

\begin{proof}
First, we observe that the local objective 
$L_{n}({\bf v}_n, {\bf w})$ is $(\beta_{\min} + \mu)$-strongly convex for any given ${\bf w}$ and all $\tau,\lambda>0$ from {\bf Lemma}~\ref{lem:globalstrongcvx}. 
Based on {\bf Lemma}~\ref{lem:cvx_LP}, $L_{n}({\bf v}_n, {\bf w})$ with a given ${\bf w} $ also satisfies the PL inequality with constant $(\beta_{\min} + \mu)$. 
Next, noticed by {\bf Lemma}~\ref{lem:globalsmooth} and the proof for {\bf Lemma}~\ref{lem:localsmooth}, there exist $B_1, B_2, B_3 < +\infty$ such that $L_{n}({\bf v}_n, {\bf w})$ is $(B_1 + \tau B_2 + \lambda B_3)$-smooth 
for all $\tau,\lambda>0$ and a given ${\bf w}$. 
Now, using the gradient descent method to optimize $L_{n}({\bf v}_n, {\bf w})$ with a fixed ${\bf w}$, we have the convergence result of 1) based on {\bf Theorem}~\ref{thm:karimi}.
Similarly, 
the local objective 
$L_{n}({\bf v}_n, {\bf w})$ is $\mu$-strongly convex for any given ${\bf v}_n$ and all $\tau,\lambda>0$ from {\bf Lemma}~\ref{lem:globalstrongcvx}. 
Based on {\bf Lemma}~\ref{lem:cvx_LP}, $L_{n}({\bf v}_n, {\bf w})$ with a given ${\bf v}_n$ also satisfies the PL inequality with constant $\mu$. 
Next, noticed by {\bf Lemma}~\ref{lem:globalsmooth},
there exist $C_1, C_2, C_3 < +\infty$ such that $L_{n}({\bf v}_n, {\bf w})$ is $(C_1 + \tau C_2 + \lambda C_3)$-smooth 
for all $\tau,\lambda>0$ for a given ${\bf v}_n$. 
Now, using the gradient descent method to optimize $L_{n}({\bf v}_n, {\bf w})$ with a fixed ${\bf v}_n$, we have the convergence result of 2) based on {\bf Theorem}~\ref{thm:karimi}. 
\end{proof}

\noindent {\bf Proof of Theorem 3}

\begin{proof}
$L_{n}({\bf v}_n^t, {\bf w}^t) - L_{n}({\bf v}_n^*, {\bf w}^*) = [L_{n}({\bf v}_n^t, {\bf w}^t) - L_{n}({\bf v}_n^*, {\bf w}^t)] + [L_{n}({\bf v}_n^*, {\bf w}^t)-L_{n}({\bf v}_n^*, {\bf w}^*)]$. We now bound each of the two terms. 
First, based on the first part of {\bf Lemma}~\ref{lem:cvg_onevariable}, $L_{n}({\bf v}_n^t, {\bf w}^t) - L_{n}({\bf v}_n^*, {\bf w}^t) \leq \Lambda^t \big(L_{n}({\bf v}_n^0, {\bf w}^t) - L_{n}({\bf v}_n^*, {\bf w}^t) \big) = \Lambda^t \big(\tilde{R}(\lambda,\tau; {\bf v}_n^0) - \tilde{R}(\lambda,\tau; {\bf v}_n^*) + \frac{\mu}{2}\|{\bf v}_n^0 - {\bf w}^t \|^2 - \frac{\mu}{2}\|{\bf v}_n^* - {\bf w}^t \|^2 \big) 
\leq \Lambda^t \big( (\tilde{R}(\lambda,\tau; {\bf v}_n^0) - \tilde{R}(\lambda,\tau; {\bf v}_n^*)) + \frac{\mu}{2} \big( \|{\bf v}_n^0 - {\bf w}^* \|^2 + \|{\bf w}^* - {\bf w}^t \|^2 +  \|{\bf w}^t\|^2 - \|{\bf v}_n^*\|^2 \big) \big) \leq \Lambda^t \big( D + \frac{\mu}{2} g(t) \big)$, where $(\tilde{R}(\lambda,\tau; {\bf v}_n^0) - \tilde{R}(\lambda,\tau; {\bf v}_n^*)) + \frac{\mu}{2} \big( \|{\bf v}_n^0 - {\bf w}^* \|^2 + \|{\bf w}^t\|^2 - \|{\bf v}_n^*\|^2 \leq D$. 
For the second term, based on the second part of {\bf Lemma}~\ref{lem:cvg_onevariable}, we have $L_{n}({\bf v}_n^*, {\bf w}^t)-L_{n}({\bf v}_n^*, {\bf w}^*) \leq \Gamma^t \big( L_{n}({\bf v}_n^*, {\bf w}^0)-L_{n}({\bf v}_n^*, {\bf w}^*)  \big) =  \Gamma^t \big( \| {\bf v}_n^* - {\bf w}^0 \|^2 - \|{\bf v}_n^* - {\bf w}^* \|^2 \big) \leq \Gamma^t \cdot E$, where $\| {\bf v}_n^* - {\bf w}^0 \|^2 - \|{\bf v}_n^* - {\bf w}^* \|^2 \leq E$.  
Hence, $L_{n}({\bf v}_n^t, {\bf w}^t) - L_{n}({\bf v}_n^*, {\bf w}^*) \leq (D+\frac{\mu}{2} g(t)) \Lambda^{t}  +  E \Gamma^t$ and it becomes $0$ when $t \rightarrow \infty$ as $\Lambda, \Gamma < 1$. 
\end{proof}

Finally, we show the convergence result of FedTilt. We first state two assumptions also used in the existing works, e.g., Ditto~\cite{li2021ditto}. 

\noindent {\bf Assumption 2.}
The global model converges with rate $g(t)$. That is, there exists $g(t)$ such that $\lim_{t\rightarrow \infty } g(t)=0, \|{\bf w}^t - {\bf w}^*\|^2 \leq g(t)$. 
E.g., the global model for FedAvg converges with rate $O(1/t)$~\cite{li2019convergence}.

\noindent {\bf Assumption 3.}
The distance between the optimal (initial) client models (i.e., ${\bf v}_n^*, {\bf v}_n^0$) and the optimal (initial) global model (i.e., ${\bf w}^*, {\bf w}^0$) are bounded and ${\bf w}^t, \forall t$ is also norm-bounded.
\begin{theorem}[Convergence result on the client models]
Under Lemma~\ref{lem:cvg_onevariable} and Assumptions 2 and 3,  
for any $\tau,\lambda>0$, after $t$ iterations of gradient descent with the step size $\alpha$ and $\beta$, 
$L_{n}({\bf v}_n^t, {\bf w}^t) - L_{n}({\bf v}_n^*, {\bf w}^*) \leq (D+\frac{\mu}{2} g(t)) \Lambda^{t}  +  E \Gamma^t$, where $\Lambda=(1-\frac{\beta_{\min}+\mu}{B_1+\tau B_2 + \lambda C_3})$, $\Gamma = (1-\frac{\mu}{C_1+\tau C_2 + \lambda C_3})$ and 
$D$ and $E$ are constants defined hereafter. 
\end{theorem}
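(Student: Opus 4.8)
The plan is to reduce the joint convergence of the pair $({\bf v}_n^t, {\bf w}^t)$ to two one-variable convergence statements that are already available in Lemma~\ref{lem:cvg_onevariable}. First I would insert the intermediate point $L_{n}({\bf v}_n^*, {\bf w}^t)$ and split the error as $L_{n}({\bf v}_n^t, {\bf w}^t) - L_{n}({\bf v}_n^*, {\bf w}^*) = (\mathrm{I}) + (\mathrm{II})$, where $(\mathrm{I}) := L_{n}({\bf v}_n^t, {\bf w}^t) - L_{n}({\bf v}_n^*, {\bf w}^t)$ is the sub-optimality of the local iterate when the global model is frozen at ${\bf w}^t$, and $(\mathrm{II}) := L_{n}({\bf v}_n^*, {\bf w}^t) - L_{n}({\bf v}_n^*, {\bf w}^*)$ is the drift of the frozen optimum caused by ${\bf w}^t \neq {\bf w}^*$. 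The point of this split is that each summand is exactly the quantity controlled by one of the two parts of Lemma~\ref{lem:cvg_onevariable}.

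For term $(\mathrm{I})$ I would invoke part~1 of Lemma~\ref{lem:cvg_onevariable} with ${\bf w} = {\bf w}^t$: since $L_{n}(\cdot, {\bf w}^t)$ is $(\beta_{\min}+\mu)$-strongly convex (hence satisfies the PL inequality by Lemma~\ref{lem:cvx_LP}) and $(B_1+\tau B_2+\lambda B_3)$-smooth (by Lemma~\ref{lem:globalsmooth}), $t$ gradient-descent steps give $(\mathrm{I}) \le \Lambda^t \big( L_{n}({\bf v}_n^0, {\bf w}^t) - L_{n}({\bf v}_n^*, {\bf w}^t) \big)$. I would then expand $L_{n}({\bf v}_n, {\bf w}) = \tilde{R}_n(\tau,\lambda;{\bf v}_n) + \tfrac{\mu}{2}\|{\bf v}_n - {\bf w}\|^2$: the $\tilde{R}_n$ part contributes a $t$-independent constant, while the quadratic part $\tfrac{\mu}{2}\big(\|{\bf v}_n^0 - {\bf w}^t\|^2 - \|{\bf v}_n^* - {\bf w}^t\|^2\big)$ is controlled by inserting ${\bf w}^*$, applying the triangle inequality, and using Assumption~2 ($\|{\bf w}^t - {\bf w}^*\|^2 \le g(t)$) together with the boundedness of $\|{\bf w}^t\|$ and of the inter-model distances from Assumption~3; this produces a bound of the form $\big(D + \tfrac{\mu}{2} g(t)\big)\Lambda^t$, with $D$ collecting all the $t$-independent constants. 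For term $(\mathrm{II})$ I would apply part~2 of Lemma~\ref{lem:cvg_onevariable} with ${\bf v}_n$ frozen at ${\bf v}_n^*$; because the $\tilde{R}_n({\bf v}_n^*)$ contributions cancel, $(\mathrm{II}) \le \Gamma^t \big( \tfrac{\mu}{2}\|{\bf v}_n^* - {\bf w}^0\|^2 - \tfrac{\mu}{2}\|{\bf v}_n^* - {\bf w}^*\|^2 \big) \le E\,\Gamma^t$, which is finite by Assumption~3. Adding the two bounds yields the claimed inequality, and it tends to $0$ as $t \to \infty$ because $\Lambda, \Gamma \in (0,1)$ and $g(t) \to 0$.

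The step I expect to be the main obstacle is justifying that Lemma~\ref{lem:cvg_onevariable} may be applied with ${\bf w}^t$ treated as fixed. That lemma establishes a linear rate for gradient descent on $L_{n}(\cdot, {\bf w})$ at a \emph{fixed} ${\bf w}$, whereas in Algorithm~\ref{app:Algorithm} the personalized and global models are updated within the same loop, so the target ${\bf w}^t$ is moving. The argument must be that each global round refines ${\bf v}_n$ against the \emph{current} ${\bf w}^t$ and that the effect of the moving target enters only through the drift term $(\mathrm{II})$ and the $g(t)$ factor inside the bound for $(\mathrm{I})$; making this decoupling rigorous rather than implicit is the delicate point. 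A secondary, more mechanical difficulty is the uniform-in-$t$ bookkeeping — verifying that $D$ and $E$ can be chosen finite and independent of $t$ — which is exactly what the norm bounds in Assumption~3 are there to supply.
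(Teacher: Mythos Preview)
Your proposal is correct and follows essentially the same route as the paper: the identical split $L_{n}({\bf v}_n^t, {\bf w}^t) - L_{n}({\bf v}_n^*, {\bf w}^*) = (\mathrm{I}) + (\mathrm{II})$, application of the two parts of Lemma~\ref{lem:cvg_onevariable} to each summand, expansion of $L_n$ into its tilted and quadratic pieces, and use of Assumptions~2--3 to absorb the $t$-independent terms into $D$ and $E$. The concern you flag about treating ${\bf w}^t$ as fixed when invoking part~1 of Lemma~\ref{lem:cvg_onevariable} is legitimate but is handled in the paper exactly as you anticipate---implicitly, with the moving-target effect absorbed into the $g(t)$ factor and term $(\mathrm{II})$.
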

\begin{proof}
$L_{n}({\bf v}_n^t, {\bf w}^t) - L_{n}({\bf v}_n^*, {\bf w}^*) = [L_{n}({\bf v}_n^t, {\bf w}^t) - L_{n}({\bf v}_n^*, {\bf w}^t)] + [L_{n}({\bf v}_n^*, {\bf w}^t)-L_{n}({\bf v}_n^*, {\bf w}^*)]$. We now bound each of the two terms. 
First, based on the first part of {\bf Lemma}~\ref{lem:cvg_onevariable}, $L_{n}({\bf v}_n^t, {\bf w}^t) - L_{n}({\bf v}_n^*, {\bf w}^t) \leq \Lambda^t \big(L_{n}({\bf v}_n^0, {\bf w}^t) - L_{n}({\bf v}_n^*, {\bf w}^t) \big) = \Lambda^t \big(\tilde{R}(\lambda,\tau; {\bf v}_n^0) - \tilde{R}(\lambda,\tau; {\bf v}_n^*) + \frac{\mu}{2}\|{\bf v}_n^0 - {\bf w}^t \|^2 - \frac{\mu}{2}\|{\bf v}_n^* - {\bf w}^t \|^2 \big) 
\leq \Lambda^t \big( (\tilde{R}(\lambda,\tau; {\bf v}_n^0) - \tilde{R}(\lambda,\tau; {\bf v}_n^*)) + \frac{\mu}{2} \big( \|{\bf v}_n^0 - {\bf w}^* \|^2 + \|{\bf w}^* - {\bf w}^t \|^2 +  \|{\bf w}^t\|^2 - \|{\bf v}_n^*\|^2 \big) \big) \leq \Lambda^t \big( D + \frac{\mu}{2} g(t) \big)$, where $(\tilde{R}(\lambda,\tau; {\bf v}_n^0) - \tilde{R}(\lambda,\tau; {\bf v}_n^*)) + \frac{\mu}{2} \big( \|{\bf v}_n^0 - {\bf w}^* \|^2 + \|{\bf w}^t\|^2 - \|{\bf v}_n^*\|^2 \leq D$. 
For the second term, based on the second part of {\bf Lemma}~\ref{lem:cvg_onevariable}, we have $L_{n}({\bf v}_n^*, {\bf w}^t)-L_{n}({\bf v}_n^*, {\bf w}^*) \leq \Gamma^t \big( L_{n}({\bf v}_n^*, {\bf w}^0)-L_{n}({\bf v}_n^*, {\bf w}^*)  \big) =  \Gamma^t \big( \| {\bf v}_n^* - {\bf w}^0 \|^2 - \|{\bf v}_n^* - {\bf w}^* \|^2 \big) \leq \Gamma^t \cdot E$, where $\| {\bf v}_n^* - {\bf w}^0 \|^2 - \|{\bf v}_n^* - {\bf w}^* \|^2 \leq E$.  
Hence, $L_{n}({\bf v}_n^t, {\bf w}^t) - L_{n}({\bf v}_n^*, {\bf w}^*) \leq (D+\frac{\mu}{2} g(t)) \Lambda^{t}  +  E \Gamma^t$ and it becomes $0$ when $t \rightarrow \infty$ as $\Lambda, \Gamma < 1$. 
\end{proof}

{\bf Theorem}~\ref{thm:clientcvg} indicates that solving the tilted ERM local objective to a local optimum using the gradient-based method in Algorithm~\ref{app:Algorithm} is as efficient as traditional ERM objective.

\begin{figure*}[!t]
    \centering
    \subfloat[Gaussian noises {\label{mnist_noise1}}]
    {\includegraphics[bb=0 0 580 600, scale=0.14]{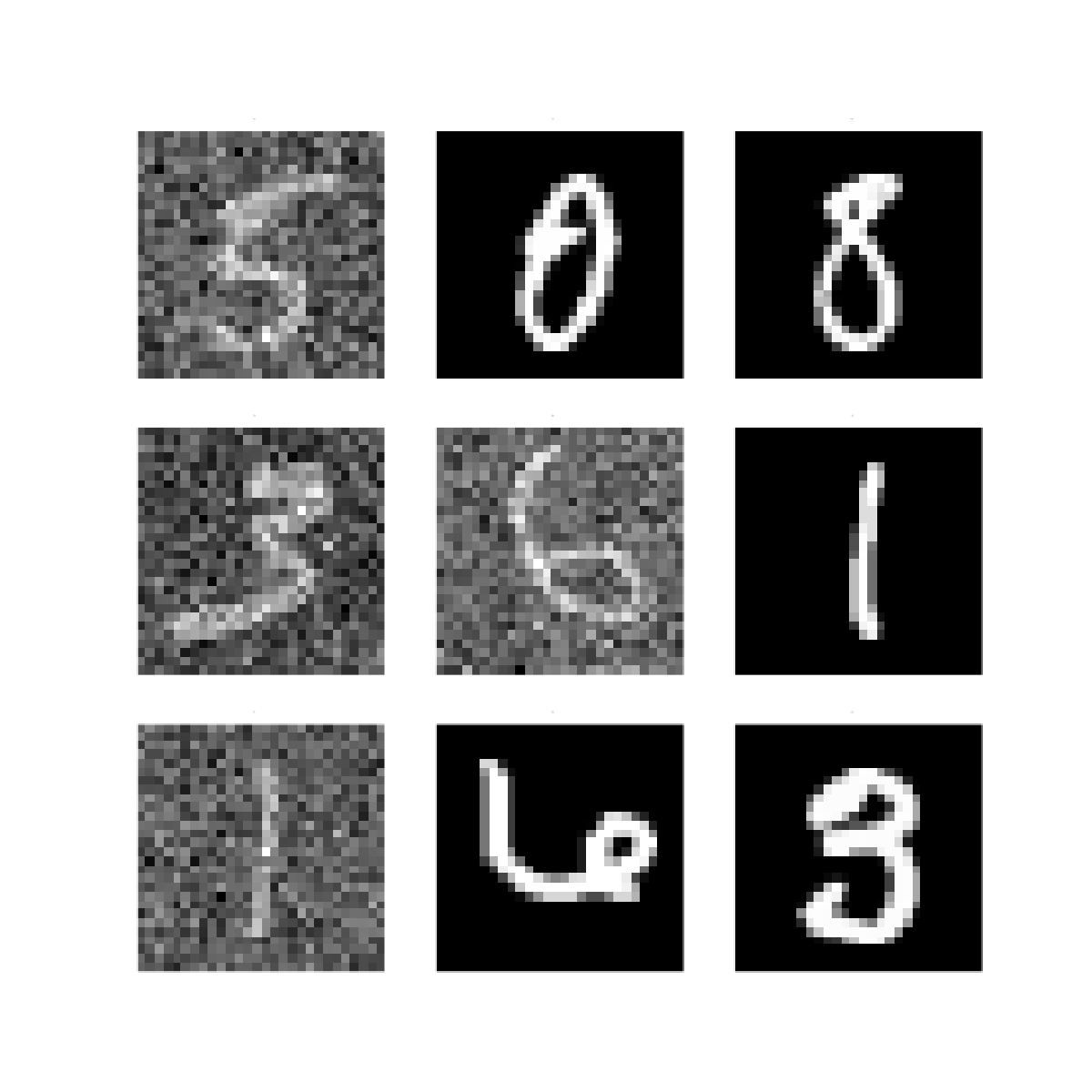}}
    \subfloat[Random corruptions {\label{mnist_noise2}}]
    {\includegraphics[bb=0 0 580 600, scale=0.14]{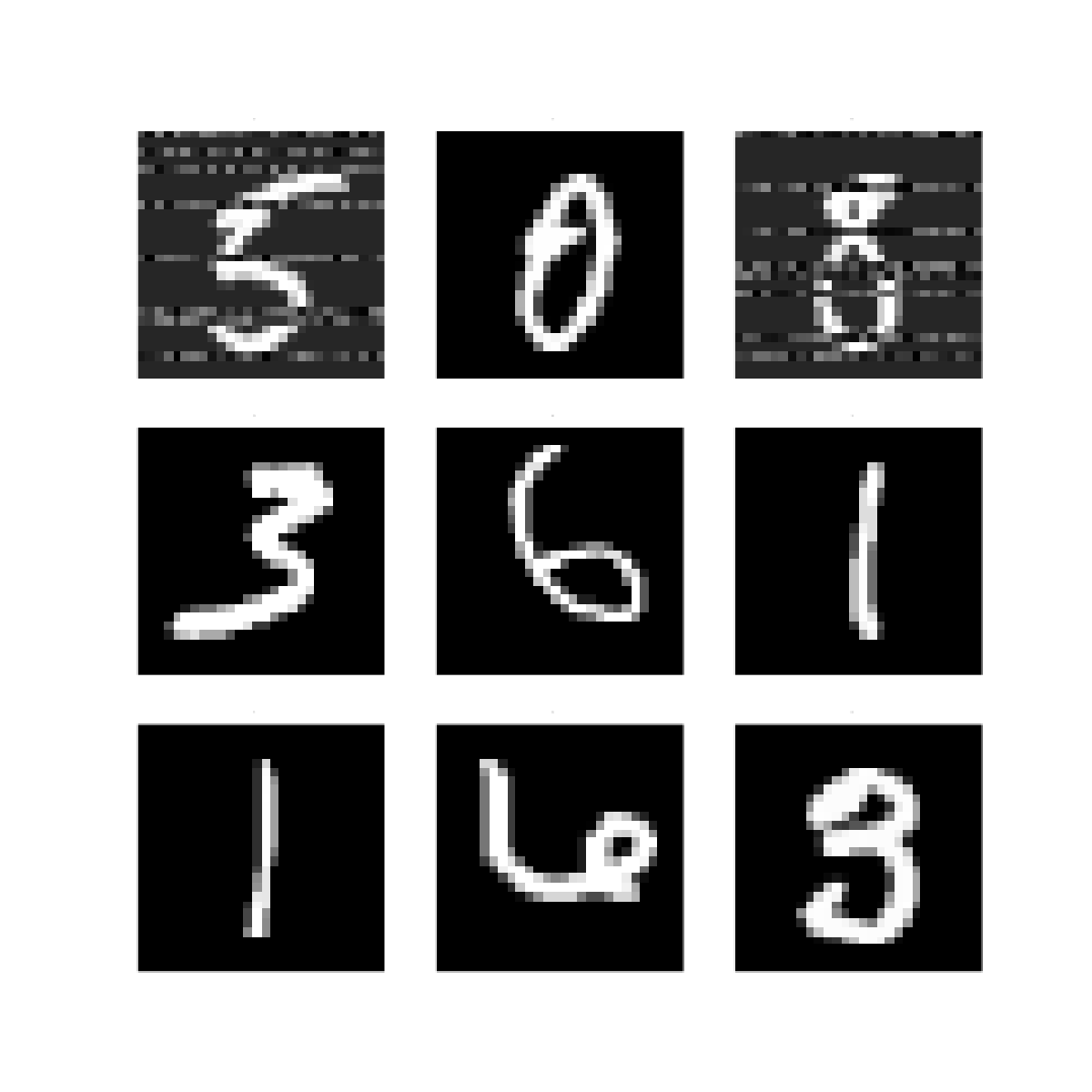}}
    \subfloat[Gaussian noises {\label{fmnist_noise1}}
    ]{\includegraphics[bb=0 0 580 600, scale=0.14]{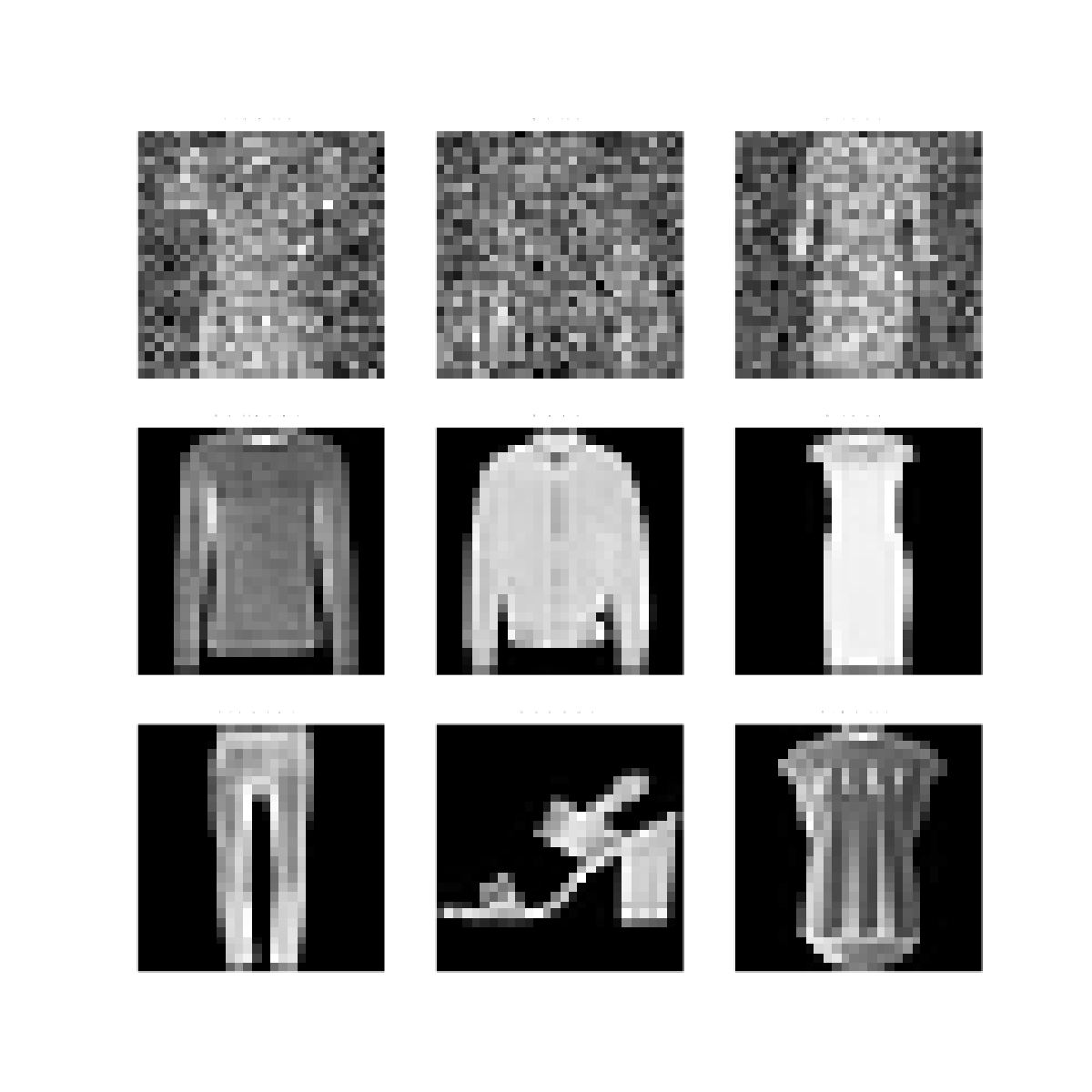}}
    \subfloat[Random corruptions {\label{fmnist_noise2}}
    ]{\includegraphics[bb=0 0 580 600, scale=0.14]{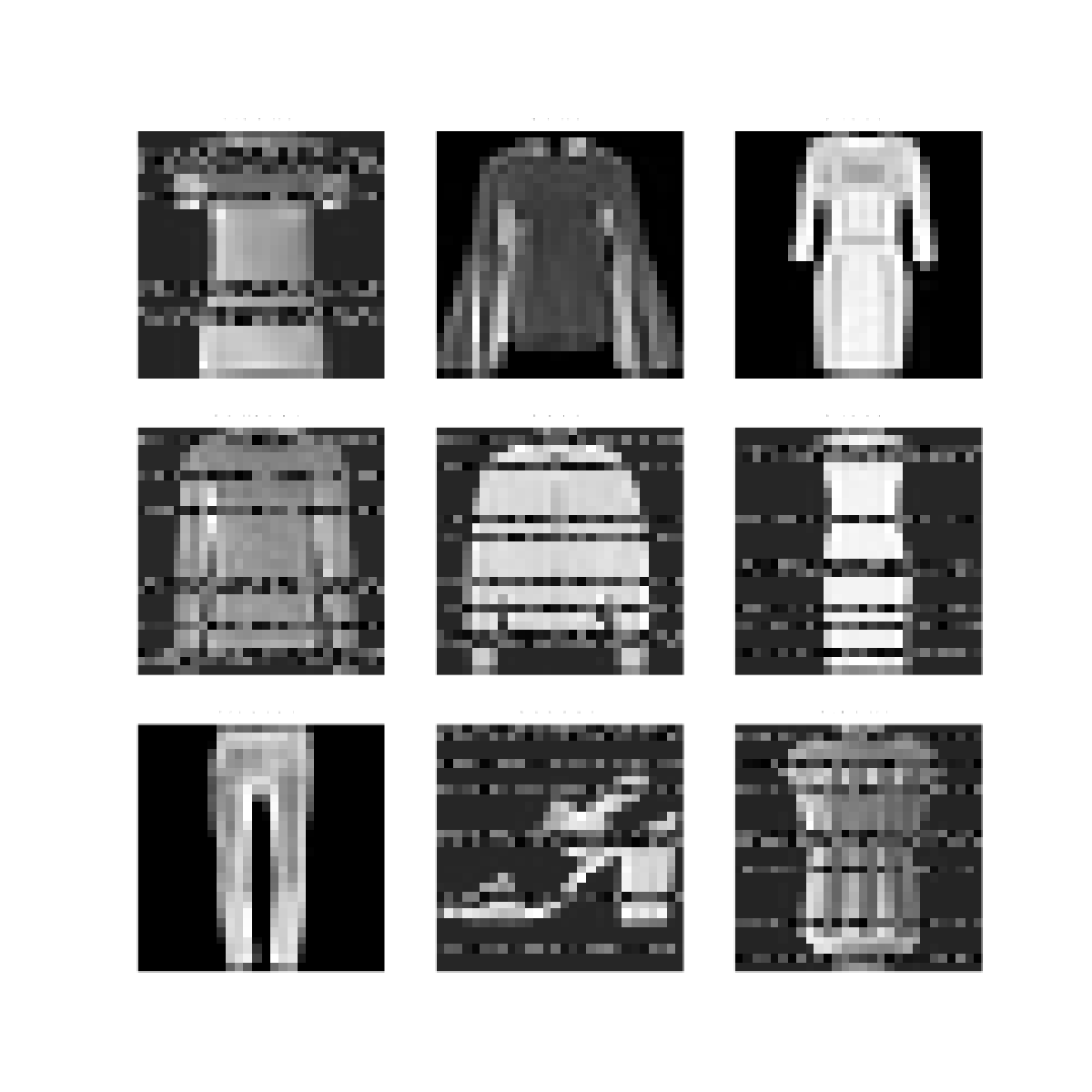}}
    \subfloat[Gaussian noises {\label{cifar_noise1}}
    ]{\includegraphics[bb=0 0 580 600, scale=0.14]{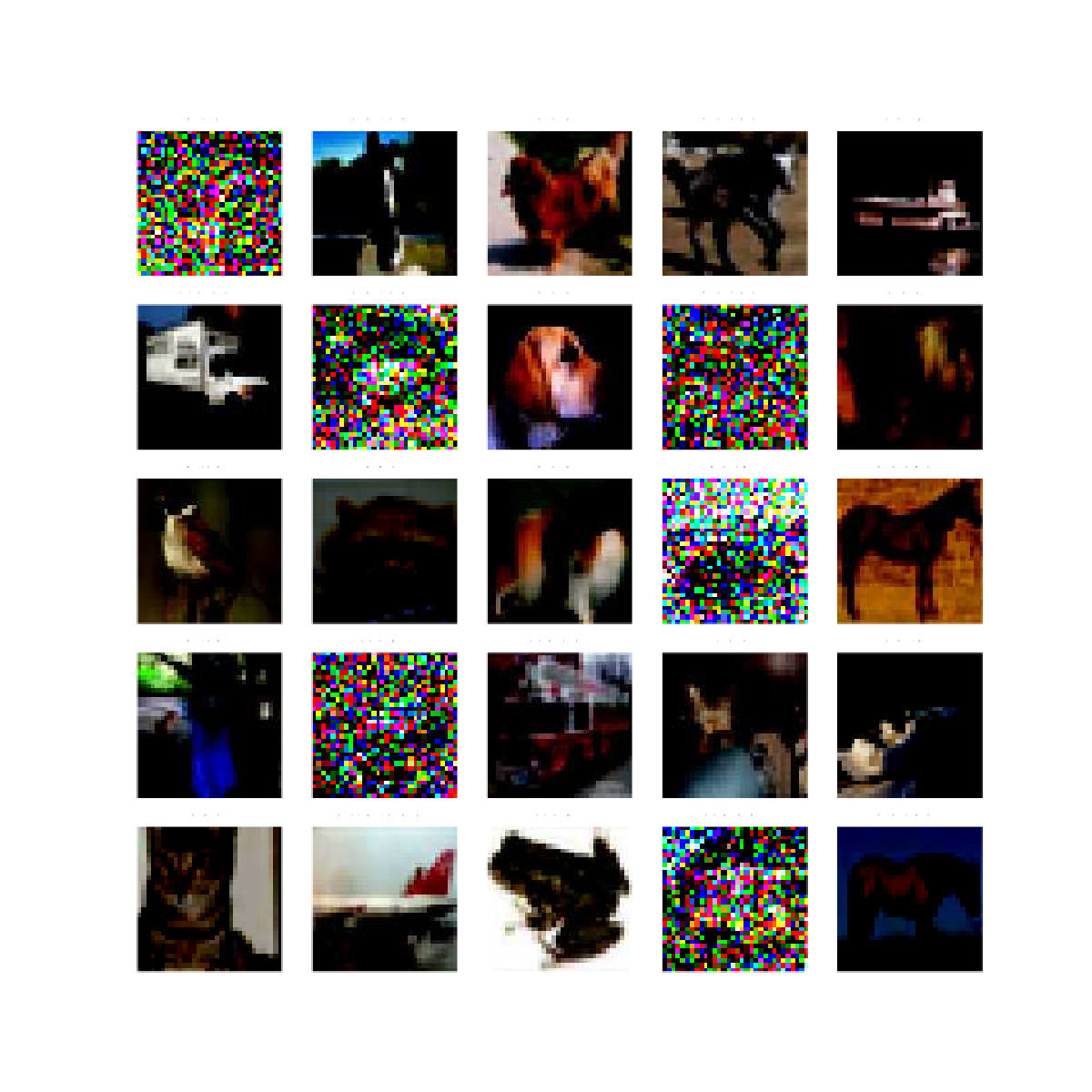}}
    \subfloat[Random corruptions {\label{cifar_noise2}}
    ]{\includegraphics[bb=0 0 580 600, scale=0.14]{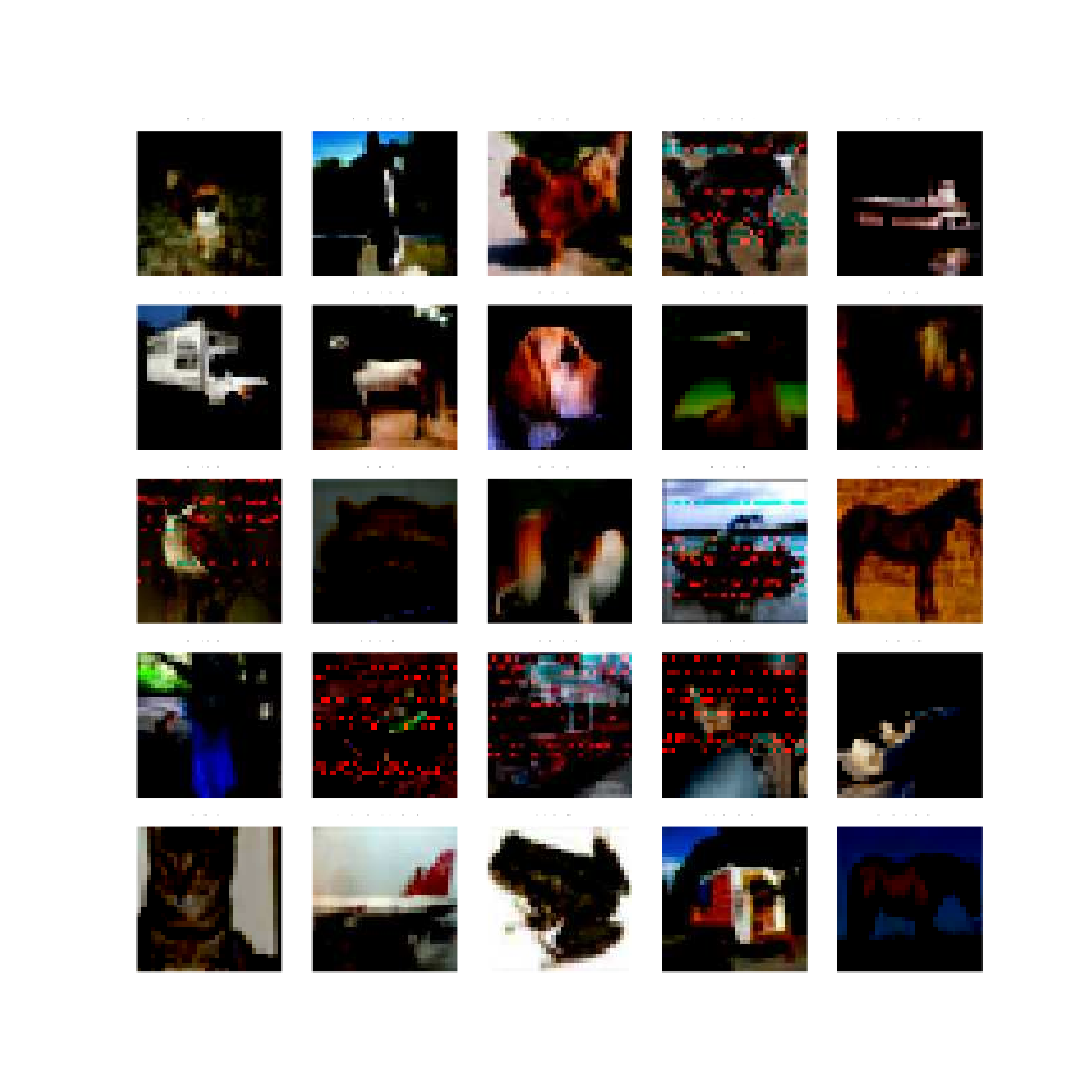}}
    \caption{Example MNIST (a) and (b), FashionMNIST (c) and (d), and CIFAR10 (e) and (f) with outliers. 
    } \label{Outliers}
\end{figure*}

\begin{table}[!ht]
\centering
\caption{Setup of Toy Example Experiments}
\label{tbl:fedtilt_summary}
\begin{tabular}{|c|c|c|c|c|}
\hline
\textbf{Exp} & \textbf{Client} & \textbf{Group} & \textbf{Center} & \textbf{Std Dev}\\
\hline
1 & 1 & 1 & $(0.5, 2.0)$ & $\sigma=0.5$ \\
\hline
1 & 1 & 2 & $(2.5, 1.0)$ & $\sigma=0.5$ \\
\hline
1 & 2 & 1 & $(1.0, 2.2)$ & $\sigma=0.5$ \\
\hline
1 & 2 & 2 & $(2.2, 0.8)$ & $\sigma=0.5$ \\
\hline
2 & 1 & 1 & $(0.5, 2.0)$ & $\sigma=0.35$ \\
\hline
2 & 1 & 2 & $(2.0, 1.0)$ & $\sigma=0.25$ \\
\hline
2 & 2 & 1 & $(0.5, 2.0)$ & $\sigma=0.35$ \\
\hline
2 & 2 & 2 & $(2.5, 1.8)$ & $\sigma=0.25$ \\
\hline
3 & 1 & 1 & $(1.0, 2.0)$ & $\sigma=1.0$ \\
\hline
3 & 1 & 2 & $(2.5, 1.0)$ & $\sigma=0.3$ \\
\hline
3 & 2 & 1 & $(1.0, 2.0)$ & $\sigma=1.0$ \\
\hline
3 & 2 & 2 & $(2.5, 1.0)$ & $\sigma=0.3$ \\
\hline
\end{tabular}
\end{table}

\subsection{More Experiments}
\label{app:exp}

\begin{figure*}[!t]
    \vspace{-0.45in}
    \centering
    \subfloat[\centering Client fairness: Client 1. class ratio=1:1, $\tau=1,\lambda=1$]
    {\includegraphics[bb=0 0 575 575, width=0.3\textwidth]{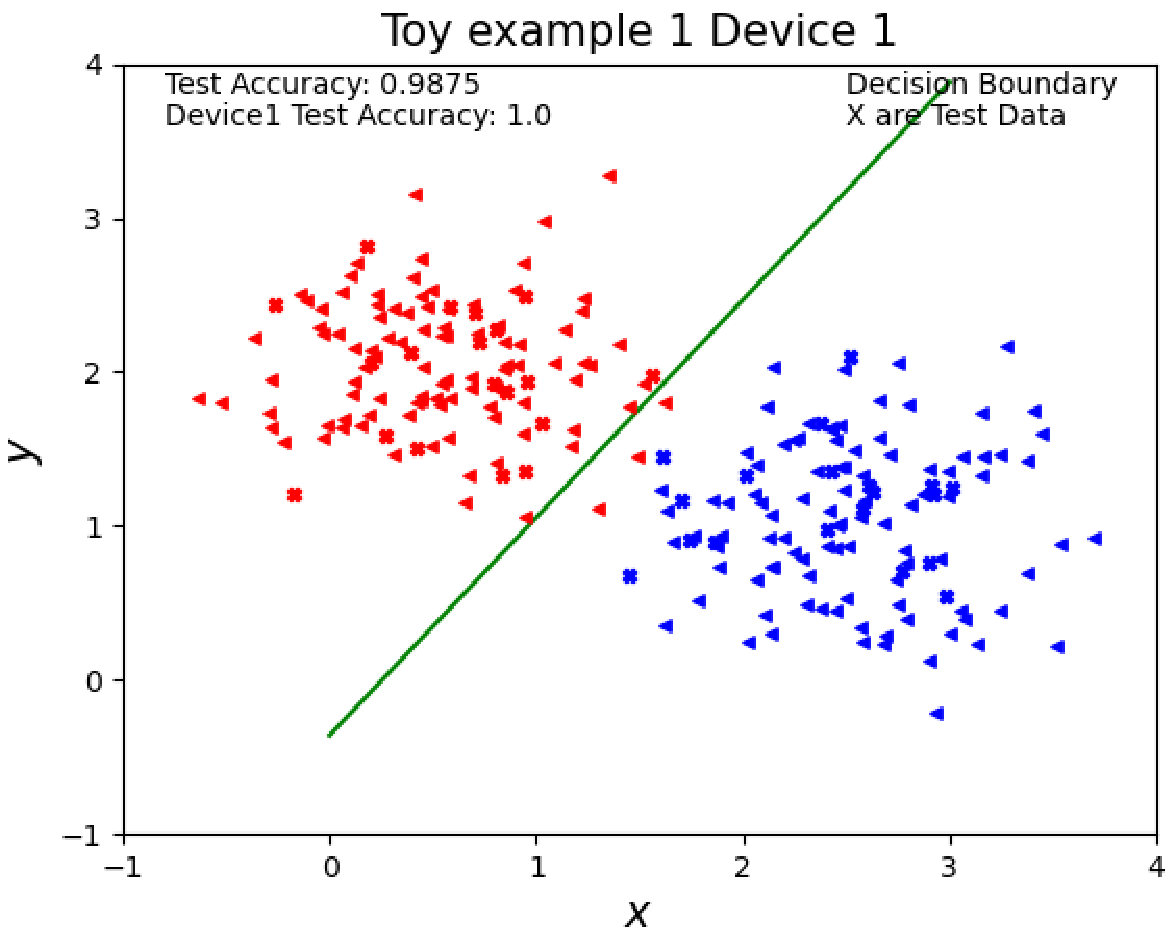}}
    \subfloat[{\centering Two-level fairness: Client 1. class ratio=3:1, $\lambda=10$}]
    {\includegraphics[bb=0 0 575 575, width=0.3\textwidth]{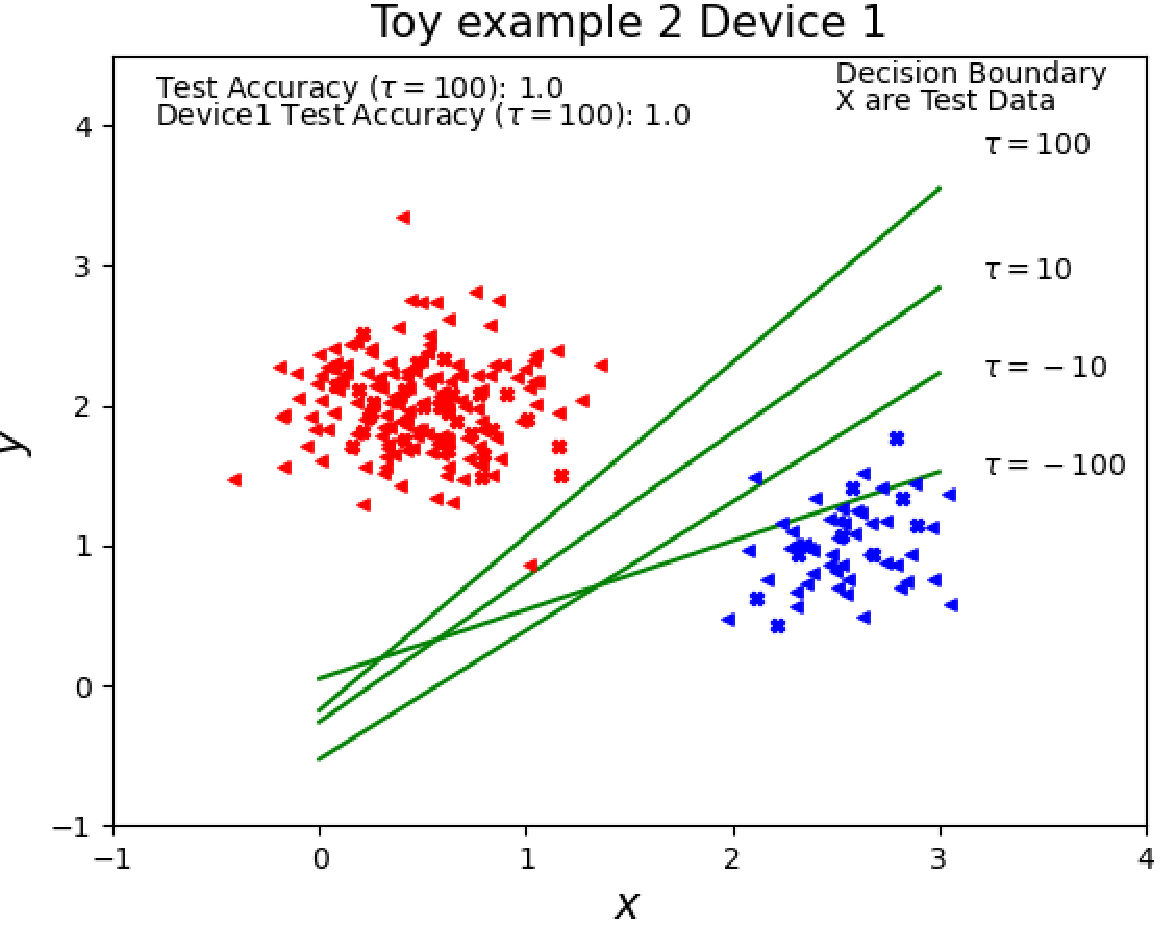}}
    \subfloat[{\centering Two-level fairness and robustness. class ratio=3:1, $\tau=10$}]
    {\includegraphics[bb=0 0 575 575, width=0.3\textwidth]{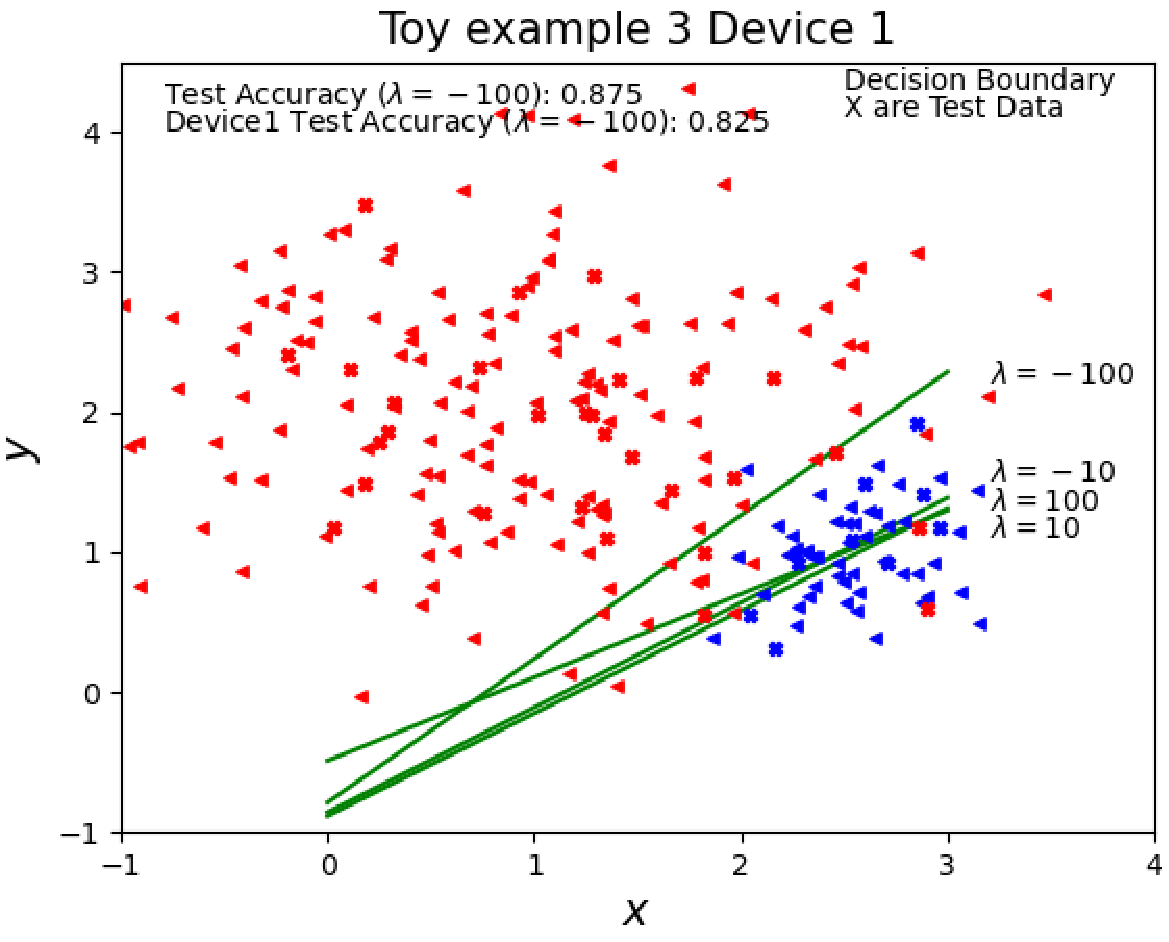}}
    
    \vspace{-0.35in}
    \subfloat[{\centering Client fairness: Client 2. class ratio=1:1, $\tau=1,\lambda=1$}]
    {\includegraphics[bb=0 0 575 575, width=0.3\textwidth]{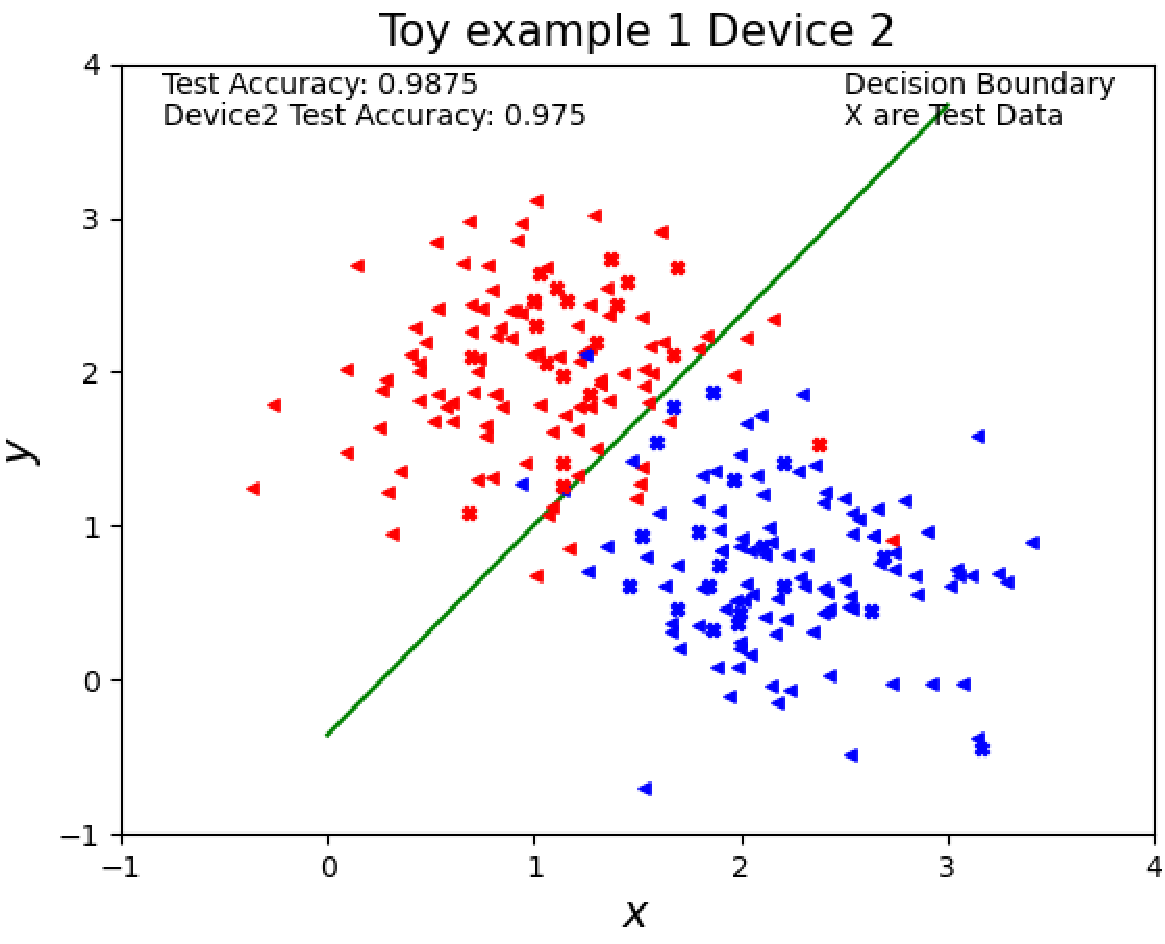}}
    \subfloat[{\centering Two-level fairness: Client 2. class ratio=3:1, $\lambda=10$}]
    {\includegraphics[bb=0 0 575 575, width=0.3\textwidth]{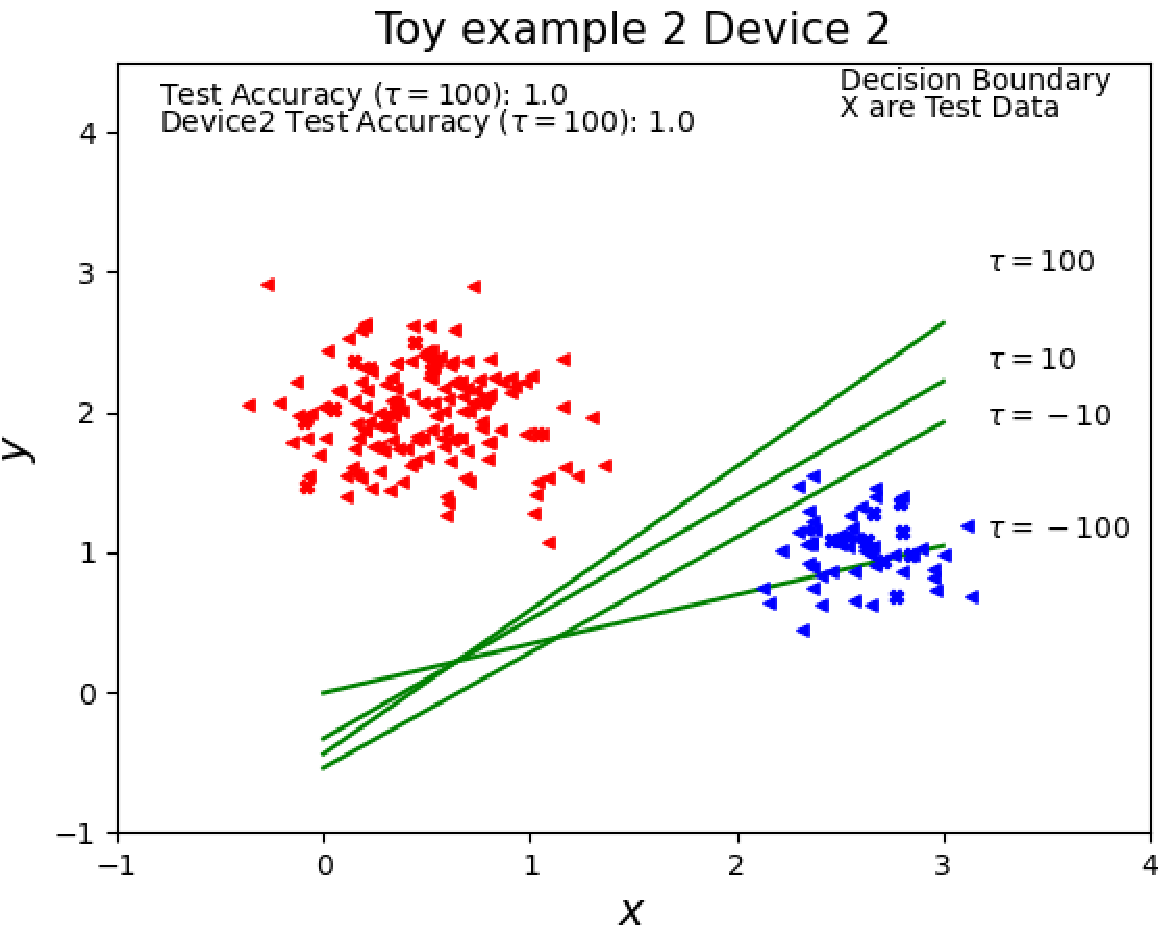}}
    \subfloat[{\centering Two-level fairness and robustness. class ratio=3:1, $\tau=10$}]
    {\includegraphics[bb=0 0 575 575, width=0.3\textwidth]{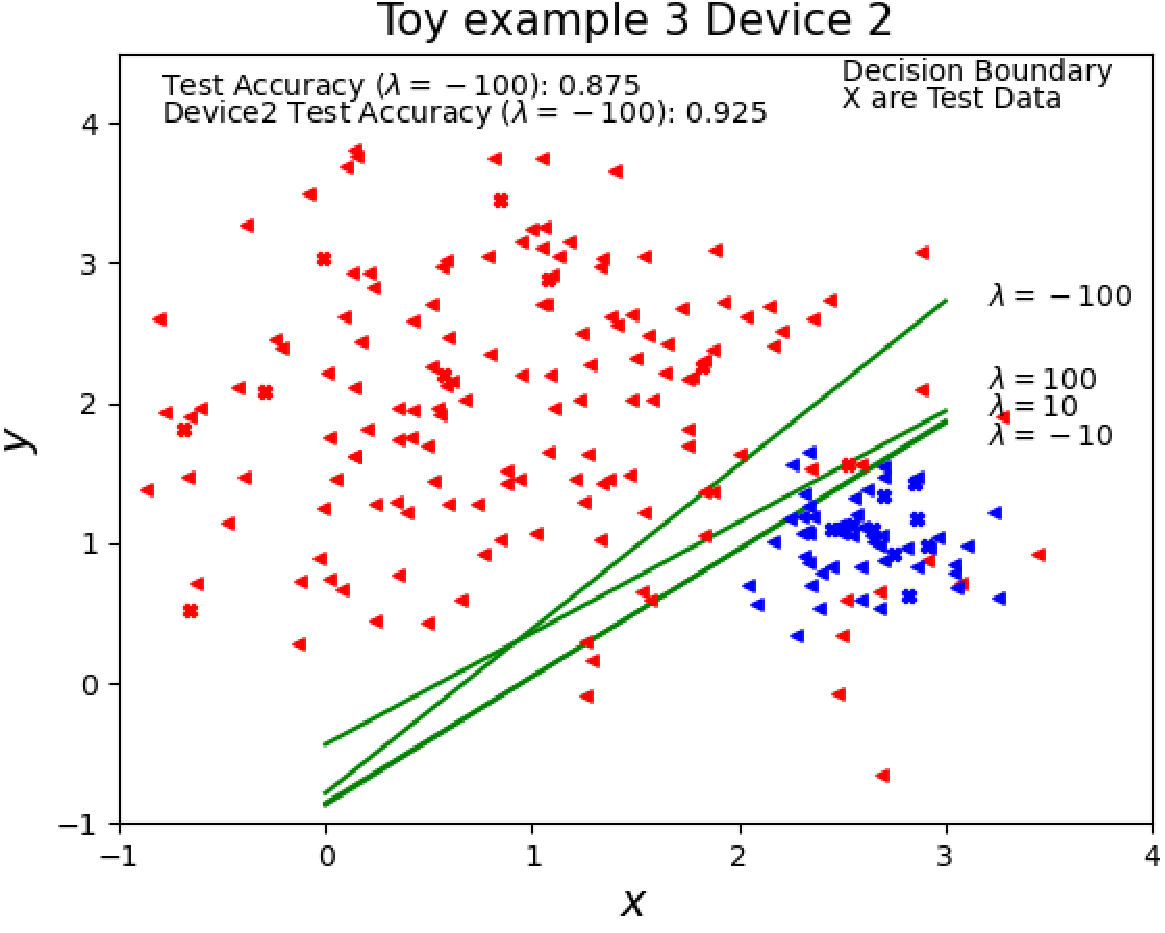}}
    \caption{Federated logistic regression results for binary classification. $q=0$ and \texttt{dist} is Euclidean distance.}
    \label{fig:toyLogReg}
   \vspace{-4mm}
\end{figure*}

\subsubsection{Experimental setup}

\noindent {\bf Datasets and models.} We evaluate FedTilt on three image datasets: MNIST, FashionMNIST (F-Mnist), and CIFAR10.

The MNIST database \cite{lecun-mnisthandwrittendigit-2010} has a training set of 60,000 examples, and a test set of 10,000 examples. It contains handwritten digits between 0 and 9.
The MNIST image classification task uses a multilayer perceptron (MLP)---3 linear layers and uses a ReLU as the activation function. A softmax function is applied to normalize the output of the network. The input of the model is a flattened 784-dim ($28\times28$) image, and the output is a class label between 0 and 9.

F-MNIST is similar to MNIST and used for benchmarking ML algorithms \cite{xiao2017fashionMNIST}. It shares the same image size, structure of training,  testing splits, MLP model, and number of class.

CIFAR10 dataset contains 50,000 32x32 (low-resolution) color training images and 10,000 test images, labeled over 10 categories, i.e., there are 6,000 images of each class. The 10 different classes represent airplanes, cars, birds, cats, deer, dogs, frogs, horses, ships, and trucks. A CNN is used to perform the classification task. The CNN is made of 3 convolutional blocks and a fully connected (FC) layer. 
All layers use ReLU as the activation function. The output of the model is a class label between 0 and 9.

Example clean images and their outliers are shown in Figure~\ref{Outliers}.

\begin{figure}[!t]
    \centering
    \subfloat[{Test Acc + Client fairness\label{mnist_20c_cf}}
    ]{\includegraphics[bb=0 0 1200 600, scale=0.098]{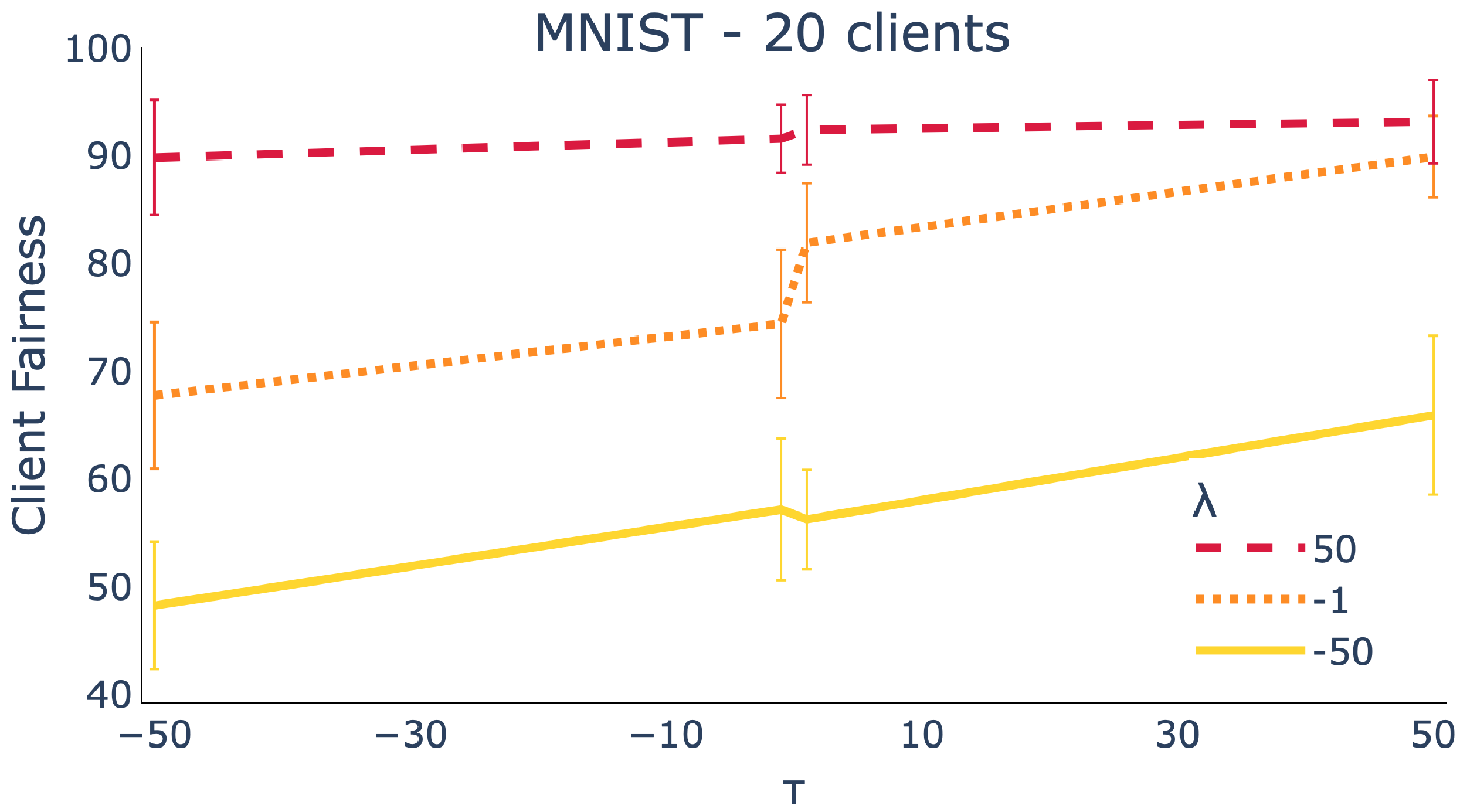}}
    \subfloat[{Client data fairness\label{mnist_20c_cdf}}
    ]{\includegraphics[bb=0 0 1200 600, scale=0.098]{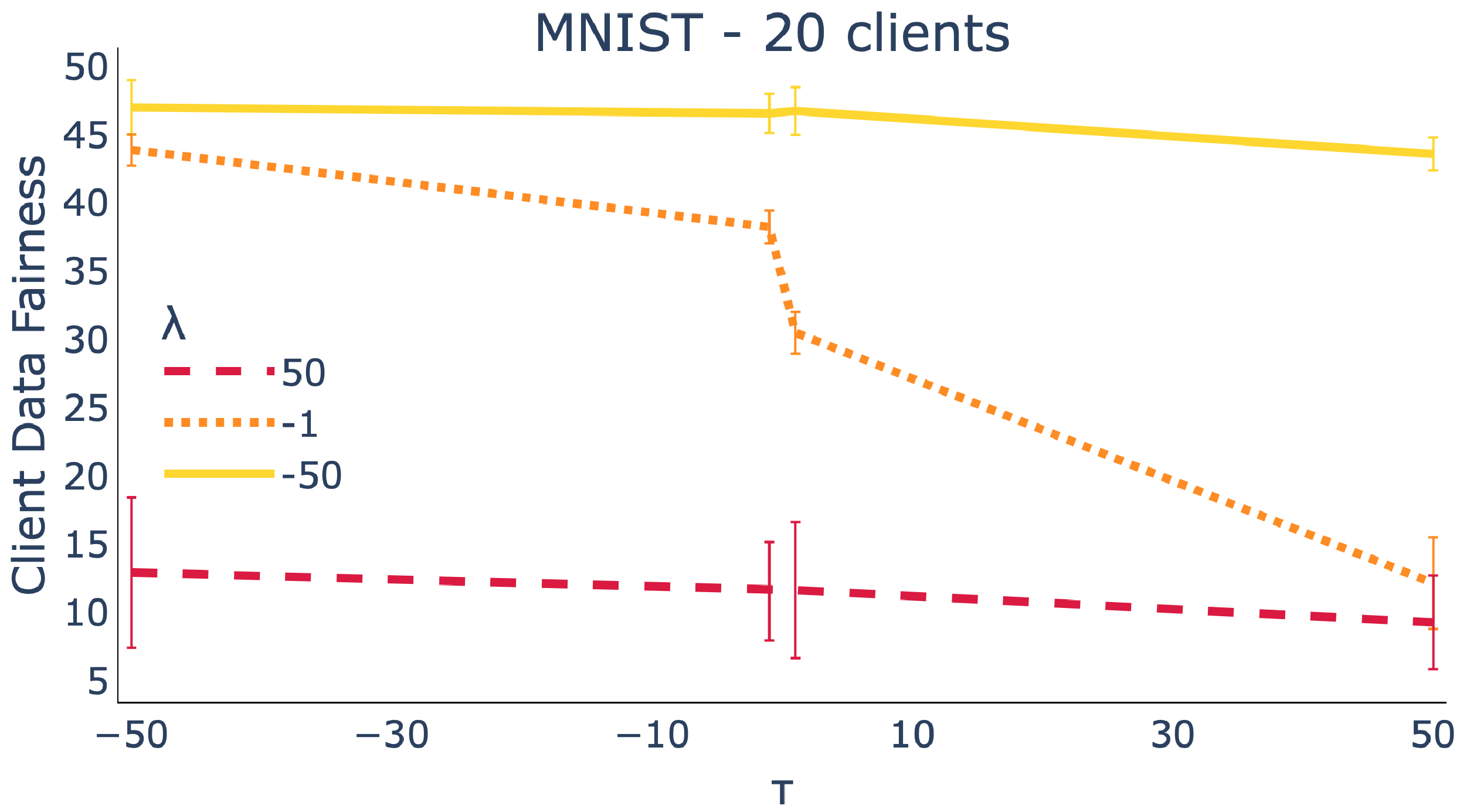}}
    \\
    \vspace{0.2in}
    \subfloat[{Test Acc + Client fairness\label{fmnist_20c_cf}}
    ]{\includegraphics[bb=0 0 1200 600, scale=0.098]{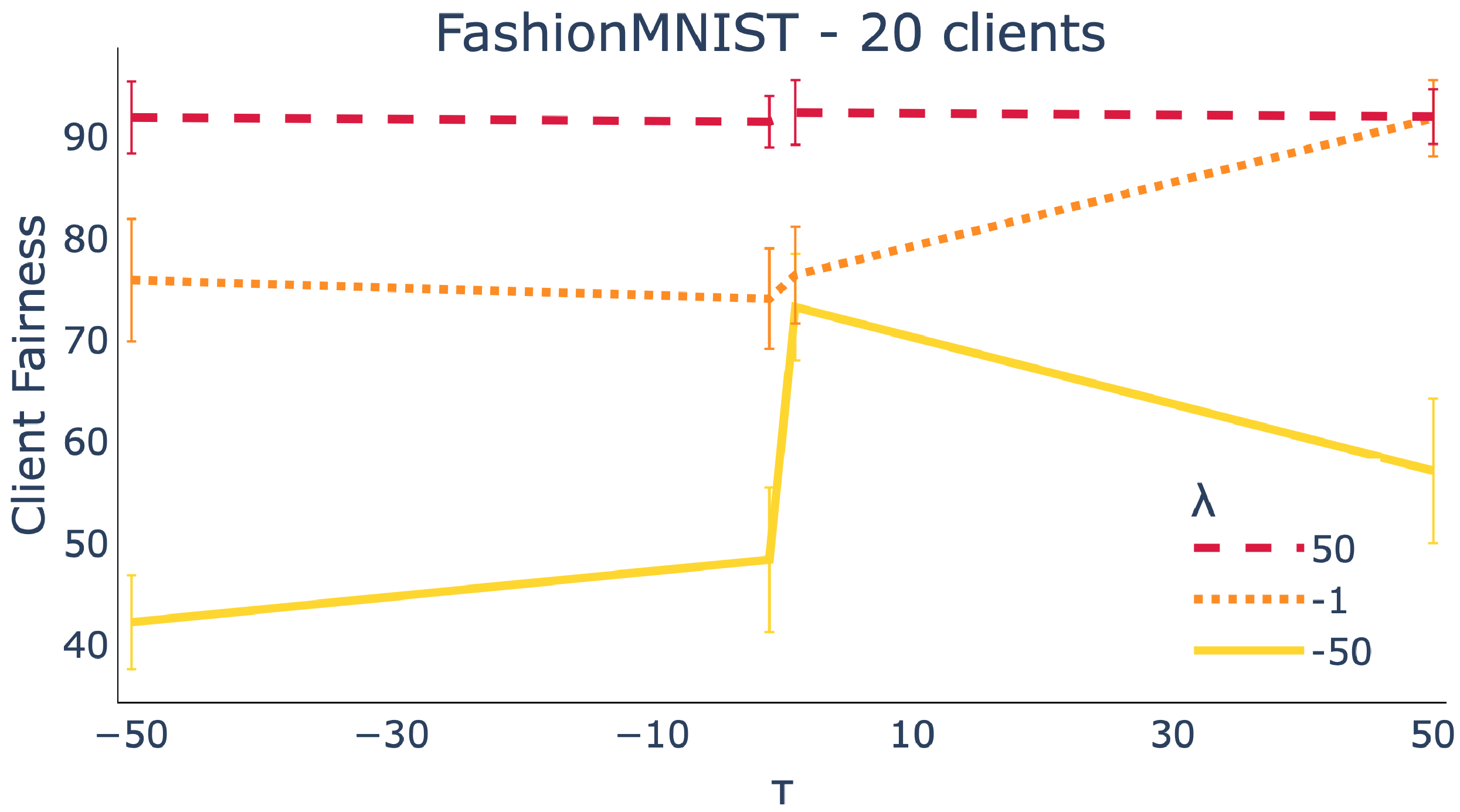}}
    \subfloat[{Client data fairness\label{fmnist_20c_cdf}}
    ]{\includegraphics[bb=0 0 1200 600, scale=0.098]{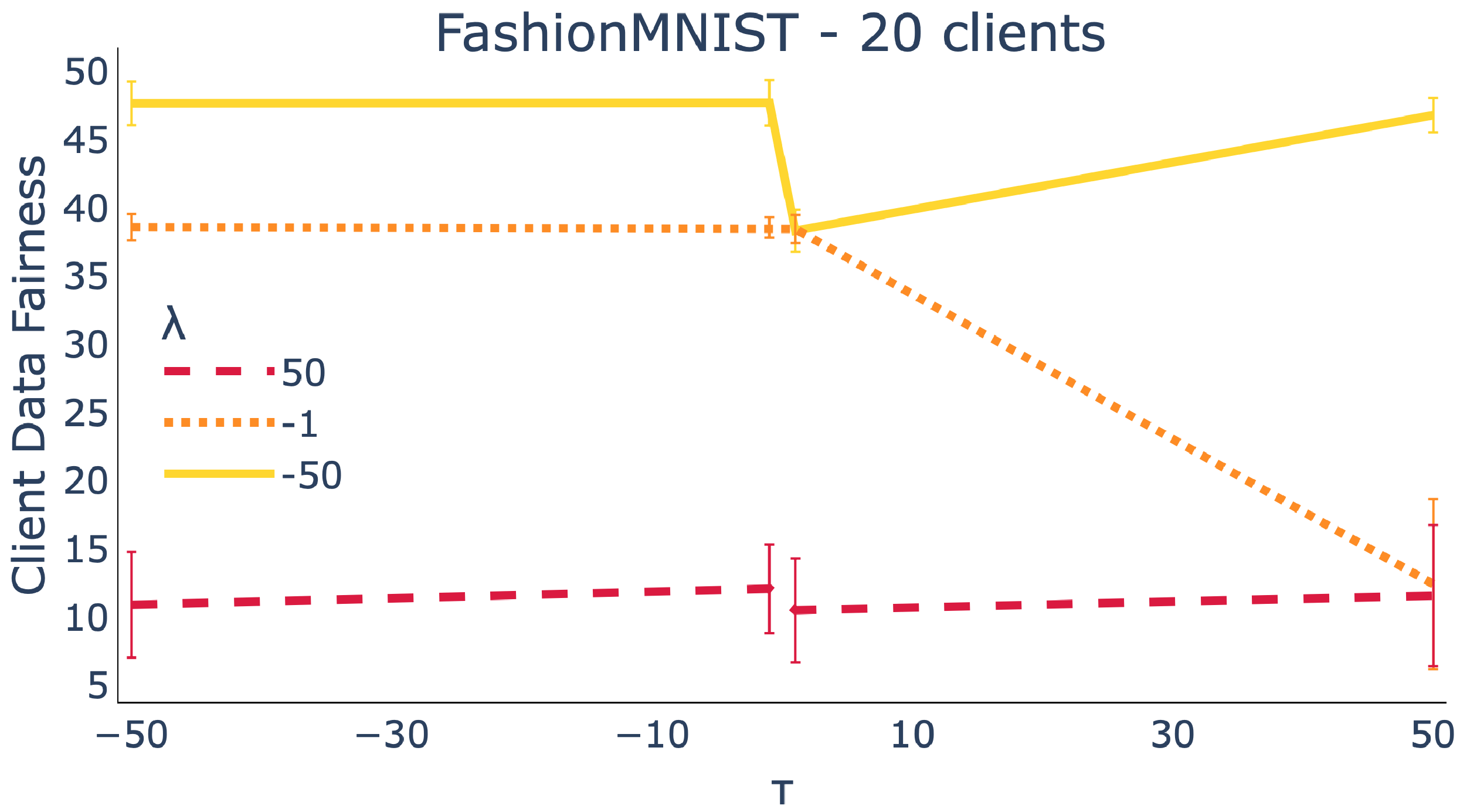}}
    \\
    \vspace{0.2in}
    \subfloat[{Test Acc +  Client fairness\label{cifar_20c_cf}}
    ]{\includegraphics[bb=0 0 1200 600, scale=0.098]{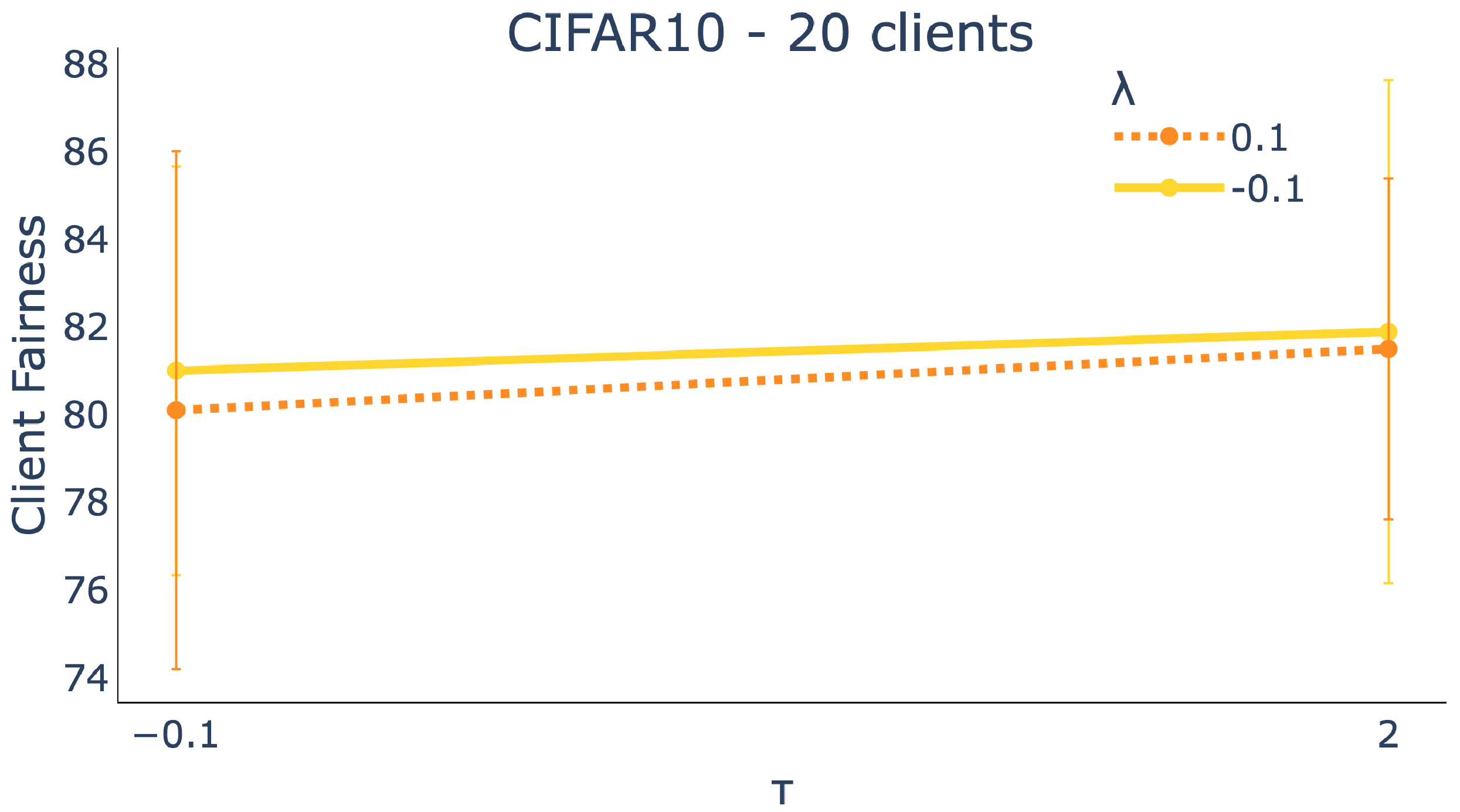}}
    \subfloat[{Client data fairness\label{cifar_20c_cdf}}
    ]{\includegraphics[bb=0 0 1200 600, scale=0.098]{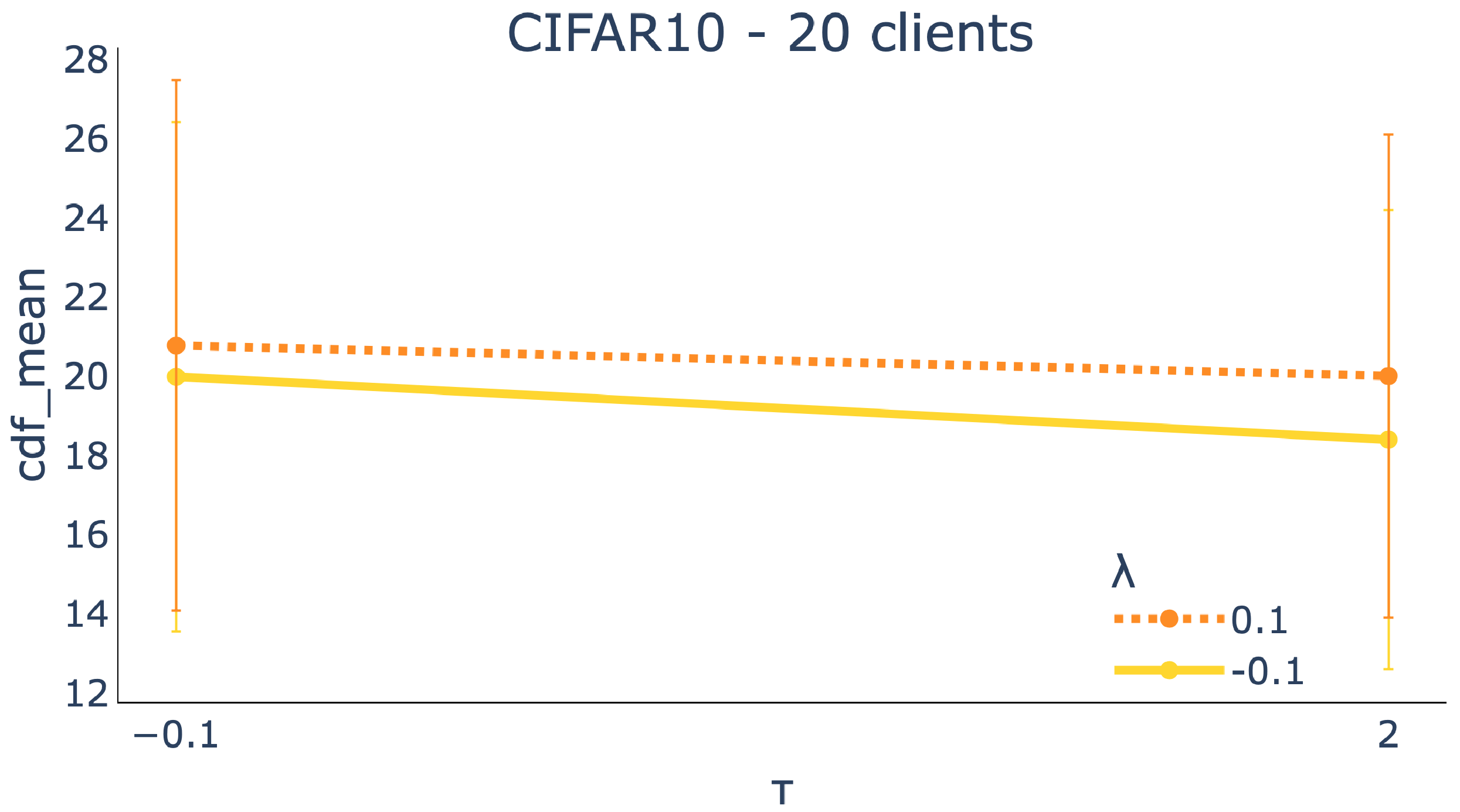}}
    \caption{MNIST results---clean data (a \& b 20 clients). 
    Higher values of both $\lambda$ and $\tau$ provide better results. 
    F-MNIST results---clean data (c \& d 20 clients). 
    Higher values of both $\lambda$ and $\tau$ provide better results.
    CIFAR10 results---clean data (d \& e20 clients). 
    Higher values of both $\lambda$ and $\tau$ provide better results. 
    } \label{mnist_20c}
    \vspace{-4mm}
\end{figure}
\raggedbottom

\begin{figure}[!t]
    \centering
    \subfloat[{Test Acc + Client fairness. 
    \label{mnist_gaussian_cf_tau}}]
    {\includegraphics[bb=0 0 1200 750, scale=0.098]{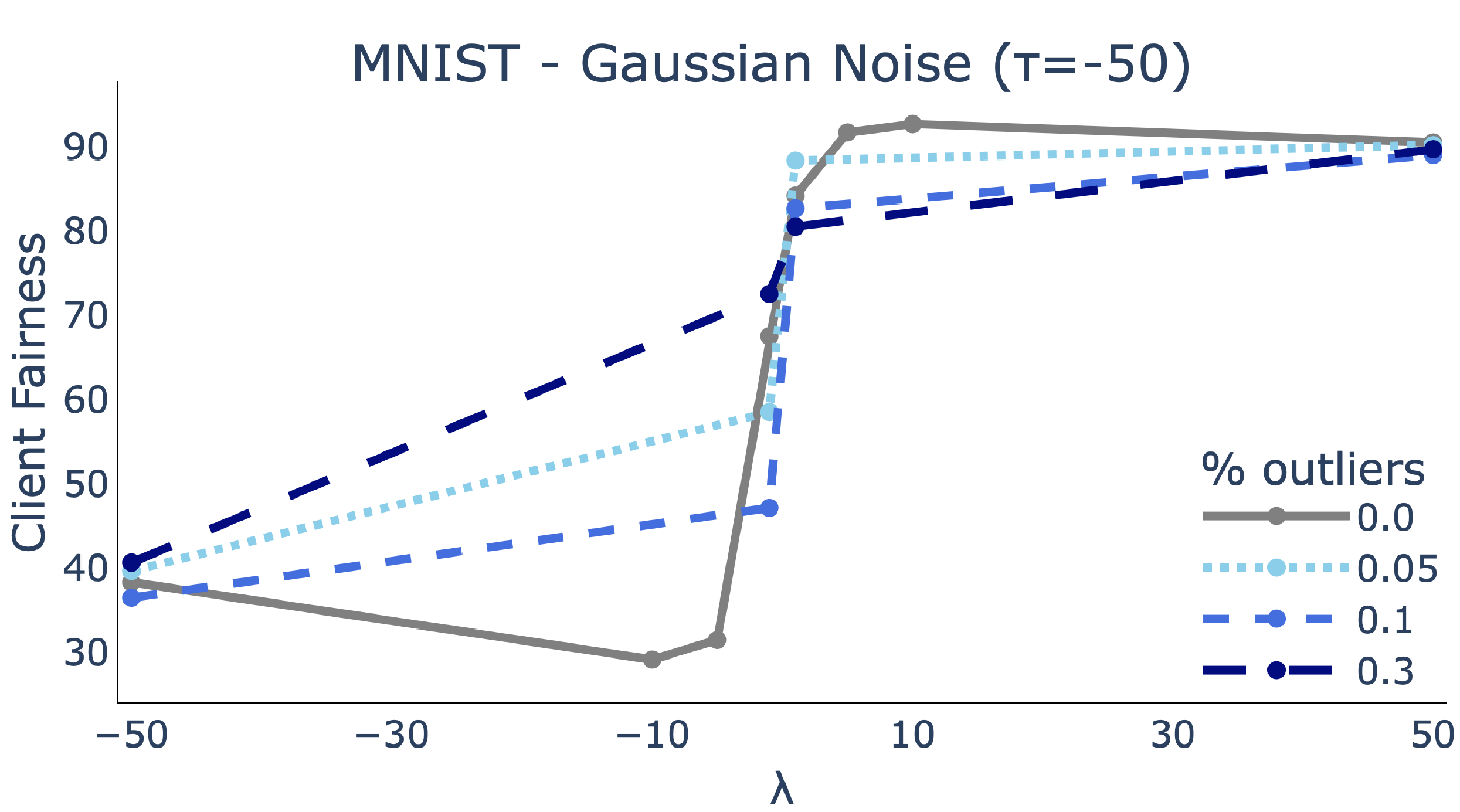}}
    \subfloat[{Client data fairness. 
    \label{mnist_gaussian_cdf_tau}}]
    {\includegraphics[bb=0 0 1200 750, scale=0.098]{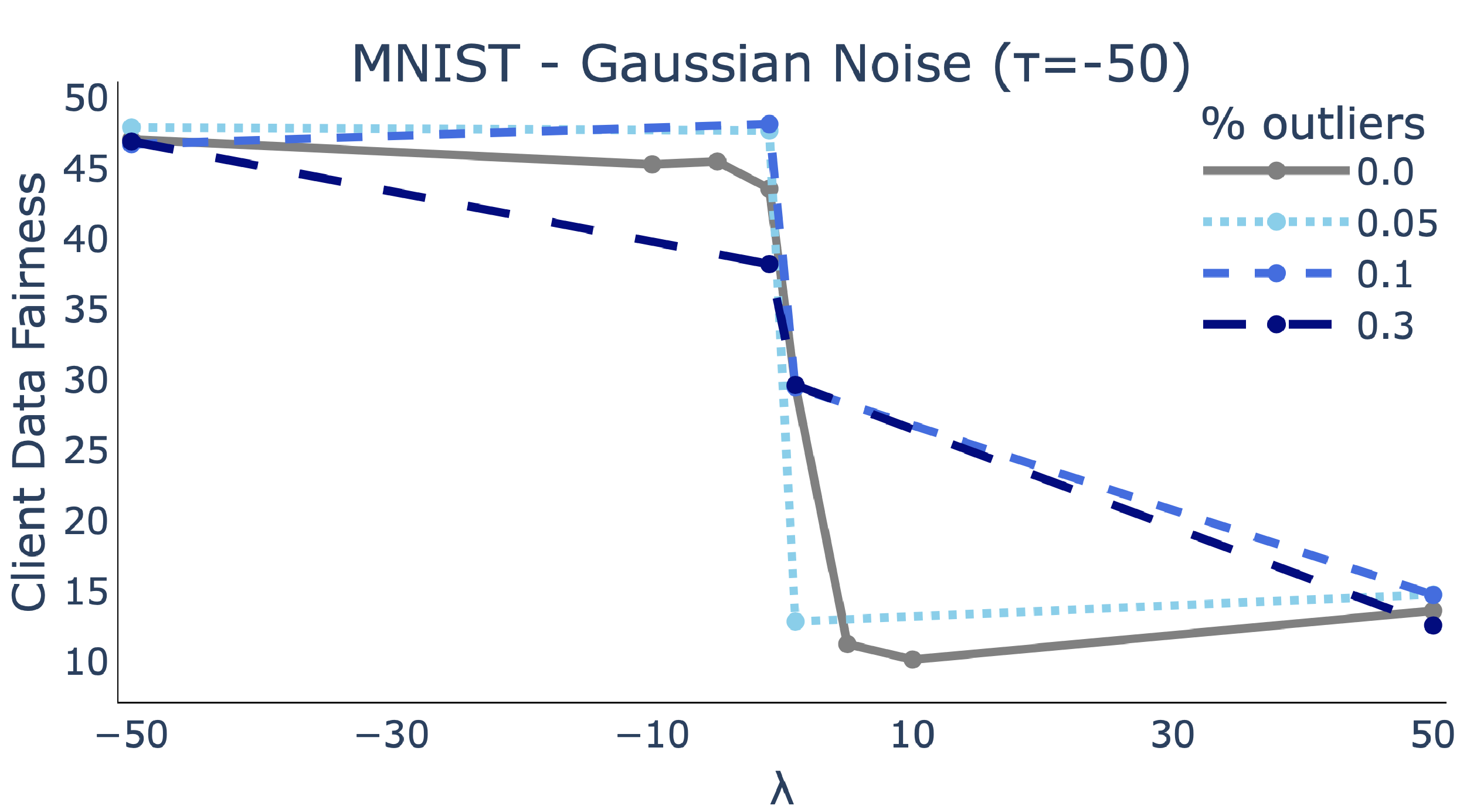}}

    \caption{MNIST results---Gaussian noise. A larger positive $\lambda=50$ and negative $\tau=-50$ show better two-level fairness and robustness results. 
    } \label{mnist_gaussian}
\end{figure}

\begin{figure}[!t]
    \centering
    \subfloat[{Test Acc + Client fairness. 
    \label{fmnist_gaussian_cf_tau}}]
    {\includegraphics[bb=0 0 1200 600, scale=0.098]{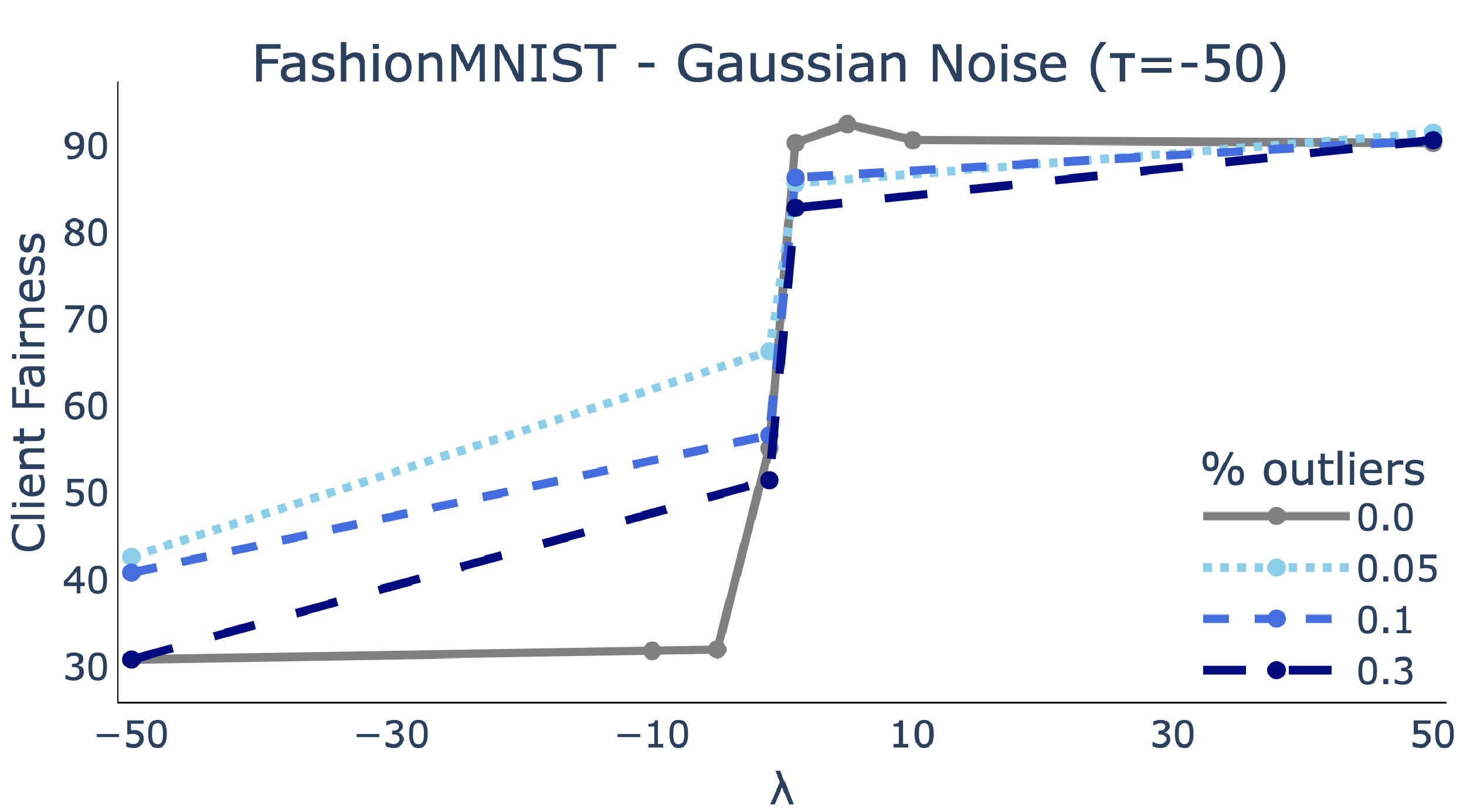}}
    \subfloat[{Client data fairness. 
    \label{fmnist_gaussian_cdf_tau}}]
    {\includegraphics[bb=0 0 1200 600, scale=0.098]{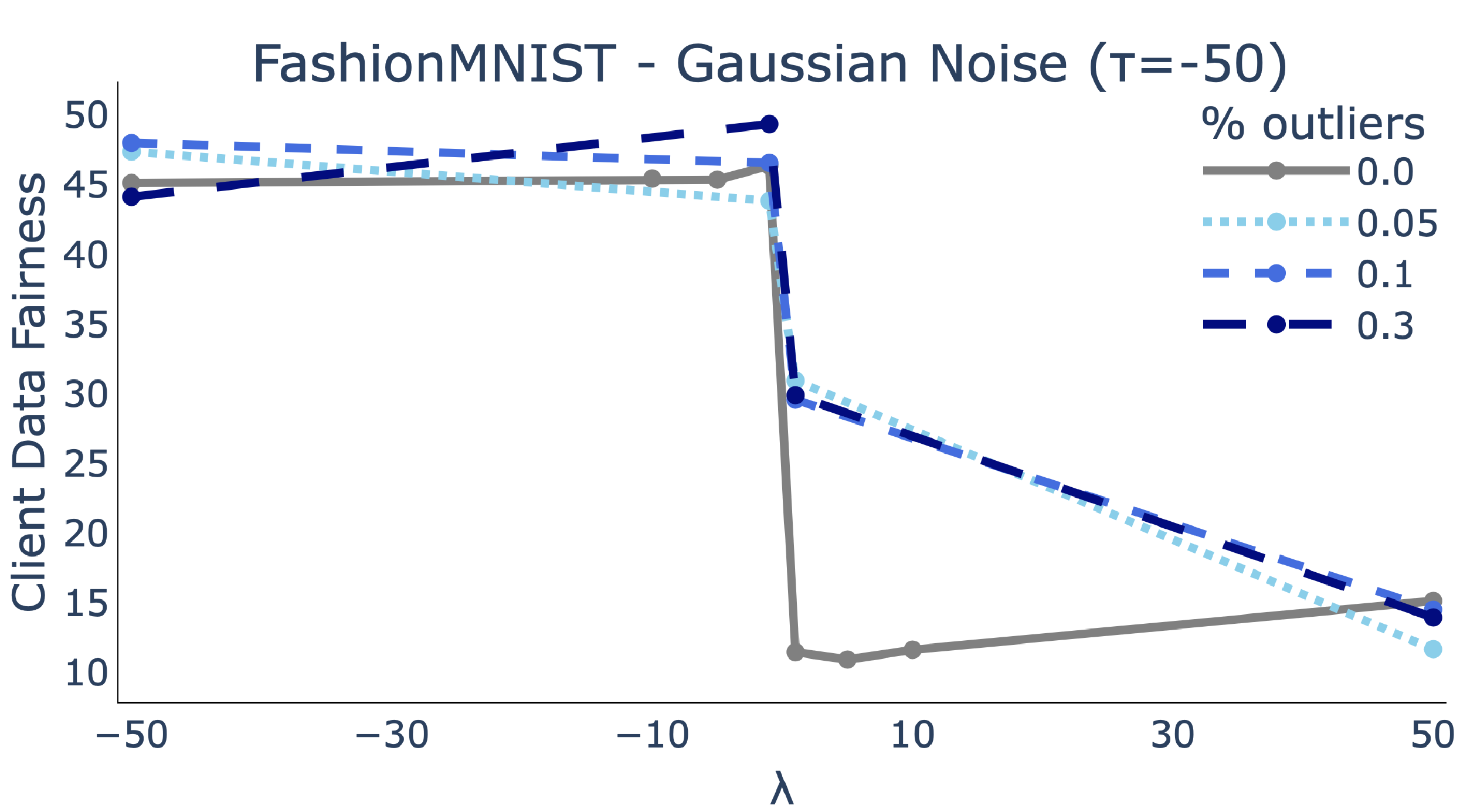}}

    \caption{F-MNIST results---Gaussian noise. Similarly, a larger positive $\lambda=50$ and negative $\tau=-50$ show better two-level fairness and robustness results. 
    } \label{fmnist_gaussian}

\end{figure}

\begin{figure}[!t]
    \centering
    \subfloat[{Test Acc + Client fairness. 
    \label{cifar_Gaussian_cf_tau}}
    ]{\includegraphics[bb=0 0 1200 600, scale=0.098]{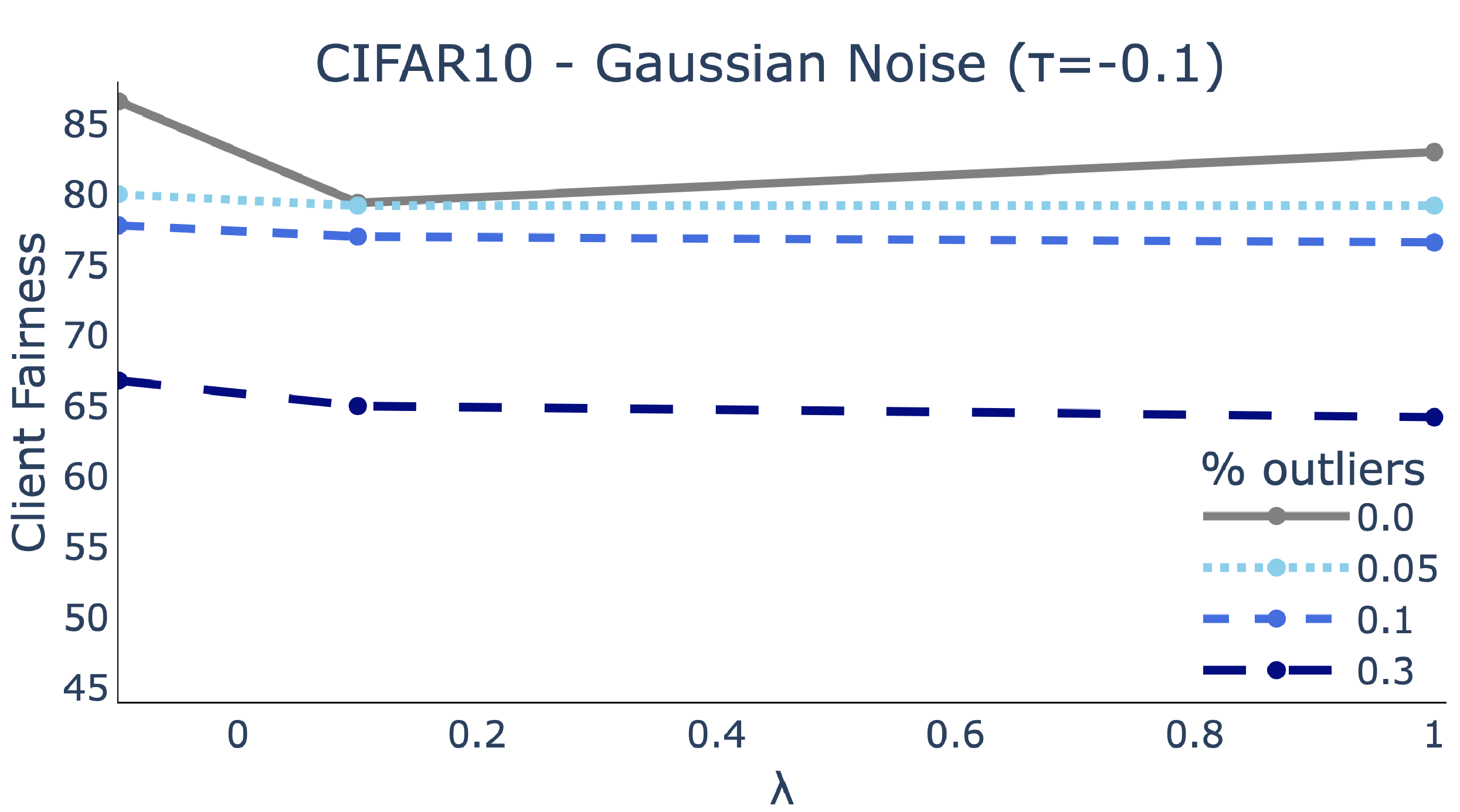}}
    \subfloat[{Client data fairness. 
    \label{cifar_Gaussian_cdf_tau}}
    ]{\includegraphics[bb=0 0 1200 600, scale=0.098]{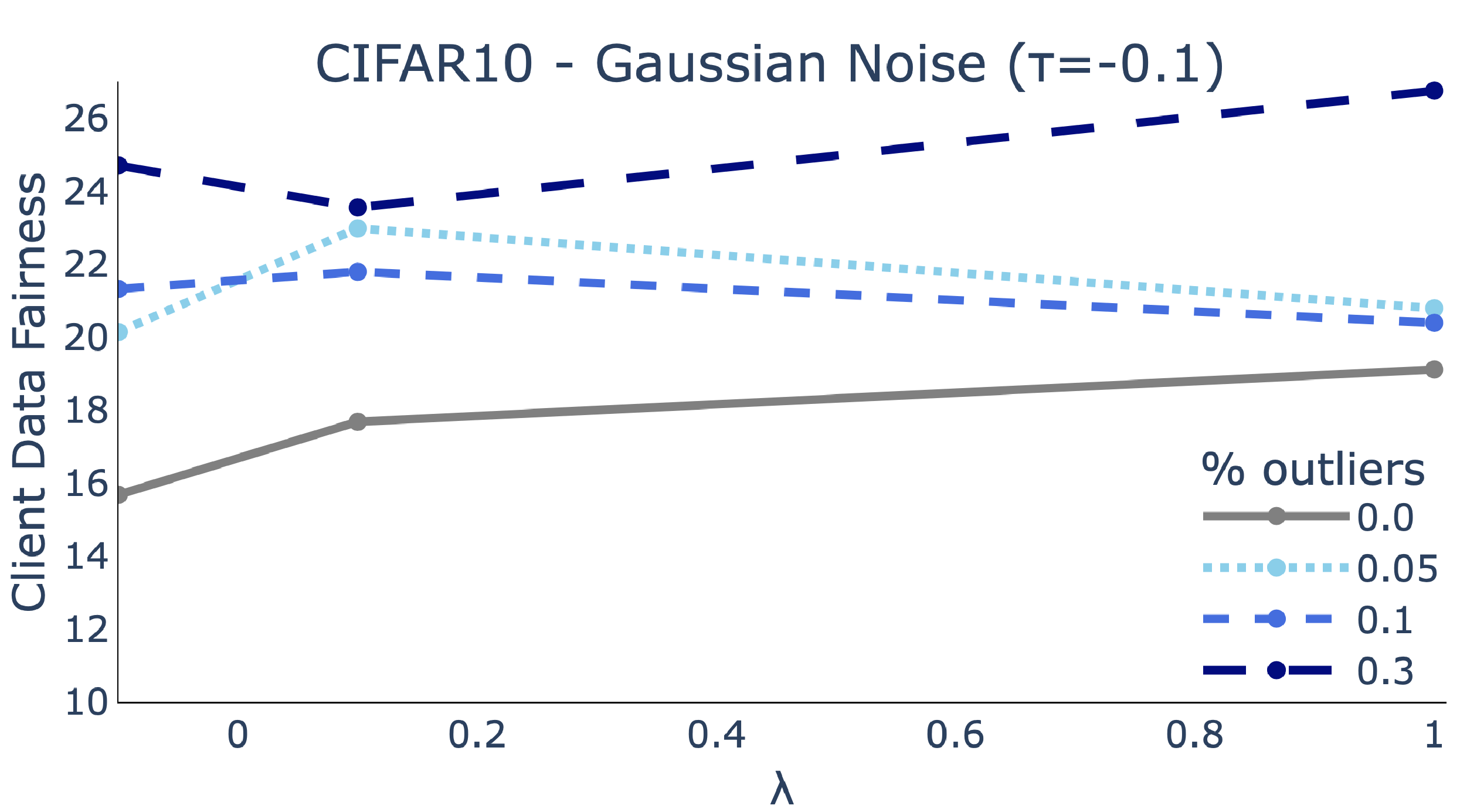}}

    \caption{CIFAR10 results---Gaussian noise. In most cases, better results are obtained with 
    a negative $\lambda$ (e.g., $\lambda=-0.1$). 
    } \label{cifar_Gaussian}
\end{figure}

\subsection{More results}

\begin{table}[!t]
\footnotesize
\centering
\caption{Comparison results -- Clean data} 
\begin{tabular}{llll}
    \hline
    \textbf{MNIST} 
     & Test Acc. & Client fairness & Client data fairness\\
    \hline
    FedAvg & $95.69\%$ & $\sigma=2.91$ & $\mu_\sigma=6.84, \sigma_\sigma=4.90$ \\
    \texttt{Ditto}  & ${\bf 99.25\%}$ & $\sigma=1.27$ & $\mu_\sigma=4.37, \sigma_\sigma=4.23$ \\ 
    FedTilt & $98.53\%$ & $\sigma=1.67$ & ${\bf \mu_\sigma=4.33, \sigma_\sigma=3.33}$ \\
    [0.1cm]\hline
    \textbf{F-MNIST} & Test Acc. & Client fairness & Client data fairness\\
    \hline
    FedAvg & $93.67\%$ & $\sigma=1.97$ & $\mu_\sigma=11.96, \sigma_\sigma=3.52$ \\
    \texttt{Ditto} & $93.77\%$ & $\sigma=5.30$ & $\mu_\sigma=10.89, \sigma_\sigma=7.18$ \\
    FedTilt & ${\bf 96.35}\%$ & ${\bf \sigma=1.85}$ & ${\bf \mu_\sigma=7.61, \sigma_\sigma=3.06}$ \\
    [0.1cm]\hline
    \textbf{CIFAR10} & Test Acc. & Client fairness & Client data fairness\\
    \hline
    FedAvg & $82.20\%$ & $\sigma=4.58$ & $\mu_\sigma=17.96, \sigma_\sigma=3.88$ \\
    \texttt{Ditto} & $74.15\%$ & $\sigma=9.35$ & $\mu_\sigma=18.62, \sigma_\sigma=3.9$ \\ 
    FedTilt & ${\bf 85.24\%}$ & ${\bf \sigma=3.87}$ & ${\bf \mu_\sigma=15.68, \sigma_\sigma=3.69}$ \\
     \hline
\end{tabular} 
\vspace{-4mm}
\end{table}

\begin{table}[!t]
\footnotesize
\centering
\caption{Comparison results -- persistent random corruptions}
\begin{tabular}{llll}
    \hline
    \textbf{MNIST} & Test Acc. & Client fairness & Client data fairness\\
    \hline
    FedAvg & $95.60\%$ & $\sigma=2.86$ & $\mu_\sigma=8.31, {\bf \sigma_\sigma=1.99}$ \\
    \texttt{Ditto}  & ${\bf 98.95\%}$ & $\sigma=1.72$ & $\mu_\sigma=3.86, \sigma_\sigma=5.35$ \\ 
    FedTilt & $98.46\%$ & ${\bf \sigma=1.50}$ & ${\bf \mu_\sigma=2.79}, \sigma_\sigma=3.36$ \\
    [0.1cm]\hline
    \textbf{F-MNIST} & Test Acc. & Client fairness & Client data fairness\\
    \hline
    FedAvg & $95.81\%$ & $\sigma=3.96$ & $\mu_\sigma=10.01, \sigma_\sigma=5.35$ \\
    \texttt{Ditto}  & $34.83\%$ & $\sigma=24.37$ & $\mu_\sigma=21.71, \sigma_\sigma=19.93$ \\ 
    FedTilt & ${\bf 95.96\%}$ & ${\bf \sigma=3.16}$ & ${\bf \mu_\sigma=8.96, \sigma_\sigma=4.55}$ \\
    [0.1cm]\hline
    \textbf{CIFAR10} & Test Acc. & Client fairness & Client data fairness\\
    \hline
    FedAvg & $81.70\%$ & $\sigma=2.27$ & $\mu_\sigma=17.81, \sigma_\sigma=2.94$ \\
    \texttt{Ditto}  & $52.73\%$ & $\sigma=4.71$ & $\mu_\sigma=19.02, \sigma_\sigma=3.20$ \\ 
    FedTilt & ${\bf 82.01\%}$ & ${\bf \sigma=2.17}$ & ${\bf \mu_\sigma=17.36, \sigma_\sigma=2.39}$ \\ 
    \hline
\end{tabular} 
\end{table}

\noindent {\bf Parameter setting.} 
We use a total of 100 clients participating in FL training and assume each client only holds 2 classes to simulate the non-independent identically distributed (non-IID) data across clients in practice. 
The server
randomly selects 10\% clients in each round. 
The used FL algorithms are 
multilayer-perceptron (MLP) for MNIST and F-MNIST, and convolutional neural network (CNN) for CIFAR10. 
By default, we use 10 local epochs and 50 global rounds for MNIST and F-MNIST and 500 rounds for CIFAR10, consider their different convergence speeds. We use SGD to optimize the training with a learning rate 0.01 and mini-batch size 10. We use the Euclidean distance as the default distance function and $\mu=0.01$. 
FedTilt is implemented in PyTorch.  
{Chameleon Cloud} (\url{{https://www.chameleoncloud.org}}) \cite{keahey2020lessons} has served as the platform providing the GPUs to train the FedTilt.
\noindent \textbf{Results on data with Gaussian noises:} Figures~\ref{mnist_gaussian}-\ref{cifar_Gaussian} show the results of  FedTilt on Gaussian noises 
with 
a fixed $\tau$ vs. $\lambda$. 
We see that tuning $\lambda$ can effectively mitigate the effect of outliers. 
Specifically, FedTilt achieves a good two-level fairness with a positive $\lambda$ and is robust to Gaussian noise with the negative $\tau$ on MNIST and FashionMNIST. These results are consistent with the properties of the two-level tilted loss we designed 
---shown in Table~\ref{tbl:tiltsummary}.   
On CIFAR10, the best two-level fairness and robustness tradeoff is obtained with a smaller negative $\lambda=-0.1$---similar to that on the clean data. The injected Gaussian noises possibly increases outliers and we further require a negative $\lambda$ to suppress the effect of the outliers. 
\begin{table}[!t]
\centering
\footnotesize
\caption{Comparison results -- persistent Gaussian noises} 
\begin{tabular}{llll}
    \hline
    \textbf{MNIST} & Test Acc. & Client fairness & Client data fairness\\
    \hline
    \texttt{FedAvg} & $95.41\%$ & $\sigma=3.66$ & $\mu_\sigma=7.36, \sigma_\sigma=5.84$ \\
    \texttt{Ditto}  & ${\bf 98.97\%}$ & $\sigma=1.80$ & ${\bf \mu_\sigma=3.08}, \sigma_\sigma=4.92$ \\ 
    FedTilt & $98.25\%$ & ${\bf \sigma=1.00}$ & $\mu_\sigma=4.39, {\bf \sigma_\sigma=1.67}$ \\
    [0.1cm]\hline
    \textbf{F-MNIST} & Test Acc. & Client fairness & Client data fairness\\
    \hline
    \texttt{FedAvg} & $91.70\%$ & $\sigma=3.51$ & $\mu_\sigma=8.07, \sigma_\sigma=6.14$ \\
    \texttt{Ditto}  & $92.91\%$ & $\sigma=6.82$ & $\mu_\sigma=11.61, \sigma_\sigma=7.50$ \\ 
    FedTilt & ${\bf 94.67\%}$ & ${\bf \sigma=3.37}$ & ${\bf \mu_\sigma=6.92, \sigma_\sigma=2.51}$ \\
    [0.1cm]\hline
    \textbf{CIFAR10} & Test Acc. & Client fairness & Client data fairness\\
    \hline
    \texttt{FedAvg} & $65.61\%$ & $\sigma=6.83$ & $\mu_\sigma=14.09, \sigma_\sigma=6.07$ \\
    \texttt{Ditto}  & $52.43\%$ & $\sigma=12.22$ & $\mu_\sigma=18.45, {\bf \sigma_\sigma=4.64}$ \\ 
    FedTilt & ${\bf 66.80\%}$ & ${\bf \sigma=4.80}$ & ${\bf \mu_\sigma=14.00}, \sigma_\sigma=4.84$ \\
     \hline
    \vspace{-8mm}
\end{tabular} \label{competitiveness_gaussian}
\end{table}

\noindent {\bf Comparing FedTilt with prior works on Gaussian noises:}
In FedTilt, $\tau=50, \lambda=-10$ yield the best results for MNIST and F-MNIST, while $\tau=-0.1,\lambda=-0.1$ remain as the best for CIFAR10. 
Table \ref{competitiveness_gaussian} shows the results. 
We have the below observations: 
1) FedTilt performs the best---most robust to persistent Gaussian noises (i.e., test accuracy is the largest), most fair client performance, and most fair client data performance in the three datasets.  
2) All the compared methods do exhibit robustness to Gaussian noise on MNIST and F-MNIST, but \texttt{Ditto} has a large test accuracy drop on CIFAR10. This indicates the persistent Gaussian noise added to the CIFAR10  data 
can be very harmful for \texttt{Ditto}. One possible reason could be  the injected noisy data prevents \texttt{Ditto} from convergence. Actually, we tested that \texttt{Ditto}'s loss was unstable even with 10,000 global rounds. In contrast, FedTilt converged within 1,000 rounds.   

\end{document}